\theoremstyle{plain}
\newtheorem{theorem}{Theorem}[section]
\newtheorem{proposition}[theorem]{Proposition}
\newtheorem{lemma}[theorem]{Lemma}
\newtheorem{corollary}[theorem]{Corollary}
\theoremstyle{definition}
\newtheorem{observation}[theorem]{Observation}
\theoremstyle{remark}
\DeclareMathOperator*{\argmax}{arg\,max}
\newcommand{\dimn}{\textsf{dim}}
\newcommand{\thr}{\textsf{thr}}
\newcommand{\Thr}{\textsf{Thr}}
\newcommand{\indic}[1]{\mathds{1}_{#1}}
\newcommand{\lpos}{\ensuremath{\textsf{blue}}}
\newcommand{\lneg}{\ensuremath{\textsf{red}}}
\newcommand{\ethlong}{Exponential Time Hypothesis}
\newcommand{\val}{\ensuremath{\textsf{val}}}
\newcommand{\enf}{\ensuremath{\textsf{enf}}}
\newcommand{\lef}{\ensuremath{\textsf{left}}}
\newcommand{\rig}{\ensuremath{\textsf{right}}}
\newcommand{\cla}{\ensuremath{\textsf{cla}}}
\newcommand{\Oh}{\mathcal{O}}
\newcommand{\dem}{\ensuremath{\textsf{dem}}}
\newcommand{\poly}{\ensuremath{\operatorname{poly}}}
\DeclarePairedDelimiter{\abs}{\lvert}{\rvert}
\newcommand{\pMTESlong}{\textsc{Minimum Tree Ensemble Size}}
\newcommand{\pMMTESlong}{\textsc{Minimax Tree Ensemble Size}}
\newcommand{\pMTES}{\textsc{MTES}}
\newcommand{\pMMTES}{\textsc{MmaxTES}}
\newcommand{\pDTSlong}{\textsc{Minimum Decision Tree Size}}
\newcommand{\pDTS}{\textsc{DTS}}
\newcommand{\pMTESOlong}{\textsc{Minimum (Error, Size) Tree Ensemble}}
\newcommand{\pMMTESOlong}{\textsc{Minimax (Error, Size) Tree Ensemble}}
\newcommand{\pMTESO}{\textsc{MESTE}}
\newcommand{\pDTSOlong}{\textsc{Minimum (Error, Size) Tree}}
\newcommand{\pDTSO}{\textsc{MEST}}
\newcommand{\pHS}{\textsc{Hitting Set}}
\newcommand{\pDS}{\textsc{Dominating Set}}
\newcommand{\abColor}{\textsc{Multicoloring}}
\newcommand{\prob}[6]{%
  \begin{quote}
  	\begin{samepage}
    \begin{labeling}{#6}%
      \setlength\topsep{-.6ex} \setlength\itemsep{-.8ex}
    \item[#1]
    \item[\emph{#2}]#3
    \item[\emph{#4}]#5
    \end{labeling}%
	\end{samepage}
  \end{quote}%
}
\newcommand{\probdef}[3]{\prob{#1}{Instance:}{#2}{Question:}{#3}{as}}
\newcommand{\appref}[1]{\hyperref[proof:#1]{\appsymb}}
\begin{document}

\title{On Computing Optimal Tree Ensembles\footnote{This work was initiated at the research retreat of the Algorithmics and Computational Complexity group of TU Berlin held in Darlingerode in September 2022.

An extended abstract of this work appeared in the Proceedings of the 40th International Conference on Machine Learning (ICML ’23) held in Honolulu, USA.
  This full version contains all missing proofs, some of which have been rewritten to make them more accessible, improved running times for computing minimum-size decision trees, an ETH-based lower bound for \textsc{Set Multicover}, and a new section on adaptions of our approaches to more classes, minimizing misclassifications, and enumeration.}}

\author{Christian Komusiewicz$^1$ 
 \and Pascal Kunz$^{2}$ \footnote{Supported by the DFG Research Training Group 2434 ``Facets of Complexity''.} 
 \and Frank Sommer$^3$ \footnote{Supported by the DFG, project EAGR (KO~3669/6-1)  and by the Alexander von Humboldt Foundation.} \and Manuel Sorge$^3$ \footnote{Supported by the Alexander von Humboldt Foundation.} 
 }
\date{%
     $^1$ Fakultät für Mathematik und Informatik, Friedrich-Schiller-Universität Jena, Germany\\%
     $^2$ Algorithm Engineering, HU Berlin, Germany\\
     $^3$ Institute of Logic and Computation, TU Wien, Austria
}

\maketitle

\begin{abstract}
  Random forests and, more generally, (decision\nobreakdash-)tree ensembles are widely used methods for classification and regression.
Recent algorithmic advances allow to compute decision trees that are optimal for various measures such as their size or depth.
We are not aware of such research for tree ensembles and aim to contribute to this area.
Mainly, we provide two novel algorithms and corresponding lower bounds.
First, we are able to carry over and substantially improve on tractability results for decision trees:
We obtain an algorithm that, given a training-data set and an size bound $S \in \mathbb{R}$, computes a tree ensemble of size at most $S$ that classifies the data correctly.
The algorithm runs in $(4\delta D S)^S \cdot \poly$-time, where $D$ the largest domain size, $\delta$ is the largest number of features in which two examples differ, $n$ the number of input examples, and $\poly$ a polynomial of the input size.
For decision trees, that is, ensembles of size~1, we obtain a running time of~$(\delta D s)^s \cdot \poly$, where~$s$ is the size of the tree.
To obtain these algorithms, we introduce the witness-tree technique, which seems promising for practical implementations.
Secondly, we show that dynamic programming, which has been applied successfully to computing decision trees, may also be viable for tree ensembles, providing an $\ell^n \cdot \poly$-time algorithm, where $\ell$ is the number of trees.
Finally, we compare the number of cuts necessary to classify training data sets for decision trees and tree ensembles, showing that ensembles may need exponentially fewer cuts for increasing number of trees.
\end{abstract}


\section{Introduction}

Random forests are a method for classification or regression in which we construct an ensemble of decision trees for (a random subsets of) the training data and, in the classification phase, aggregate their outcomes by majority voting.
The random-forests method has received a tremendous amount of attention for its simplicity and improved accuracy over plain decision trees~\cite{breiman_random_2001,verikas_mining_2011,kulkarni_random_2013,rokach_decision_2016}.
Commonly, 
fast heuristics without performance guarantees are used for computing random forests~\cite{kulkarni_random_2013,rokach_decision_2016}, in particular for computing the individual decision trees in the forest.
For plain decision trees, researchers lately made several advances in computing optimal decision trees, that is, decision trees that provably optimize criteria such as minimizing the tree size~\cite{BessiereHO09,NijssenF10,narodytska_learning_2018,bessiere_minimising_2009,VerwerZ19,JanotaM20,AglinNS20,avellaneda_efficient_2020,SCM23,carrizosa_mathematical_2021,demirovic_murtree_2022,CostaP23,SchidlerS24}.
With that increased amount of attention also came theoretical advances, showing the limits and opportunities for developing efficient exact algorithms for computing decision trees~\cite{OrdyniakS21,kobourov_influence_2022,EibenOrdyniakPaesaniSzeider23,GZ24}.
One impetus to computing optimal decision trees is that it is thought that minimizing the size reduces tendencies to overfitting~\cite{bessiere_minimising_2009,demirovic_murtree_2022}.
It is conceivable that such benefits transfer to globally optimizing the tree ensembles computed by random forests. 
However, apart from sporadic hardness results~\cite{tamon_boosting_2000}, we are not aware of exact algorithmic research for tree ensembles. 
In this work, we aim to initiate this direction; that is, we begin to build the theoretical footing for computing optimal tree ensembles and provide potential avenues for exact algorithms that are guaranteed to compute optimal results with acceptable worst-case running times.

\looseness=-1
We study the algorithmic properties of two canonical formulations of the training problem for tree ensembles:
We are given a set of training examples labeled with two classes and a number $\ell$ of trees and we want to compute a tree ensemble containing $\ell$ trees that classifies the examples consistently with the given class labels.\footnote{To keep the presentation focused, we consider mainly the case without training error. See \cref{sec:extensions} for extensions of some of our results to minimizing training error.}
We want to minimize either the sum of the tree sizes, resulting in the problem \pMTESlong~(\pMTES), or the largest size of a tree in the ensemble, resulting in the problem \pMMTESlong~(\pMMTES).\footnote{It is also natural to consider the depths of the trees instead of their sizes, but results are usually transferable between these two optimization goals and the size makes the presentation more accessible.}
Both contain as a special case the problem of computing a minimum-size decision tree, which is NP-hard~\cite{hyafil_constructing_1976,OrdyniakS21}.
However, the hardness constructions do not necessarily reflect practical data.
Thus, we are interested in precisely which properties make the problems hard or tractable.

Mainly, we provide two novel algorithms for \pMTES\ and \pMMTES\footnote{The algorithms work on the decision version of these problems, but they easily apply to the optimization versions as well.} and matching lower-bound results for their running times.
We call the first one \emph{witness-tree algorithm}. This algorithm demonstrates that prospects for tractable algorithms for optimizing decision trees can be non-trivially generalized to optimizing tree ensembles.
Namely, it was known that for small tree size $s$, moderate maximum domain size $D$ of any feature, and moderate number $\delta$ of features in which two examples differ\footnote{See \cite{OrdyniakS21} for measurements showing that this is a reasonably small parameter in several datasets.}, a minimum decision tree can be computed efficiently, that is, in $f(s, D, \delta) \cdot \poly$ time, where $\poly$ is a polynomial in the input size~\cite{OrdyniakS21}.
However, the function $f$ is at least $\delta^s \cdot (D^s2\delta)^s \cdot 2^{s^2}$ and the algorithm involves enumerative steps in which the worst-case running time equals the average case.
We show that, even for the more general \pMTES, we can improve the running time to $\Oh((4\delta DS)^S \cdot S \ell d n)$, where $S$ denotes the sum of the tree sizes, $\ell$ the number of trees in the ensemble, $n$ the number of training examples, and $d$ the number of dimensions or features of the input data (\cref{thm:witness-tree-algo}).
Moreover, we can avoid the enumerative approach, obtaining a search-tree algorithm that is both conceptually simpler and more easily amenable to heuristic improvements such as early search-termination rules and data reduction.
We achieve this by growing the trees iteratively and labeling their leaves with witness examples that need to be classified in these leaves.
This allows us to localize misclassifications and their rectification, shrinking the search space.
We believe that this technique may have practical applications beyond improving the worst-case running times as we do here.
The running time that we achieve is tight in the sense that we cannot decrease the exponent to $o(S)$ without violating reasonable complexity-theoretic assumptions~(\cref{thm:witness-tree-algo-tight}).

In the time since the preliminary version of this article appeared, \cite{EibenOrdyniakPaesaniSzeider23} showed that, for computing minimum-size decision trees, the dependency of the exponential part of the running time on the domain size~$D$ can be dropped, that is, there is an $f(s, \delta) \cdot \poly$-time algorithm.
It still uses enumerative steps which would be infeasible in practice and which the witness-tree approach avoids.
Furthermore, in the meantime our approach has been shown to be of more general relevance, as it applies not only to decision trees and tree ensembles, but also decision sets, decision lists, and their ensembles~\cite{OPRS24}.

\looseness=-1
As to our second main contribution, recently, exponential-time dynamic programming has been applied to compute optimal decision trees and the resulting trees have shown comparable performance to (heuristic) random forests on some datasets~\cite{demirovic_murtree_2022}.
With the second algorithm that we provide, we investigate the potential of dynamic programming for computing optimal tree ensembles.
We first show that minimizing decision trees can be done in $\Oh(2^n \cdot Ddn)$~time by a dynamic-programming approach that works on all possible splits of the examples (\cref{cor:single-exponential-decision-trees}).\footnote{Indeed, the algorithm employed by \cite{demirovic_murtree_2022} similarly computes a table over all possible splits in the worst case.}
We then extend this algorithm to tree ensembles with $\ell$ trees, achieving $\Oh((\ell + 1)^n \cdot Ddn)$ running time~(\cref{thm:exptime-algo}).
Unfortunately, we also show that these running times cannot be substantially improved:
A running time of $(2 - \epsilon)^n$ for any $\epsilon > 0$ or $f(\ell) \cdot 2^{o(\log \ell)\cdot n}\cdot\poly$, would violate reasonable complexity-theoretic assumptions~(\cref{thm:witness-tree-algo-tight,thm:superexponential-lower-bound}).

Finally, we compare the power of decision trees and tree ensembles in terms of their sizes.
Here, we show that a training data set $\mathcal{D}$ that can be classified by a tree ensemble with $\ell$ trees of size at most $s$, can also be classified by a decision tree of size $(s + 1)^{\ell}$~(\cref{thm:ensemble-to-tree-upper-bound}).
However, such an exponential increase is necessary in the worst case: We show that there exist training data sets $\mathcal{D}$ that cannot be classified by any decision tree of size roughly $(s/2)^{\ell/2}$~(\cref{thm:ensemble-to-tree-lower-bound}).

In the above we focus on binary classification without misclassifications. In \cref{sec:extensions} we explain how our results extend to more classes and the situation where misclassifications are allowed. We suggest directions for future research in \cref{sec:outlook}.

In summary, as the number of trees in a tree ensemble grow, the classification power increases exponentially over decision trees.
Nevertheless, we are able to carry over and improve on tractability results for decision trees if in particular the number of cuts in the optimal ensemble is relatively small.
The underlying witness-tree technique seems promising to try in practice.
Furthermore, we show that dynamic programming, which has been successful for decision trees, may also be viable for tree ensembles.
We provide matching running time lower bounds for all of our algorithms.
Apart from tuning our algorithms, in the future, deconstructing these lower bounds, that is, comparing the constructed instances with the real world~\cite{komusiewicz_deconstructing_2011}, may provide further guidelines towards which properties of the input data we may exploit for efficient algorithms and which we likely may not.

\section{Preliminaries}

\label{sec:prelims}

For $n \in \mathbb{N}$ we use $[n] \coloneqq \{1, 2, \ldots, n\}$.
For a vector $x \in \mathbb{R}^d$ we denote by $x[i]$ the $i$th entry of~$x$.

Let $\Sigma$ be a set of class symbols; unless stated otherwise, we use $\Sigma = \{\lpos, \lneg\}$.
A decision tree in $\mathbb{R}^d$ with classes $\Sigma$ is formally defined as follows.
Let $T$ be an ordered binary tree, that is, each inner node has a well-defined left and right child.
Let $\dimn \colon V(T) \to [d]$ and $\thr \colon V(T) \to \mathbb{R}$ be labelings of each internal node~$q \in V(T)$ by a \emph{dimension} $\dimn(q) \in [d]$ and a \emph{threshold} $\thr(q) \in \mathbb{R}$.
Furthermore, let $\cla(\ell) \colon V(T) \to \Sigma$ be a labeling of the leaves of~$T$ by class symbols.
Then the tuple $(T, \dimn, \thr, \cla)$ is a \emph{decision tree} in $\mathbb{R}^d$ with classes~$\Sigma$.
We often omit the labelings $\dimn, \thr, \cla$ and just refer to the tree $T$.
The \emph{size} of $T$ is the number of its internal nodes.
We call the internal nodes of $T$ and their associated labels \emph{cuts}.

A \emph{training data set} is a tuple $(E, \lambda)$ of a set of \emph{examples} $E \subseteq \mathbb{R}^d$ and their class labeling $\lambda \colon E \to \Sigma$.
Given a training data set, we fix for each dimension $i$ a minimum-size set~$\Thr(i)$ of thresholds that distinguishes between all values of the examples in the~$i$th dimension.
In other words, for each pair of elements~$e$ and~$e'$ with~$e[i] < e'[i]$, there is at least one value~$h\in \Thr(i)$ such that~$e[i] < h < e'[i]$.
Let~$t \in \mathbb{R}$ be some threshold.
We use~$E[f_i \leq h] = \{ x\in E \mid x[i] \leq h\}$ and~$E[f_i > h] = \{ x\in E \mid x[i] > h\}$ to denote the set of examples of~$E$ whose $i$th dimension is less or equal and strictly greater than the threshold~$h$, respectively.

Now let $T$ be a decision tree.
Each node $q \in V(T)$, including the leaves, defines a subset $E[T, q] \subseteq E$ as follows.
For the root~$r$ of~$T$, we define~$E[T,r] \coloneqq E$.
For each non-root node~$q\in V(T)$, let~$p$ denote the parent of~$q$.
We then define~$E[T,q] \coloneqq E[T,p] \cap E[f_{\dimn(p)} \le \thr(p)]$ if $q$ is the left child of~$p$ and~$E[T,q] \coloneqq E[T,p] \cap E[f_{\dimn(p)} > \thr(p)]$ if $q$ is the right child of~$p$.
If the tree $T$ is clear from the context, we simplify $E[T, q]$ to $E[q]$.
We say that~$T$ \emph{classifies}~$(E, \lambda)$ if for each leaf $u \in V(T)$ and each example $e \in E[u]$ we have~$\lambda(e) = \cla(u)$ (recall that $\cla$ is a labeling of the leaves of~$T$ by classes).
Note that the set family that contains $E[u]$ for all leaves $u$ of $T$ forms a partition of $E$.
Thus for each example $e \in E$ there is a unique leaf $u$ such that $e \in E[u]$.
We also say that $u$ is \emph{$e$'s leaf}.
We say that $\cla(u)$ is the class \emph{assigned} to $e$ by $T$ and we write $T[e]$ for $\cla(u)$.

A \emph{tree ensemble} is a set of decision trees.
A tree ensemble~$\mathcal{T}$ \emph{classifies} $(E, \lambda)$ if for each example $e \in E$ the majority vote of the trees in $\mathcal{T}$ agrees with the label $\lambda(e)$.
That is, for each example $e \in E$ we have $\lambda(e) = \argmax_{\sigma \in \Sigma} |\{ T \in \mathcal{T} \mid T[e] = \sigma \}| \eqqcolon \mathcal{T}[e]$.
To avoid ambiguity in the maximum, we fix an ordering of $\Sigma$ and break ties according to this ordering.
If $\Sigma = \{\lpos, \lneg\}$ we break ties in favor of $\lpos$.
The \emph{overall size} of a tree ensemble $\mathcal{T}$ is the sum of the sizes of the decision trees in $\mathcal{T}$.

We consider the following computational problem.

\probdef{\pMTESlong\ (\pMTES)}
{A training data set $(E, \lambda)$, a number~$\ell$ of trees, and a size bound~$S$.}
{Is there a tree ensemble of overall size at most $S$ that classifies $(E, \lambda)$ and contains exactly~$\ell$ trees?}

When restricted to $\ell = 1$, \pMTES\ is known as \pDTSlong\ (\pDTS)~\cite{OrdyniakS21,kobourov_influence_2022}.
In the problem variant \pMMTESlong\ (\pMMTES), instead of $S$, we are given an integer $s$ and ask whether there is a tree ensemble that classifies $(E, \lambda)$ and contains exactly~$\ell$ trees, each of which has size at most~$s$.

Our analysis is within the framework of parameterized complexity~\cite{gottlob_fixedparameter_2002,flum_parameterized_2006,Nie06,CyFoKoLoMaPiPiSa2015,downey_fundamentals_2013}.
Let $L \subseteq \Sigma^{*}$ be a computational problem specified over some alphabet~$\Sigma$ and let $p \colon \Sigma^{*} \to \mathbb{N}$ be a parameter, that is,~$p$ assigns to each instance of $L$ an integer parameter value (which we simply denote by $p$ if the instance is clear from the context).
We say that $L$ is \emph{fixed-parameter tractable}~(FPT) with respect to $p$ if $L$ can be decided in $f(p) \cdot \poly(n)$~time where $n$ is the input encoding length.
The corresponding hardness concept related to fixed-parameter tractability is W[$t$]-hardness,~$t \geq 1$; if the problem $L$ is W[$t$]-hard with respect to $p$, then $L$ is thought to not be fixed-parameter tractable; see \cite{flum_parameterized_2006,Nie06,CyFoKoLoMaPiPiSa2015,downey_fundamentals_2013} for details.
The Exponential Time Hypothesis (ETH)~\cite{impagliazzo_complexity_2001,impagliazzo_which_2001} states that \textsc{3SAT} on $n$-variable formulas cannot be solved in $2^{o(n)}$~time.
The Set Cover Conjecture~\cite{CyganDLMNOPSW16} states that \textsc{Set Cover} cannot be solved in $(2-\varepsilon)^n$~time for any~$\varepsilon>0$.






\section{Decision Trees Versus Tree Ensembles}
\label{sec:treesVensembles}

We will call a decision tree and a tree ensemble \emph{equivalent} if any training data set is classified by the one if and only if it is classified by the other.
We start by comparing the minimum size of a decision tree to that of a minimum-size decision tree ensemble, showing that there are examples where the latter is significantly smaller.
\cite{VidalS20} obtained similar results for the depth of decision trees and decision tree ensembles, showing that any training data set that can be classified by a decision tree ensemble with $\ell$ trees of depth~$d$ can also be classified by a tree with depth $\ell\cdot d$ and that this bound is tight.
Here, we analyze trees and tree ensembles in terms of their size, showing that a minimum ensemble can be exponentially smaller than any equivalent decision tree.

\begin{observation}
	If $s$ is the size of a decision tree and $L$ the number of leaves, then $s = L -1$.
\end{observation}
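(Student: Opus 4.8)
The plan is to exploit the fact that, by the definition of a decision tree given above, every internal node has a well-defined left \emph{and} right child, so $T$ is a full binary tree in which each internal node has exactly two children. For such trees the claimed identity follows from a short double-counting argument on the edges of $T$, which I would carry out as follows.

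First I would count the edges of $T$ in two ways. On the one hand, $T$ is a tree on $L + s$ nodes, namely the $L$ leaves together with the $s$ internal nodes, and so it has exactly $L + s - 1$ edges. On the other hand, every edge of $T$ joins some internal node to one of its children; since each internal node has exactly two children and each edge is incident to a unique internal node from above, the internal nodes account for exactly $2s$ such edges with neither omission nor double counting. Equating the two counts gives $L + s - 1 = 2s$, which rearranges to $s = L - 1$.

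Alternatively, one could argue by induction on $s$. For the base case $s = 0$ the tree is a single leaf, so $L = 1 = s + 1$. For $s \ge 1$ one selects an internal node both of whose children are leaves, which must exist, and replaces it by a leaf; this decreases the number of internal nodes by one and, since two leaves are removed and one is created, decreases the number of leaves by one as well, so the relation $s = L - 1$ is preserved by the induction hypothesis.

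There is no substantive obstacle here: the only point requiring care is to verify that the decision-tree definition forces every internal node to have exactly two children, which it does, and hence that the quantity $2s$ in the double-counting argument neither misses nor duplicates any edge. Given this, both arguments are routine.
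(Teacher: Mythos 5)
Your proof is correct. The paper states this as an observation without any proof at all---it is the folklore fact that in a full binary tree (which the paper's definition of a decision tree guarantees, since every inner node has both a left and a right child) the number of internal nodes is one less than the number of leaves---and your double-counting argument, as well as the inductive alternative, is exactly the standard justification the paper implicitly relies on.
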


\begin{theorem}
	\label{thm:ensemble-to-tree-upper-bound}
	Any training data set that can be classified by a decision tree ensemble consisting of $\ell$ trees, each of size at most $s$, can also be classified by a decision tree of size $(s+1)^\ell -1$.
\end{theorem}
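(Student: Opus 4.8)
The plan is to build a single decision tree $T$ that simulates all $\ell$ trees of the ensemble simultaneously by \emph{stacking} them, and then to label its leaves by the corresponding majority vote. Concretely, write the ensemble as $\mathcal{T} = \{T_1, \dots, T_\ell\}$ with class labelings $\cla_1, \dots, \cla_\ell$. I would start from $T_1$ and, at each leaf of $T_1$, attach a fresh copy of $T_2$, keeping all dimension and threshold labels intact; at each leaf of the resulting tree I attach a copy of $T_3$, and so on, until all $\ell$ trees have been incorporated. Each leaf of the final tree $T$ then corresponds to a tuple $(u_1, \dots, u_\ell)$ of leaves, one drawn from each $T_i$.

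The first thing to verify is that this construction routes every example correctly. Because each attached copy of $T_i$ uses exactly the dimensions and thresholds of $T_i$, the portion of a root-to-leaf path passing through that copy makes exactly the decisions that $T_i$ would make; the cuts inherited from the earlier trees do not interfere. Formally, by induction on $i$, an example $e$ reaches, after traversing the first $i$ layers, the node corresponding to $(u_1(e), \dots, u_i(e))$, where $u_j(e)$ denotes $e$'s leaf in $T_j$. Hence $e$'s leaf in $T$ corresponds to the full tuple $(u_1(e), \dots, u_\ell(e))$.

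Next I would label each leaf $(u_1, \dots, u_\ell)$ of $T$ with the majority vote of $\cla_1(u_1), \dots, \cla_\ell(u_\ell)$, breaking ties by the same fixed ordering of $\Sigma$ used for the ensemble. With this labeling, $T[e]$ equals the majority vote of $T_1[e], \dots, T_\ell[e]$, which is precisely $\mathcal{T}[e]$. Since the ensemble classifies $(E, \lambda)$, we have $\mathcal{T}[e] = \lambda(e)$ for every $e \in E$, so $T$ classifies $(E, \lambda)$ as well.

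Finally I would bound the size. Each $T_i$ has size at most $s$, hence at most $s+1$ leaves by the Observation. Stacking multiplies the leaf counts, so $T$ has at most $\prod_{i=1}^{\ell} (s+1) = (s+1)^\ell$ leaves, and therefore, again by the Observation, size at most $(s+1)^\ell - 1$. The only point requiring genuine care is the routing-independence claim in the second step, namely that appending the later trees at existing leaves does not disturb the decisions already made; but this is immediate once one observes that each attached subtree's dimension and threshold labels are copied verbatim and that each set $E[T,q]$ is obtained from its parent's set by intersecting with a single halfspace, so the constraints imposed by distinct layers simply accumulate.
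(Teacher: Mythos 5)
Your proof is correct and takes essentially the same approach as the paper: the paper also builds a single tree by appending $T_{i+1}$ to every leaf of a tree equivalent to $\{T_1,\dots,T_i\}$ (citing Vidal and Schiffer for this stacking construction) and labels leaves by majority vote. The only cosmetic differences are that you verify the construction's routing correctness directly rather than by citation, and you bound the size by multiplying leaf counts, $\prod_{i=1}^{\ell}(s+1) = (s+1)^\ell$, where the paper uses the equivalent recurrence $s_{i+1} \le s_i + (s_i+1)s$ and induction.
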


\begin{proof}
  \cite{VidalS20} showed that given a tree ensemble $\mathcal T = \{T_1,\ldots,T_\ell\}$, such that each $T_i$ has size at most $s$, there is an equivalent decision tree $T$ obtained by appending $T_{i+1}$ to every leaf of a tree that is equivalent to~$\mathcal T_i \coloneqq \{T_1,\ldots,T_i\}$ (and taking the class labels of the leaves to be the majority vote of the trees in $\mathcal{T_i}$).
  If $s_i$ is the size of the tree equivalent to $\mathcal T_i$ obtained in this manner, then this tree contains $s_i + 1$ leaves.
  Hence, $s_{i+1} \leq s_i + (s_i + 1) s$.
  By a simple induction, this implies that $s_i \le (s+1)^i -1$ and, therefore, the size of $T$ is $s_\ell \le (s+1)^\ell -1$.
\end{proof}

Next, we show that an exponential blow-up in $\ell$ is necessary.

\begin{theorem}
	\label{thm:ensemble-to-tree-lower-bound}
	For any odd $\ell,s \in \mathbb N$, there is a training data set that can be classified by a decision tree ensemble containing $\ell$ trees of size $s$ each, but cannot be classified by a single decision tree of size smaller than \[\frac{(s+1)^\ell}{\ell (\frac{s+1}{2})^{\frac{\ell-1}{2}}}-1.\]
\end{theorem}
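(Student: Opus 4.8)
The plan is to exhibit one explicit training data set on $d=\ell$ features and to reason about the partition of feature space induced by any classifying decision tree. Write $m \coloneqq s+1$, which is even since $s$ is odd. I would take the example set $E$ to be the full grid $\{1,\dots,m\}^\ell$ (with thresholds at the half-integers) and build the ensemble from $\ell$ ``axis trees'': the $i$th tree $T_i$ is a path that branches only on feature $i$, using its $s=m-1$ thresholds to isolate each of the $m$ values in its own leaf. I label the $m$ leaves of every $T_i$ by the \emph{alternating} coloring $g_i$ that assigns \lpos\ to the odd values and \lneg\ to the even values, so that each $T_i$ has exactly $m/2$ leaves of each class; this choice is crucial below. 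Finally I set $\lambda(x)$ to be the majority vote of $g_1(x_1),\dots,g_\ell(x_\ell)$, which is well defined since $\ell$ is odd. By construction the ensemble $\{T_1,\dots,T_\ell\}$ has $\ell$ trees of size exactly $s$ and classifies $(E,\lambda)$, so the only real work is the lower bound on an equivalent single tree.

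For the lower bound I would use that the leaves of any decision tree $T$ classifying $(E,\lambda)$ partition the grid into axis-parallel boxes (products of intervals of values), each of which is monochromatic under $\lambda$; since the size of $T$ is its number of leaves minus one, it suffices to lower-bound the number of monochromatic boxes. Counting \emph{all} boxes against the full grid is too weak, since a monochromatic box can still contain $m^{(\ell-1)/2}$ ordinary cells, so instead I would only count a carefully chosen ``boundary'' subset. Call a cell \emph{tight} if exactly $(\ell+1)/2$ of its coordinates vote \lpos; such a cell is labeled \lpos, but flipping any one \lpos-voting coordinate to a \lneg-value would flip its label. The heart of the argument is the claim that every monochromatic \lpos-box $B=\prod_i I_i$ contains at most $(m/2)^{(\ell-1)/2}$ tight cells. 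Indeed, for a tight $x\in B$ each \lpos-voting coordinate $i$ must have $I_i$ entirely \lpos-colored: otherwise some $v\in I_i$ is \lneg, and the cell obtained from $x$ by setting coordinate $i$ to $v$ lies in $B$, has only $(\ell-1)/2$ \lpos-votes, and is thus \lneg, contradicting monochromaticity. Conversely every all-\lpos\ interval $I_i$ forces $x_i$ to vote \lpos. Hence the \lpos-voting coordinates of \emph{every} tight cell in $B$ equal the fixed set of coordinates for which $I_i$ is all-\lpos, and there are exactly $(\ell+1)/2$ of them; here the alternating coloring pays off, as an all-\lpos\ interval is a single value, pinning $x_i$ to one choice on those coordinates. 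On the remaining $(\ell-1)/2$ coordinates $x_i$ ranges over the \lneg-values of $I_i$, at most $m/2$ each, giving the bound $(m/2)^{(\ell-1)/2}$.

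It then remains to count the tight cells and divide. Choosing the $(\ell+1)/2$ \lpos-voting coordinates in $\binom{\ell}{(\ell+1)/2}$ ways and independently picking one of the $m/2$ \lpos-values on each and one of the $m/2$ \lneg-values on each of the others yields exactly $\binom{\ell}{(\ell+1)/2}(m/2)^{\ell}$ tight cells. Since the \lpos-leaves of $T$ partition the \lpos-cells (hence cover all tight cells) into monochromatic \lpos-boxes, their number is at least $\binom{\ell}{(\ell+1)/2}(m/2)^{\ell}/(m/2)^{(\ell-1)/2}=\binom{\ell}{(\ell+1)/2}(m/2)^{(\ell+1)/2}$. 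I would finish with the central-binomial estimate $\binom{\ell}{(\ell+1)/2}\ge 2^{\ell}/\ell$ (valid for odd $\ell\ge 3$; the case $\ell=1$ is immediate), which rewrites this as $\tfrac{1}{\ell}2^{(\ell-1)/2}m^{(\ell+1)/2}=m^{\ell}/\bigl(\ell\,(m/2)^{(\ell-1)/2}\bigr)$. Thus $T$ has at least this many leaves and size at least $(s+1)^{\ell}/\bigl(\ell\,((s+1)/2)^{(\ell-1)/2}\bigr)-1$, as claimed.

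I expect the main obstacle to be precisely the per-box bound, since the naive ``(number of cells)/(largest monochromatic box)'' estimate genuinely loses the extra $2^{(\ell-1)/2}/\ell$ factor. The two ideas that rescue it are (i) restricting attention to the tight boundary cells, so a monochromatic box cannot gain volume by fixing many coordinates inside one color class, and (ii) the alternating coloring, which collapses every all-\lpos\ run to a single value and thereby pins the $(\ell+1)/2$ agreeing coordinates. Once the per-box bound is in hand, matching the stated expression is only the binomial estimate.
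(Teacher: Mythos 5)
Your proposal is correct and is essentially the paper's own argument: the same parity-labelled instance, the same axis-aligned path trees with alternating leaf labels, the same key lemma that any \lpos-leaf of a classifying tree contains at most $\left(\frac{s+1}{2}\right)^{(\ell-1)/2}$ of the balanced (``tight'') cells because the $(\ell+1)/2$ majority-voting coordinates are pinned to single values, and the same estimate $\binom{\ell}{(\ell+1)/2} \ge 2^{\ell}/\ell$. The only cosmetic deviation is that you train on the full grid and argue directly with monochromatic boxes, whereas the paper takes just the tight cells as the training set and derives the coordinate-pinning from cuts that must appear on the root-to-leaf path; both yield the identical final count.
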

\begin{proof}
	\newcommand{\ev}{\mathrm{even}}
	\newcommand{\od}{\mathrm{odd}}
	\newcommand{\EV}{\mathrm{ev}}
	\newcommand{\OD}{\mathrm{od}}
	For any $x \in \mathbb N^\ell$, let $\ev(x) \coloneqq \{ i \in [\ell] \mid x[i] \text{ is even}\}$ and $\od(x) \coloneqq [\ell] \setminus \ev(x)$.
	Furthermore, let~$\EV(x)$ and~$\OD(x)$ denote the sizes of~$\ev(x)$ and~$\od(x)$, respectively. 
	Let $E \coloneqq \{x \in [s+1]^\ell \mid \abs{\EV(x) - \OD(x)} = 1\}$ and $\lambda \colon E \to \{\lpos,\lneg\}$ with $\lambda(x) = \lpos$ if and only if~$\EV(x) > \OD(x)$.
	We show that $(E,\lambda)$ fulfills the claim.

	First, we will show that there is a decision tree ensemble $\mathcal T$ with $\ell$~trees of size at most~$s$ that classifies $(E,\lambda)$.
	For each dimension $i \in [\ell]$, $\mathcal T$ contains a tree $T_i$ with $s$ inner nodes that checks for a given example~$x$ whether $x[i]$ is even or odd and reaches a $\lpos$ or $\lneg$ leaf, respectively.
	Such a decision tree is pictured in \cref{fig:odd-even-tree}.
	Then, a majority of the trees classifies a given example as $\lpos$ if and only if it contains more even than odd entries, which is equivalent to its label being $\lpos$.
	
	\begin{figure}
		\centering
		\begin{tikzpicture}[xscale=1.25,yscale=0.5]
		\node[circle,draw] (v1) at (1,10) {$ \scriptstyle x[i] \leq 1$};
		\node[circle,fill,red] (l1) at (0,9) {};
		\draw[->] (v1) -- (l1);
		\node[circle,draw] (v2) at (2,9) {$ \scriptstyle x[i] \leq 2$};
		\draw[->] (v1) -- (v2);
		\node[circle,fill,blue] (l2) at (1,8) {};
		\draw[->] (v2) -- (l2);
		\node[circle,draw] (vs-1) at (4,7) {$ \scriptstyle x[i] \leq s$};
		\draw[->] (v2) --  node[pos=0.5,fill=white] {$\ldots$}  (vs-1);
		\node[circle,fill,red] (ls-1) at (3,6) {};
		\node[circle,fill,blue] (ls) at (5,6) {};
		\draw[->] (vs-1) -- (ls-1);
		\draw[->] (vs-1) -- (ls);
		\end{tikzpicture}
		\caption{The tree~$T_i$ of~$\mathcal{T}$.}
		\label{fig:odd-even-tree}
	\end{figure}

	We will show that any decision tree $T$ that classifies~$E$ has at least $$\frac{(s+1)^\ell}{\ell (\frac{s+1}{2})^{\frac{\ell-1}{2}}}$$ leaves.
	Observe that $\abs{E} \ge \binom{\ell}{\frac{\ell+1}{2}} (\frac{s+1}{2})^\ell \ge \frac{2^\ell}{\ell} (\frac{s+1}{2})^\ell = \frac{(s+1)^\ell}{\ell} $, because even if we fix some subset $I$ of~$[\ell]$ such that $x[i]$ is to be even if and only if $i\in I$, then there are $\frac{s+1}{2}$ possible values for each component of $x$.
	Therefore, it is sufficient to prove that~$\abs{E[T,q]} \le (\frac{s+1}{2})^{\frac{\ell-1}{2}}$ for every leaf $q$ of~$T$.
	
	Let $q$ be a leaf of $T$.
	Without loss of generality, assume that~$\cla (q) = \lpos$, that is~$\lambda(x) = \lpos$ for all $x\in E[T,q]$.
	We will show that $x[i] = y[i]$ for all $x,y \in E[T,q]$ and~$i \in \ev(x)$.
	Suppose that $x[i] \neq y[i]$, $i \in \ev(x)$, and $x,y \in E[T,q]$.
	Without loss of generality, $x[i] < y[i]$.
	Define~$z\in [s+1]^\ell$ by $z[i] \coloneqq x[i] + 1$ and $z_j \coloneqq x_j$ for all $j \in [\ell] \setminus \{i\}$.
	Then, $\ev(z) = \ev(x) \setminus \{i\}$, implying that $\lambda(z) = \lneg$.
	Hence, $z \notin E[T,q]$.
	This implies that~$T$ contains a node $v$ with $\dim(v) = i$ and $\thr(v) = x[i]$ on the path from the root to~$q$.
	However, this means that $y \notin E[T,q]$.
	Hence, the examples in $E[T,q]$ can differ only in the components in~$\od(x)$.
	Moreover, $\OD(x) = \frac{\ell-1}{2}$ and $[s+1]$ contains $\frac{s+1}{2}$~odd values.
	
	Since the size of a binary tree is the number of leaves minus one, the claim follows.
\end{proof}

This result still leaves a considerable gap between the upper and lower bound. 
We conjecture that the lower bound can be improved: for example, by showing that in the example presented in the proof of \Cref{thm:ensemble-to-tree-lower-bound} the number of examples in each leaf is, on average, smaller than what we showed.

\section{The Witness-Tree Algorithm
}\label{sec:witness-tree-algo}

In this section, we prove the first of our two main theorems.
Recall that~$S$ is the desired overall size of the tree ensemble, $s$ is the maximum size of a tree in the ensemble, $\ell$ is the number of trees in the ensemble, $D$ is the largest domain of a feature, $\delta$ is the largest number of features in which two examples of different classes differ, $d$ the number of features, and $n$ is the number of training examples.
\begin{theorem}\label{thm:witness-tree-algo}
	\pMTESlong\ can be solved in $\Oh((4 \delta D S)^{S} \cdot S \ell d n)$ time and \pMMTESlong\ in $\Oh(2^{\ell} \cdot (\delta D \ell s)^{s\ell} \cdot s \ell^2 d n)$ time.
\end{theorem}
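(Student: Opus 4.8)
The plan is to design a bounded-depth branching (search-tree) algorithm that grows the ensemble one cut at a time and uses witnesses to keep the branching factor down to $\Oh(\delta D)$ per leaf. Throughout, I maintain a partial ensemble $\mathcal{T} = \{T_1,\dots,T_\ell\}$ in which every leaf $u$ carries a \emph{witness} example $w(u) \in E[u]$ with $\cla(u) = \lambda(w(u))$; that is, every leaf is labelled by a class that is actually present among the examples reaching it. A short preprocessing observation justifies restricting attention to such \emph{witness-consistent} ensembles: if some leaf $u$ of a classifying ensemble is labelled $\sigma$ but contains only examples of class $\bar\sigma$, then relabelling $u$ to $\bar\sigma$ only increases the number of trees voting correctly on the examples in $E[u]$ and touches no other example, so it preserves classification without changing the size. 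Hence a feasible ensemble of overall size at most $S$ exists if and only if a witness-consistent one does. At each search node I first test, in polynomial time, whether the current $\mathcal{T}$ already classifies $(E,\lambda)$ by computing each example's leaf in each tree and the resulting majority votes; if so, I accept.

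The refinement step is where witnesses pay off. At a node where $\mathcal{T}$ fails to classify $(E,\lambda)$ some tree misclassifies some example, so some leaf $u$ is impure: besides its witness $w(u)$ it contains an example $e \in E[u]$ with $\lambda(e) \neq \cla(u) = \lambda(w(u))$. Thus $e$ and $w(u)$ are of different classes and therefore differ in at most $\delta$ features, so any cut separating them uses one of $\le \delta$ dimensions and, within it, one of at most $D$ thresholds. The algorithm branches over the impure leaf to refine (at most $S + \ell \le 2S$ leaves), the separating dimension ($\le \delta$), and the threshold ($\le D$); in each branch it splits $u$, placing $w(u)$ and $e$ into the two new leaves, which receive classes $\lambda(w(u))$ and $\lambda(e)$ and witnesses $w(u)$ and $e$, respectively, preserving witness-consistency. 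Each branch adds exactly one cut, so pruning any branch that exceeds the budget $S$ caps the depth at $S$ and the branching factor at $\Oh(\delta D S)$, giving at most $(4\delta D S)^S$ search nodes, each handled in $\Oh(S\ell d n)$ time by recomputing placements and votes. Initialization starts from $\ell$ single-leaf trees; since the single-leaf trees are interchangeable (only their per-class counts affect any majority), guessing these counts costs only a polynomial factor, and non-trivial trees acquire their labels through the witness splits, which is why no $2^\ell$ term appears here.

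Soundness is immediate: any accepted ensemble classifies $(E,\lambda)$ and, by the depth bound, has size at most $S$. The crux, and the step I expect to be the main obstacle, is completeness: I must show that whenever a feasible ensemble of size at most $S$ exists, some root-to-leaf path of the search tree reaches a classifying ensemble within budget. The intended argument fixes a witness-consistent feasible solution $\mathcal{T}^\ast$ (legitimate by the relabelling observation) and orders its cuts into a sequence the branching can follow — each cut, at the moment it is introduced, must separate a currently-misclassified pair consisting of an impure leaf's witness and a conflicting example, so that it is genuinely among the $\le \delta D$ candidates offered by the refinement step. Establishing that such an order exists (top-down within each tree and suitably interleaved across trees, so that the partial ensemble always exposes an impure leaf whose rectification coincides with the next cut of $\mathcal{T}^\ast$) and that the bounded candidate set always contains the needed cut is the technically delicate part; note that we may accept at any classifying prefix of $\mathcal{T}^\ast$, so we never need all of its cuts, which only helps.

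For \pMMTESlong\ the same branching applies with the per-tree size cap $s$ in place of a global budget: the total number of cuts is at most $s\ell$, the number of leaves at most $\ell(s+1)$, so the branching factor becomes $\Oh(\delta D \ell s)$ and the depth $s\ell$, yielding $(\delta D \ell s)^{s\ell}$ search nodes processed in $\Oh(s\ell^2 d n)$ time each. The additional factor $2^\ell$ stems from an up-front guess of the default (root) class of each of the $\ell$ trees, which is convenient for bootstrapping the per-tree witness labelling under the minimax budget; combining this enumeration with the branching gives the claimed running time.
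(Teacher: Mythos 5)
Your proposal reproduces the algorithmic shell of the witness-tree approach but not its proof. The step you yourself defer as ``the technically delicate part'' --- completeness, i.e.\ that whenever a feasible ensemble of size at most $S$ exists, some branch of your search reaches a classifying ensemble within budget --- is precisely the content that makes the theorem a theorem, and it is where the paper invests essentially all of its effort: the reordering lemma (\cref{lem:reorder-refinements}) and the induction on ``good'' witness ensembles (Claims~(1) and~(2) in the paper's proof). What remains in your write-up is an algorithm description plus a soundness remark, which does not establish the statement.

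Moreover, the gap is not one that the sketched argument could fill, because your design choices make completeness false as stated. You force witness-consistency ($\cla(u)=\lambda(w(u))$) on every leaf of every intermediate ensemble and grow trees only by leaf splits, so in every ensemble your search can ever reach, each tree classifies its own initial witness (and every later witness) correctly. But in the ensemble setting individual trees must in general misclassify examples: there are instances --- the construction in the proof of \cref{thm:superexponential-lower-bound} is one --- in which \emph{every} feasible ensemble of size at most $S$ misclassifies \emph{every} example in at least one tree. Hence, if the arbitrary initial witnesses are chosen badly (e.g.\ all equal, which your description permits), your search space contains no classifying ensemble at all and the algorithm wrongly rejects. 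Your relabelling observation does not repair this: it normalizes the \emph{target} solution so that each leaf label matches \emph{some} example in that leaf, but it cannot force the solution leaf containing your chosen witness $w$ to be labelled $\lambda(w)$ --- that is exactly what cannot be assumed, and it is why the paper's initialization tries \emph{both} labels per tree, incurring the $2^{\ell}\le 2^S$ factor that is absorbed into $(4\delta DS)^S$. Your claim that this factor can be replaced by guessing per-class counts of single-leaf trees conflates initialization with the paper's separate treatment of trivial trees in the case $\ell>S$; and for \pMMTES{} the $2^{\ell}$ ``default root class'' guess you do allow contradicts your own witness-consistency invariant, on which your bound of $\delta$ candidate dimensions (valid only for pairs of \emph{different} classes) relies. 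Two further deviations compound the problem: restricting refinements to leaf splits discards the ability to insert a cut anywhere on a root-leaf path (root insertion, edge subdivision), which is exactly the flexibility the paper's reordering argument exploits; and fixing a single conflicting example $e$ per impure leaf (needed for your $\Oh(\delta D)$ per-leaf branching count) is an additional unjustified commitment, whereas the paper pins down a canonical, globally majority-dirty example and branches over the trees that misclassify it. To make your route work you would have to re-introduce essentially all of this machinery, i.e.\ the paper's proof.
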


The basic idea is to start with a tree ensemble that contains only trivial trees and to successively refine the trees in the ensemble until all input examples are classified correctly.
To guide the refinement process, each leaf of each tree is assigned a distinct example, called a \emph{witness}.
In a recursive process we then aim to find refinements of the trees that classify more and more examples correctly while maintaining that each witness is classified in the assigned leaf.
Maintaining the witness-leaf assignment will speed up the refinement process because it enables us to detect examples that need to be cut away from witnesses in some trees of the ensemble.

\newcommand{\wit}{\ensuremath{\textsf{wit}}}
Formally, a \emph{witness tree} is a tuple $(T, \dimn, \thr, \cla, \wit)$ wherein $(T, \dimn, \thr, \cla)$ is a decision tree and~$\wit \colon V(T) \to E$ is a mapping from the leaves of $T$ to the set of examples such that for each leaf~$q\in V(T)$ we have~$\wit[q] \in E[q]$.
The images of $\wit$ are called \emph{witnesses}.
Note that a witness is not necessarily classified correctly, that is, we permit~$T[\wit(q)] \neq \lambda(\wit(q))$.
A \emph{witness ensemble} is a set of witness trees.

We aim to successively refine the trees in a witness ensemble until all examples are classified correctly.
For this, an example $e \in E$ is \emph{dirty} for some tree $T$ (or tree ensemble~$\mathcal{F}$) if the label $T(e)$ (or $\mathcal{F}(e)$) assigned to $e$ by $T$ (or by $\mathcal{F}$) is not equal to $\lambda(e)$.

\looseness=-1
Next, we define a refinement of a tree in an ensemble.
All of our refinements will take a dirty example and change the class label assigned to this example by one of the decision trees.
Consider a witness tree $T$ and a dirty example $e$ for $T$.
Intuitively, we take the leaf $q$ of $T$ in which $e$ is classified and consider its witness~$\wit(q)$.
Then we pick a way of introducing into $T$ a new cut on the path from the root to $q$ that cuts apart $\wit(q)$ and $e$.
This then results in a refinement of $T$.

Formally, let $T$ be a witness tree.
A \emph{one-step refinement} $R$ of $T$ is a witness tree constructed in one of the following two ways (illustrated in \Cref{fig:refinement1}):

\begin{figure*}
	\centering
	\begin{tikzpicture}[yscale=0.7,xscale=0.87]
	\node[circle,draw] (r) at (5,10) {$r$};
	\node[circle,draw] (t) at (2,9) {$v$};
	\draw[fill=gray!50] (8,9) node[anchor=north] (T){}
	-- (7,7) node[anchor=north]{}
	-- (9,7) node[anchor=south]{}
	-- cycle;
	\node[] (l) at (8,8) {$T$};
	\node[label=above:{\rotatebox{15}{\scriptsize $x[\dim(r)] \leq \thr(r)$}}] at (3.4,9.1){};			
	\node[label=above:{\rotatebox{-17}{\scriptsize $x[\dim(r)] > \thr(r)$}}] at (6.8,8.8){};	
	
	\draw[-stealth] (r) -- (t) node [midway,above,sloped] {};
	\draw[-stealth] (r) -- (T) node [midway,above,sloped] {};
	
	\begin{scope}[xshift=7.5cm,yscale=0.75,yshift=4cm]
	\draw[fill=gray!50] (5,10) node[anchor=north] (T11){}
	-- (4,8) node[anchor=north](T12){}
	-- (6,8) node[anchor=south] (T13){}
	-- cycle;
	\node[] (l) at (5,9) {$T_1$};
	\node[circle,draw] (u) at (6,7) {$u$};
	
	\draw[fill=gray!50] (7,6) node[anchor=north] (T2){}
	-- (6,4) node[anchor=north](){}
	-- (8,4) node[anchor=south] (){}
	-- cycle;
	\node[] (l) at (7,5) {$T_2$};
	
	\node[circle,draw] (v) at (4,6) {$v$};
	\draw[-stealth] (T13) -- (u);
	\draw[-stealth] (u) -- (v);
	\draw[-stealth] (u) -- (T2);
	\end{scope}

	\end{tikzpicture}
	\caption{Two ways of refining a tree: On the left a new root $r$ and a new leaf~$v$ are introduced. On the right, an existing edge between the subtrees $T_1$ and $T_2$ is subdivided with a vertex $u$ and a new leaf $v$ is introduced.}
	\label{fig:refinement1}
\end{figure*}

\renewcommand{\descriptionlabel}[1]{\hspace{\labelsep}\textit{#1}}
\begin{description}
\item[Root insertion:] Add a new root $r$ to $T$, labeled with a dimension $\dimn(r)$ and threshold~$\thr(r)$, put the old root of~$T$ to be the left or right child of $r$, and put the other child of~$r$ to be a new leaf $v$, labeled with an arbitrary class label and with a witness $x \in E$ such that~$x \in E[R, v]$ (left part of \Cref{fig:refinement1}).

\item [Edge subdivision:] Pick any edge $f$ in $T$.
  Subdivide $f$ with a new node~$u$, labeled with a dimension $\dimn(u)$ and threshold~$\thr(u)$, and add a new leaf $v$ as a child to $u$, labeled with an arbitrary class label and with a witness $x \in E$.
  The order of the children of $u$ is chosen such that~$x \in E[R, v]$ (right part of \Cref{fig:refinement1}).
\end{description}
This finishes the definition of a one-step refinement.
We also say that the one-step refinement \emph{introduces} the new leaf $v$, the witness $x$, and the node $r$ or $u$ (thought of as the nodes including their associated labelings), respectively.
Observe that the refinement is a witness tree and thus the previous witnesses need to be preserved.
That is, the choices of the dimension $\dimn(r), \dimn(u)$ and threshold $\thr(r), \thr(u)$ need to be such that each witness is still classified in its leaf.
In formulas, for each leaf $t$ of $T$ it must hold that $\wit(t) \in E[R, t]$.

A \emph{refinement} $R$ of a witness tree $T$ is obtained by a series~$T = T_1, T_2, \ldots, T_k = R$ of witness trees such that, for each $i \in \{2, \ldots, k\}$, tree $T_i$ is a one-step refinement of $T_{i - 1}$.
A decision tree~$R$ (without witness labeling) is a \emph{refinement} of a witness tree~$T$ if there is a labeling of the leaves of~$R$ by witnesses such that (a)~the~labeling results in a witness tree, that is, for each leaf $q$ of $R$ the witness $\wit(q)$ of $q$ is in $E[R, q]$, and (b) after the labeling the witness tree $R$ it is a refinement of~$T$.
If a tree ensemble~$\mathcal{C}'$ consists of the trees of a witness ensemble~$\mathcal{C}$ or refinements thereof, then we say $\mathcal{C}'$ is a \emph{refinement} of~$\mathcal{C}$.

In the correctness proof for our algorithm we need a property of refinements that shows that the order in which we introduce witnesses is immaterial.

\begin{figure}[t]
  \centering
  \includegraphics{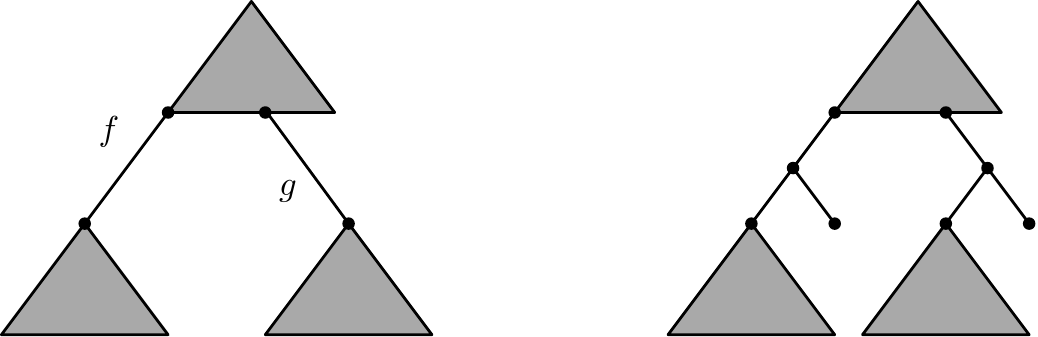}
  \caption{Example for tree $T_1$ (left) and $T_3$ (right) in subcase (1a) in the proof of \cref{lem:reorder-refinements}.}
  \label{reordering-case1a}
\end{figure}
\begin{figure}[t]
  \centering
  \includegraphics{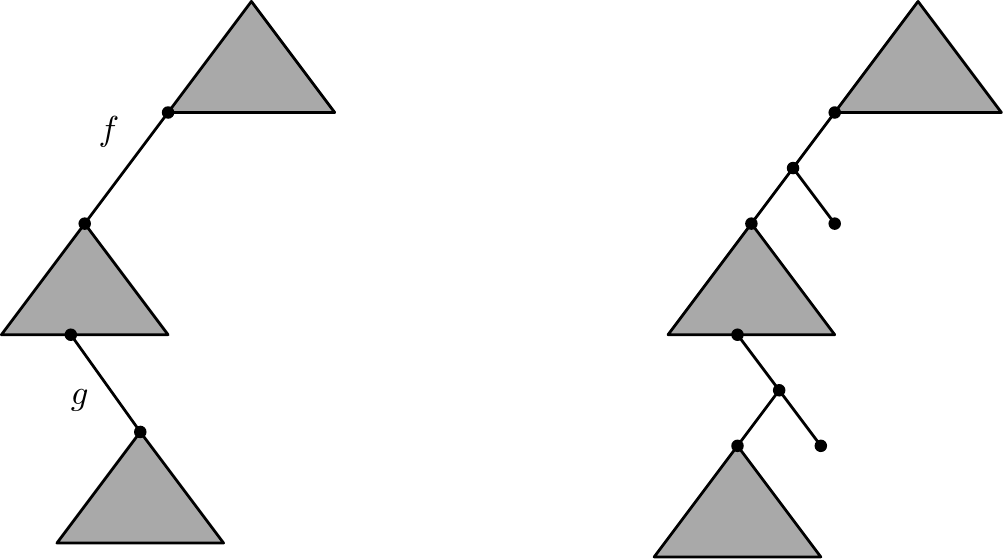}
  \caption{Example for tree $T_1$ (left) and $T_3$ (right) in subcase (1b) in the proof of \cref{lem:reorder-refinements}.}
  \label{reordering-case1b}
\end{figure}
\begin{figure}[t]
  \centering
  \includegraphics{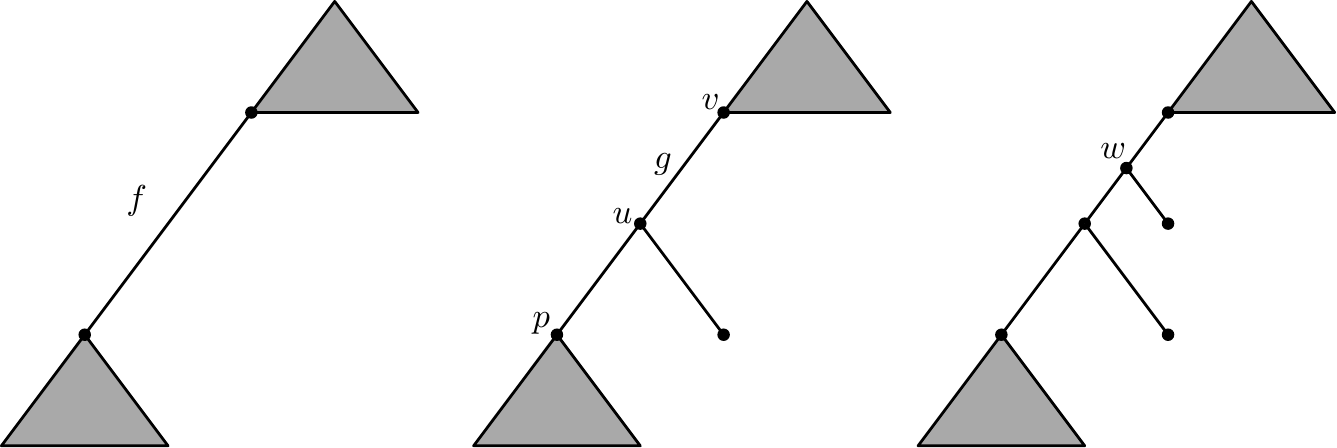}
  \caption{Tree $T_1$ (left), $T_2$ (middle), and $T_3$ (right) in subcase (2a) in the proof of \cref{lem:reorder-refinements}.}
  \label{reordering-case2a}
\end{figure}
\begin{figure}[t]
  \centering
  \includegraphics{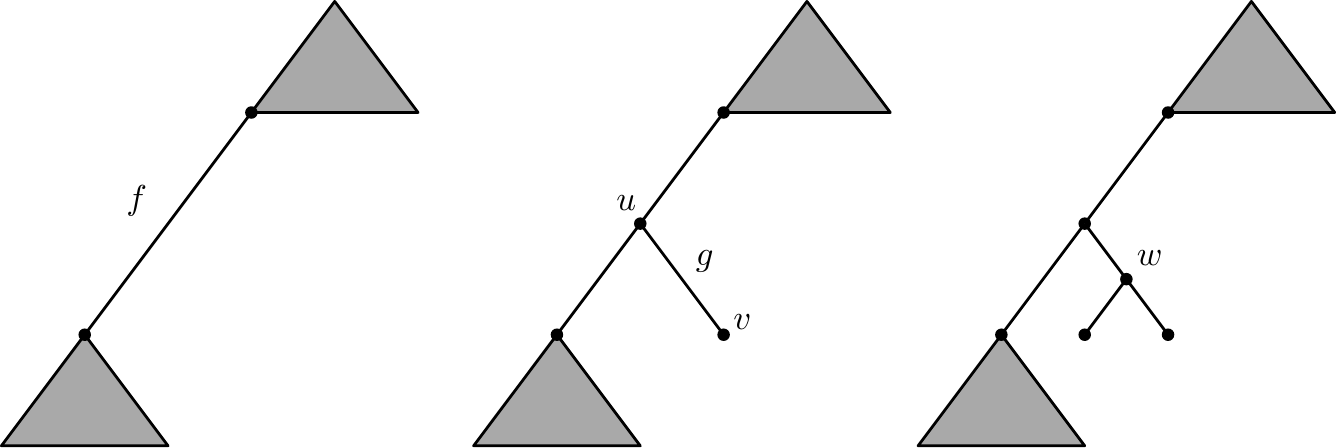}
  \caption{Tree $T_1$ (left), $T_2$ (middle), and $T_3$ (right) in subcase (2b) in the proof of \cref{lem:reorder-refinements}.}
  \label{reordering-case2b}
\end{figure}

\begin{lemma}\label{lem:reorder-refinements}
	Let $T_1$ be a witness tree, $T_2$ a one-step refinement of $T_1$, and $T_3$ a one-step refinement of $T_2$.
	Let $T_3$ introduce a witness $x$.
	Then there is a one-step refinement $S_2$ of $T_1$ that introduces $x$ such that $T_3$ is a one-step refinement of $S_2$.
\end{lemma}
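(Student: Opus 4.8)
The plan is to use the fact that each one-step refinement adds exactly one cut (internal node) and one fresh leaf, so that $T_3$ is obtained from $T_1$ by inserting two cuts: the cut $A$ with leaf $v_a$ and witness $y \coloneqq \wit(v_a)$ introduced by $T_1 \to T_2$, and the cut $B$ with leaf $v_b$ and witness $x = \wit(v_b)$ introduced by $T_2 \to T_3$. I want to rebuild $T_3$ from $T_1$ in the opposite order: first insert a single cut together with $v_b$ and $x$ to obtain $S_2$, then insert a single cut together with $v_a$ and $y$ to recover $T_3$. The guiding principle is that a one-step refinement only has to route its own witness to its fresh leaf via its single new cut while keeping every previously present witness in its leaf; it need \emph{not} reproduce all path constraints of $T_3$. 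Throughout I would use that $T_3$ is already a valid witness tree, so that $x \in E[T_3, v_b]$, $y \in E[T_3, v_a]$, and every old witness lies in its $T_3$-leaf. Since the path constraints in each intermediate tree I build form a \emph{subset} of those in $T_3$, these containments transfer for free.

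I would split into cases according to the types (root insertion vs.\ edge subdivision) of the two steps and, within these, according to where $B$ sits relative to the structure freshly created by $A$. When $B$ acts on material already present in $T_1$ — an edge of $T_1$, or (if $A$ was an edge subdivision) the root — the reordering is immediate: I run the $B$-operation verbatim on $T_1$ to obtain $S_2$ introducing $x$, and then run the $A$-operation on $S_2$, since the edge $A$ subdivides (or the root position $A$ uses) is untouched by $B$ and so is still available in $S_2$. Validity of both steps is then automatic from the ``subset of constraints'' observation.

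The substantive cases are those in which $B$ operates on structure that $A$ just created — it subdivides one of $A$'s new edges, or (for root insertions) stacks above $A$. A representative hard instance is $B$ subdividing the edge from $A$ to its fresh leaf $v_a$: then $v_b$ lies strictly below $A$ in $T_3$, so no single one-step refinement of $T_1$ can reproduce both cuts on the path to $v_b$. The fix is to subdivide the original $T_1$-edge used by $A$, but to place there $A$'s cut $(\dimn(A), \thr(A))$ together with the new leaf $v_b$ and witness $x$; this legally introduces $x$, because $x$ satisfies $A$'s cut on the correct side in $T_3$ and every old witness keeps its side (its routing at $A$'s cut is identical in $T_3$ and $S_2$). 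Then $T_3$ is recovered from $S_2$ by subdividing the edge to $v_b$ with $B$'s cut $(\dimn(B), \thr(B))$ and hanging $v_a$ (witness $y$) off the new node, the two child orders being forced by $T_3$, with $y \in E[T_3, v_a]$ supplying the required containment. Several neighboring positions are handled by the same swap, and I note that a few of them require changing the operation type between the two steps (a root insertion becoming an edge subdivision, or vice versa), which is permitted since both are one-step refinements. The cases are illustrated in \crefrange{reordering-case1a}{reordering-case2b}.

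The main obstacle I anticipate is exactly this last family of cases, where $v_b$ is buried below $A$: one must resist giving $S_2$ the cut of $B$ and instead realize that the correct single cut for $S_2$ is $A$'s cut, with the second step then carrying $B$'s cut. Getting the child orders and the transfer of witness containments right in these cases is the only delicate bookkeeping; everything else is routine once the ``constraints of the intermediate trees are a subset of those of $T_3$'' observation is in place.
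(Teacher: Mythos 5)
Your overall framework matches the paper's proof: the same reduction to a case analysis on where the second operation acts relative to the first, the same literal reordering in the disentangled cases (the paper's Case~(1)), and the same key observation that path constraints in the intermediate trees are subsets of those in $T_3$, so witness containments transfer for free. Your treatment of the case where $B$ subdivides the edge from $A$'s new node $u$ to its fresh leaf $v_a$ coincides exactly with the paper's Subcase~(2b): $S_2$ installs $A$'s cut at $f$ with fresh leaf witnessed by $x$, and the second step installs $B$'s cut.

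However, there is a genuine gap in the remaining entangled positions, namely when $B$ subdivides one of the two half-edges of $f$ created by $A$ (the paper's Subcase~(2a), \cref{reordering-case2a}). You dismiss these with ``handled by the same swap,'' and your closing rule --- whenever $v_b$ is buried below $A$, give $S_2$ the cut of $A$ and \emph{resist} giving it the cut of $B$ --- is exactly backwards for the half-edge $g=\{u,q\}$ leading to the old subtree, where $f=\{p,q\}$ and the subtree of $T_1$ below $f$ hangs at $q$. In that case $v_b$ also lies strictly below $A$ in $T_3$, so your rule applies, but it cannot work: the path to $v_b$ in $T_3$ leaves $u$ on the side \emph{opposite} to $v_a$, which is the same side taken by the entire old subtree rooted at $q$. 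Hence in any one-step refinement of $T_1$ that installs $A$'s cut at $f$, the fresh leaf either misses $x$ (if placed on $v_a$'s side) or pushes the $q$-subtree onto $v_a$'s side and thereby invalidates all of its old witnesses (if placed on $x$'s side); no valid choice exists. In fact the correct $S_2$ is forced and is the opposite of your prescription: since the step $S_2 \to T_3$ preserves all of $S_2$'s witnesses and introduces one fresh leaf, $S_2$'s fresh leaf (witness $x$) must be $v_b$ and the step must introduce $v_a$ together with its parent $u$; thus $S_2 = T_3$ minus $(u, v_a)$, i.e., $B$'s cut $w$ sits at $f$'s position with fresh leaf $v_b$ and witness $x$, and the second step re-inserts $A$'s cut $u$ on the half-edge $\{p,w\}$ --- precisely the paper's Subcase~(2a). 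The criterion governing which cut $S_2$ receives is therefore not whether $v_b$ lies below $A$, but whether $x$ routes at $A$'s cut toward the fresh leaf $v_a$ (then $S_2$ keeps $A$'s cut, as in your hard case) or toward the old subtree (then $S_2$ must take $B$'s cut). As written, your proof attempts an impossible construction in the latter case.
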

\begin{proof}
  To simplify the proof, observe that introducing a new root in a one-step refinement can be thought of as subdividing an auxiliary edge added to the root of the original tree.
  Thus, below we will only consider one-step refinements that subdivide edges.
  Let $f$ be the edge in $T_1$ that is subdivided to obtain $T_2$ and $g$ the edge in $T_2$ that is subdivided to obtain~$T_3$.
  We distinguish two cases; wether~$g$ is present in~$T_1$ or not:

  In Case (1), $g$ is present in $T_1$, that is, $g$ is not an edge that was introduced in $T_2$ by subdividing $f$.
  There are two subcases:
  
  In Subcase (1a), $f$ and $g$ are not on a common root-leaf path, see \cref{reordering-case1a} for an illustration.
  To obtain $S_2$ from $T_1$ we subdivide $g$ (instead of $f$) in the same way as it is done in $T_2$ to obtain $T_3$.
  Then, $T_3$ is a one-step refinement of $S_2$ that subdivides $f$ (and introduces a witness that was previously introduced in $T_2$).
  Clearly, $S_2$ is a witness tree that introduces $x$, as required.
  
  In Subcase (1b), $f$ and $g$ are on a common root-leaf path, see \cref{reordering-case1b} for an illustration.
  Similarly, we switch the order of subdivision operations.
  To obtain $S_2$, subdivide $g$ in $T_1$ in the same way as done in $T_2$ to obtain $T_3$.
  Then, $T_3$ is a one-step refinement of $S_2$ obtained by subdividing $f$ in the same way as done in $T_1$ to obtain $T_2$.
  Note that $S_2$ is indeed a one-step refinement of $T_1$, that is, no witness of $T_1$ is classified in a different leaf in~$S_2$: Otherwise, such a witness would also be classified in a different leaf in $T_3$, contradicting the fact that $T_2$ and $T_3$ are one-step refinements.
  Finally, $T_3$ is a one-step refinement of $S_2$ for the same reason combined with the fact that subdividing $f$ does not classify the witness~$x$ introduced in $S_2$ in a different leaf.

  In Case (2), $g$ is not present in $T_1$, that is, $T_2$ is obtained by subdividing $f$ with a node~$u$ and $g$ is incident with $u$.
  There are again two subcases:

  In Subcase (2a), the other endpoint, $v$, of $g$ is present in $T_1$, see \cref{reordering-case2a}.
  Let $w$ be the node introduced in $T_3$ by subdividing $g$.
  To obtain $S_2$ from $T_1$ we subdivide $f$ but instead of introducing $u$ we introduce $w$.
  Let $p$ be the other endpoint of $f$ different from $v$.
  Then, $T_3$ is a one-step refinement of $S_2$ obtained by subdividing the edge $\{w, p\}$ with node $u$.
  The argument that $S_2$ and $T_3$ are indeed the required one-step refinements is analogous to Subcase (1b).

  In Subcase (2b), the endpoint $v$ of $g$ that is different from $u$ is not present in $T_1$, see \cref{reordering-case2b}.
  Let $y$ be the witness introduced in $T_2$ and recall that $x$ is the witness introduced in $T_3$.
  Note that both $y$'s and $x$'s leaf in $T_2$ is $v$.
  To obtain $S_2$, we take $T_1$ and subdivide $f$ with the same node $u$ but instead of introducing witness $y$, we introduce witness $x$.
  Note that $g$ is present in $S_2$.
  Let $w$ be the node introduced in $T_3$ by subdividing $g$.
  Then, $T_3$ is a one-step refinement of $S_2$ obtained by subdividing the edge $g$ and introducing $w$ in the same way as in $T_3$.
  Instead of introducing witness $x$, we now introduce witness $y$ instead.
  (Note that, in a one-step refinement, the order of the children of the introduced node is chosen such that the introduced witness is classified correctly in the newly introduced leaf.)

  Concluding, we may replace $T_2$ with $S_2$ as defined above while maintaining the properties of one-step refinements and introducing the witness $x$ in $S_2$ instead.
\end{proof}

\begin{algorithm}[t]
	\DontPrintSemicolon
	\SetKwProg{Fn}{Function}{}{}
	\SetKwFunction{RefineEnsemble}{RefineEnsemble}
	
	\Fn{\RefineEnsemble($\mathcal{C}, (E, \lambda), S$)}{
		
		\KwIn{A witness ensemble $\mathcal{C}$, a training data set~$(E, \lambda)$, and a size threshold $S \in \mathbb{N}$.}
		
		\KwOut{A tree ensemble of overall size at most $S$ that is a refinement of $\mathcal{C}$ and classifies~$(E, \lambda)$ or $\bot$ if none exists.}
		
		\BlankLine
		
		\lIf{overall size of $\mathcal{C}$ is larger than $S$\label{algline:wt-size-check}}{\Return{$\bot$}}
                
		\lIf{$\mathcal{C}$ classifies $(E, \lambda)$}{\Return{$\mathcal{C}$}}

                \lIf{overall size of $\mathcal{C}$ is exactly $S$\label{algline:wt-size-check}}{\Return{$\bot$}}
		
		$e \gets$ a dirty example for $\mathcal{C}$\label{algline:pick-dirty}\;
		
		\For{\label{algline:tree-loop}each tree $T$ in $\mathcal{C}$ in which $e$ is classified incorrectly and not a witness}{
			\For{\label{algline:refinement-loop}each important one-step refinement $T'$ of $T$ introducing $e$ as witness and such that $T'[e] = \lambda(e)$}{
				$\mathcal{C}' \gets$ $\mathcal{C}$ with $T$ replaced by $T'$\;
				$\mathcal{D} \gets $ \RefineEnsemble($\mathcal{C}', (E, \lambda), S$)\;
				\lIf{$\mathcal{D} \neq \bot$}{\Return{$\mathcal{D}$}}
			}
		}
		\Return{$\bot$}
	}  
	\caption{Computing tree ensembles.}
	\label{alg:witness-trees}
\end{algorithm}

We can now describe the recursive algorithm for solving \pMTES.
The pseudo-code is given in \cref{alg:witness-trees}.
As mentioned, it checks whether the current witness ensemble is sufficiently small and classifies the input and, if so, reports it as a solution.
Otherwise, it finds a dirty example~$e$ and tries all possibilities of reclassifying the example in a refinement of one of the trees in the current ensemble.
Then it continues recursively.
In \cref{algline:refinement-loop}, a one-step refinement of~$T$ is \emph{important} if it is obtained by introducing a new node $w$ that is labeled by a dimension~$i \in [d]$ in which $e$ and the witness $x$ of $e$'s leaf in $T$ differ, i.e., $e[i] \neq x[i]$, and by a threshold~$\delta \in \Thr(i)$ such that $\delta$ is between $e[i]$ and~$x[i]$.

The initial calls to \RefineEnsemble\ are made with the following $2^{\ell}$ witness ensembles~$\mathcal{C}$:
For each tree $T$ in $\mathcal{C}$ we pick an arbitraryf example and try both possibilities for whether $e$ is classified as $\lambda(e)$ or not (i.e.\ with the other class) and make $T$ to be a tree consisting of a single leaf labeled by the corresponding class and with $e$ as its witness.
This concludes the description of the algorithm.
For the algorithm for \pMMTES\ we replace $S$ by $s$ and we modify \cref{algline:wt-size-check} to check that the size of the largest tree is larger than $s$ instead.%

To achieve our claimed running time for \pMTES, we need to ensure that $\ell \leq S$.
We claim that we can assume this without loss of generality: 
Note that, if $\ell > S$, then necessarily in the solution ensemble there are trivial trees without any cuts, that is, trees that classify all examples as $\lpos$ or all examples as $\lneg$.
Now with a factor $\ell$ in the running time, we may determine how many such trivial trees of either type there are:
For each integer $i$ from~$0$ to~$\ell - S$ we postulate that there are $i$ trivial trees with a single $\lpos$ leaf and~$\ell - S - i$ with a single $\lneg$ leaf and carry out the algorithm described above to determine the $\ell - S$ remaining nontrivial trees.
Such additional trivial trees can easily be incorporated into the algorithm described above.
We will thus assume that each solution tree has at least one inner node and thus that~$\ell \leq S$.

\begin{proof}[of \cref{thm:witness-tree-algo}]
	We now show that the algorithm described above achieves the required running time and that it is correct.
	
	For the running time, observe that, after one of the $2^{\ell}$ initial calls, the algorithm describes a search tree in which each node corresponds to a call to \RefineEnsemble.
	The depth of this tree is at most $S$ for \pMTES\ and at most~$s\ell$ for \pMMTES\ because in each call at least one refinement is made and thus the overall size increases by at least one.
	We claim that each search-tree node has at most $\delta D (S + \ell)$ or $\delta D s \ell$ children, respectively.
	To see this, we show that the total number of refinements of $\mathcal{C}$ considered in \cref{algline:refinement-loop} is bounded by that number: Each such refinement is specified (1) by a new root or an edge on a root-leaf path of a tree in~$\mathcal{C}$, (2) a dimension in the newly introduced node, and (3) a threshold in the newly introduced node.
	
	For (3) there are at most $D$ possibilities.
	
	For (2) there are at most $\delta$ possibilities: Observe that $e$ has a different class label than the witness $w$ of its old leaf.
	Thus, there are at most $\delta$ dimensions in which $e$ and $w$ differ.
        Since the refinements considered are important, they thus have one of these at most $\delta$ dimensions.

	For (1), the number of possibilities can be bounded as follows. In \pMTES, if we are adding a new root, there are at most $\ell$ choices. If we are subdividing an edge $f$, then $f$ is on a root-leaf path in one of the trees in the current ensembles to the leaf of $e$.
        In total, these paths can contain at most $S$ inner nodes and thus at most $S$ edges.
        In \pMMTES, there are at most $\ell$ ways to choose the tree $T$ to refine.
        Afterwards, since the leaf of $e$ is uniquely defined, we need to specify whether we introduce a new root or which edge we subdivide on the root-leaf path $P$ to $e$'s leaf in $T$.
        Note that there are at most $s - 1$ inner nodes of $T$ in $P$ (otherwise, after refining the tree would exceed the size threshold).
        Thus, there are at most $s$ possibilities to introduce a new root or choose an edge for subdivision.
        There are thus at most $s \ell$ refinements to consider.
	
	Thus, the overall search tree has size at most $(\delta D (S + \ell))^S \leq (2\delta D S)^S$ (resp.\ $(\delta D s \ell)^{s \ell}$).
	Accounting for the $2^{\ell} \leq 2^S$ initial calls and noticing that the operations in one search-tree node take $\Oh(Sdn)$ time yields the claimed running time.
	
	It remains to prove the correctness.
	Clearly, if the algorithm returns something different from $\bot$, then it is a tree ensemble that classifies $(E, \lambda)$ and is of the required size.
	Now assume that there is a tree ensemble that classifies $(E, \lambda)$ and is of the required size.
	We show that the algorithm will not return~$\bot$.
	We say that a witness-tree ensemble $\mathcal{C}$ is \emph{good} if there is a tree ensemble $\mathcal{C}^{\star}$ that refines $\mathcal{C}$, classifies~$(E, \lambda)$, and is of the required size.
	We claim that (1)~one of the ensembles $\mathcal{C}$ of an initial call to \RefineEnsemble\ is good and (2)~that if~$\mathcal{C}$ in a call to \cref{alg:witness-trees} is good, then either it classifies $(E, \lambda)$ or in at least one recursive call $\RefineEnsemble(\mathcal{C}', \cdot, \cdot)$ the tree ensemble $\mathcal{C}'$ is good.
	Observe that it is sufficient to prove both claims.
	
	\newcommand{\sol}{\ensuremath{\mathcal{C}^{\star}}}
	As to Claim (1): Consider a solution \sol\ and the witnesses that were chosen arbitrarily for~$\mathcal{C}$ before the initial calls to \RefineEnsemble.
	Fix an arbitrary mapping of trees between~$\mathcal{C}$ and \sol.
	Consider a tree $T$ in $\mathcal{C}$ and its corresponding tree $T^{\star}$ in \sol.
	Observe that $T^{\star}$ has a leaf $q$ such that $E[T^{\star}, q]$ contains the (single) witness of $T$.
	Label $q$ with this witness.
	For each remaining leaf, pick an arbitrary example that is classified in this leaf and label it as its witness.
	(Note that, without loss of generality, there is at least one example in each leaf because if there is a leaf without an example then we can find a smaller solution ensemble.)
	Observe that the witness tree resulting from $T^{\star}$ by this labeling is a refinement of $T$ if the leaves containing the single witness of $T$ have the same class label.
	Since we try all possible class labels for the leaf in $T$ before the initial calls to \RefineEnsemble, eventually we obtain that $T^{\star}$ is a refinement of $T$ and indeed this holds for all pairs of mapped trees of $\mathcal{C}$ and \sol.
	Thus, $\mathcal{C}$ is good.
	
	\looseness=-1
	As to Claim (2): Assume that the witness-tree ensemble $\mathcal{C}$ is good and let \sol\ be a corresponding witness-tree ensemble.
	If $\mathcal{C}$ classifies $(E, \lambda)$ then there is nothing to prove.
	Otherwise, there is at least one dirty example for $\mathcal{C}$.
	Let $e$ be the dirty example picked by the algorithm in \cref{algline:pick-dirty}.
	Consider the classes assigned to $e$ by the trees in $\mathcal{C}$ and those assigned by the corresponding refinements in \sol.
	Observe that, since \sol\ classifies $e$ correctly, there is at least one tree $T \in \mathcal{C}$ such that $T$ classifies $e$ incorrectly and $T$'s refinement $R \in \sol$ classifies $e$ correctly.
	In a refinement, the class of a witness is never changed and thus $e$ is not a witness in~$T$.
	Hence, the for loop in \cref{algline:tree-loop} selects the tree $T$ in one iteration.
	
	We now claim that the loop in \cref{algline:refinement-loop} will select a refinement~$T'$ such that~$R$ (thought of as a non-witness decision tree) is a refinement of $T'$.
	Consider a sequence of one-step refinements $T = T_1, T_2, \ldots, T_k = R$.
        We first show that we can assume that $R$ has $e$ as a witness.
        Assume that this is not the case.
        Consider $e$'s leaf in $R$ and the witness $x$ of this leaf.
        Let $T_i$ be the tree in which witness $x$ has been introduced.
        Since one-step refinements only shrink the sets of examples that are classified in leaves other than the newly introduced one, the leaf of example $e$ in $T_i$ is also $x$'s leaf.
        Thus, we may replace witness $x$ by $e$ in $T_i$ and in every tree thereafter.
        Thus, we may assume that $e$ is a witness in $R$.
        Indeed, by \cref{lem:reorder-refinements} we may assume that $e$ is introduced in $T_2$.
	Finally, observe that we may assume that $T_2$ is important and thus $T_2$ equals $T'$, the refinement selected by the algorithm in \cref{algline:refinement-loop}.
	Thus, the refined ensemble $\mathcal{C}'$ constructed from $T'$ is good, as claimed.
\end{proof}

Recall that \pDTSlong\ (\pDTS) is the special case of \pMMTES\ in which $\ell = 1$.
Thus we have the following.
\begin{corollary}
	\pDTSlong\ can be solved in $\Oh((\delta D s)^s \cdot sdn)$ time.
\end{corollary}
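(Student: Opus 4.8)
The plan is to derive this corollary directly from the running-time bound already established for \pMMTES\ in \cref{thm:witness-tree-algo}, by substituting the special case $\ell = 1$. Recall that \pDTS\ is precisely \pMMTES\ restricted to ensembles consisting of a single tree, so the same witness-tree search-tree algorithm applies verbatim; I would simply track how each factor in the \pMMTES\ running time degenerates when $\ell = 1$.

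First I would recall the general \pMMTES\ bound of $\Oh(2^{\ell} \cdot (\delta D \ell s)^{s\ell} \cdot s \ell^2 d n)$ and set $\ell = 1$. The leading factor $2^{\ell}$, which accounted for the $2^{\ell}$ initial calls branching over whether the chosen witness in each tree is classified correctly, collapses to $2^1 = 2$ and is absorbed into the $\Oh$-notation. The search-tree size factor $(\delta D \ell s)^{s\ell}$ becomes $(\delta D s)^{s}$ directly. Finally, the per-node work factor $s \ell^2 d n$ reduces to $s d n$. Assembling these gives $\Oh((\delta D s)^s \cdot s d n)$, matching the claimed bound.

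For completeness I would briefly re-examine the structural estimates in the proof of \cref{thm:witness-tree-algo} to confirm they specialize cleanly. With a single tree, there is no choice of which tree to refine, so the factor bounding possibility (1) in that proof — ``at most $s\ell$ refinements'' from choosing a tree and then an edge or new root on the root-leaf path to $e$'s leaf — becomes just ``at most $s$'' possibilities, since $e$'s leaf is unique and its root-leaf path contains at most $s-1$ inner nodes (else a refinement would exceed the size threshold). Possibilities (2) and (3), bounding the dimension by $\delta$ and the threshold by $D$, are independent of $\ell$ and are unchanged. The search-tree depth remains at most $s\ell = s$. Hence each node has at most $\delta D s$ children over a tree of depth $s$, giving the $(\delta D s)^s$ factor exactly.

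I do not expect any genuine obstacle here, as the statement is a direct corollary: the entire correctness argument and running-time analysis of \cref{thm:witness-tree-algo} carry over unchanged, and the only task is the arithmetic of substituting $\ell = 1$. The one point worth stating explicitly is that, because $\ell = 1$, we have $S = s$ and $\ell \le S$ holds trivially, so the wlog assumption invoked in the \pMTES\ analysis is immaterial here and no separate handling of trivial trees is needed.
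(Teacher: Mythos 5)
Your proposal is correct and matches the paper's own derivation exactly: the paper obtains this corollary precisely by noting that \pDTSlong\ is the special case $\ell = 1$ of \pMMTESlong\ and substituting into the $\Oh(2^{\ell} \cdot (\delta D \ell s)^{s\ell} \cdot s \ell^2 d n)$ bound from \cref{thm:witness-tree-algo}, just as you do. Your additional sanity check that the structural estimates (the $\delta$, $D$, and $s\ell$ factors per search-tree node) specialize cleanly at $\ell = 1$ is sound, though the paper leaves it implicit.
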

This improves on the running time for \pDTS\ given by \cite{OrdyniakS21} (see their main theorem, Theorem~8).

The following theorem shows that the exponent in our running time cannot be improved.
\begin{theorem}\label{thm:witness-tree-algo-tight}
	Solving \pMTES\ in $(\delta D S)^{o(S)} \cdot \poly$ time would contradict the \ethlong, even if~$D = 2$ and $\ell = 1$.
	Furthermore, \pMTES\ cannot be solved in $(2-\varepsilon)^{n}$~time even if~$D=2$ and~$\ell=1$ for any~$\varepsilon>0$ unless the Set Cover Conjecture is wrong.
\end{theorem}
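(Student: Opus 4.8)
The plan is to derive both lower bounds from a single polynomial-time reduction from \textsc{Set Cover} to \pDTS\ (the $\ell = 1$ case of \pMTES), engineered so that it outputs only binary features and hence $D = 2$ holds automatically. Given a \textsc{Set Cover} instance with universe $U = \{1, \dots, u\}$ and sets $S_1, \dots, S_m \subseteq U$, I would introduce one feature per set and build the training data set in $\{0,1\}^m$ consisting of a single \lneg-example $z = (0, \dots, 0)$ at the origin together with, for each element $e \in U$, one \lpos-example $x_e$ with $x_e[j] = 1$ if and only if $e \in S_j$. Then the number of examples is exactly $N = u + 1$, each feature has domain size $D = 2$, and two examples of different classes differ only in pairs $(z, x_e)$, so $\delta = \max_e \abs{\{ j : e \in S_j \}} \le m$.

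I would then prove that the minimum size of a decision tree classifying this data set equals the minimum size of a set cover. For one direction, a cover $S_{j_1}, \dots, S_{j_k}$ gives a caterpillar tree that cuts on features $j_1, \dots, j_k$ with threshold $\tfrac12$: at each cut $z$ takes the $\le$-branch while the \lpos-examples of the cut set peel off into a correctly labeled leaf, and after $k$ cuts $z$ is alone in an \lneg-leaf, giving a tree of size $k$. Conversely, in any classifying tree the leaf of $z$ contains no \lpos-example, and since $z[j] = 0$ for all $j$ the example $z$ always follows the $\le$-branch; thus every $x_e$ must diverge from $z$ at some cut on $z$'s root-leaf path whose feature $j$ satisfies $e \in S_j$, so the at most $S$ cut features on this path form a set cover. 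This yields the exact correspondence between tree size and cover size.

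Both bounds then follow by feeding the reduction suitable source instances. For the ETH part I would take \textsc{Set Cover} instances arising from \pDS\ on $n$-vertex graphs, where universe size and number of sets both equal $n$; under the \ethlong\ such instances admit no $f(k) \cdot n^{o(k)}$-time algorithm when parameterized by the solution size $k$. Setting the size bound $S = k$ and using $D = 2$ and $\delta \le m = n \le N$, a hypothetical $(\delta D S)^{o(S)} \cdot \poly$-time algorithm for \pDTS\ would run in $(\poly(N) \cdot k)^{o(k)} \cdot \poly(N) = f(k) \cdot N^{o(k)}$ time, contradicting the ETH. For the Set Cover Conjecture part I would use general \textsc{Set Cover} instances; because the reduction produces only $N = u + 1$ examples, a $(2-\varepsilon)^{N} \cdot \poly$-time algorithm for \pDTS\ would determine the minimum cover size in $(2-\varepsilon)^{u+1} \cdot \poly = (2-\varepsilon)^{u} \cdot \poly(m)$ time, refuting the conjecture.

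The main obstacle is making one reduction tight in two incompatible-looking senses at once: the tree size must match the cover size up to the exact additive constant so that the forbidden $o(S)$ exponent transfers to the $o(k)$ exponent ruled out by the ETH, and the number of examples must be essentially the universe size so that the base $2 - \varepsilon$ transfers without blow-up to refute the Set Cover Conjecture. Any padding designed to force tree structure would either inflate $N$ past $u + O(1)$, destroying the second bound, or inflate the optimum tree size, destroying the first. The delicate step in the correctness argument is the backward direction, where one reads a cover off the root-leaf path of the unique negative example; this works precisely because $z$ lies at the origin and is therefore forced onto a single branch at every cut, which is also what keeps $D = 2$ without weakening either bound.
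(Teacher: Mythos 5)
Your proposal is correct and follows essentially the same route as the paper: the paper likewise reduces from \textsc{Dominating Set} (for the ETH bound, via the Chen et al.\ $f(k)\cdot n^{o(k)}$ lower bound) and from \textsc{Set Cover} (for the $(2-\varepsilon)^n$ bound), using exactly your construction with one binary feature per set/closed neighborhood, one \lpos{} example per element/vertex, and a single all-zero \lneg{} example, so that the minimum tree size equals the minimum cover size. The only difference is presentational: you unify both bounds under one reduction and write out the cover-vs-tree-size equivalence explicitly, whereas the paper gives two "similar" reductions and delegates that equivalence to a citation of Ordyniak and Szeider.
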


\begin{proof}
	This follows from a reduction from the \pHS\ problem to \pDTS\ given by \cite{OrdyniakS21}.
	For completeness, we repeat the construction here, but for clarity reasons we instead reduce from \pDS.
	In \pDS\ we are given a graph~$G$ and an integer~$k$.
	We want to decide whether there is a vertex subset $W$ of size at most $k$ such that each vertex is in~$W$ or has a neighbor in $W$.
	For convenience put $V(G) = \{v_1, v_2, \ldots, v_n\}$.
	We construct an instance~$((E, \lambda), 1, S)$ of \pMTES\ as follows.
	We put $\ell = 1$ and $S = k$.
	For the training data, 
	there are~$|V(G)|$ dimensions $1, 2, \ldots, n$.
	The domain in each dimension is~$\{0, 1\}$.
	There is one $\lneg$ example~$x = (0, 0, \ldots, 0)$.
	For each vertex~$v_i \in V(G)$, there is one $\lpos$ example $e_i \in E$ such that~$e_i[j] = 1$ if $i = j$ or if~$v_i$ and $v_j$ are adjacent; otherwise~$e_i[j] = 0$.
	It is not hard to check (see~\cite{OrdyniakS21}) that $(G, k)$ and $((E, \lambda), 1, S)$ are equivalent.
	By \cite{chen_strong_2006} (Theorem~5.4), we know that an algorithm with running time $f(k) \cdot n^{o(k)}$ for \pDS\ would imply FPT${}={}$W[1] which would imply that the \ethlong\ is false~\cite{CyFoKoLoMaPiPiSa2015}.
	To finish the proof, observe that an $\Oh((\delta D S)^{o(S)} \cdot \poly)$-time algorithm for \pMTES\ would imply an $f(k) \cdot n^{o(k)}$-time algorithm for \pDS.
	
	To obtain the no $(2-\varepsilon)^{n}$~time lower bound, we use a similar reduction, but reduce from \textsc{Set Cover}, where the input is a universe~$\mathcal{U}$ of size~$n$ and a set family~$\mathcal{S}$.
	Furthermore, we swap the dimensions and the examples, that is, we create one dimensions per set in~$\mathcal{S}$, and one $\lpos$~example for each element in~$\mathcal{U}$ and additionally one $\lneg$~example.
	Hence, we have exactly $n+1$~elements.
	Since \textsc{Set Cover} cannot be solved in $(2-\varepsilon)^{n}$~time for any~$\varepsilon >0$~\cite{CyganDLMNOPSW16} if the Set Cover Conjecture is true, we obtain the desired lower bound for the number of examples.
\end{proof}

\section{Tight Exponential-Time Algorithm}
\label{sec:tight-exponential}

\subsection{An Efficient Algorithm for a Small Number of Examples}
\label{sec-algos-for-n}

We now give an algorithm that solves \pMTES{} in~$(\ell+1)^{n}\cdot \poly$~time, where~$n \coloneqq \abs{E}$ is the number of examples. This running time is single-exponential in~$n$ for every fixed number of trees. More importantly, we show that this running time is essentially optimal. To obtain the algorithm, we first show how to compute in a suitable running time the sizes of smallest trees for essentially all possible classification outcomes of a decision tree.     
\begin{lemma}
\label{lem-blue-correct}
	Given a training data set~$(E,\lambda)$ one can compute in $\Oh(3^n \cdot Ddn)$~time for all~$E' \subseteq E$ the size of a smallest decision tree~$T$ such that~$T[e]=\lpos$ if and only if~$e\in E'$.
\end{lemma}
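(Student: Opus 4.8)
The plan is to solve this by dynamic programming over pairs of example sets, where the exponent $3^n$ comes from counting these pairs. For $B \subseteq A \subseteq E$, let $\tau(A,B)$ denote the minimum size of a decision tree that, when fed exactly the examples in $A$, classifies the examples in $B$ as $\lpos$ and those in $A\setminus B$ as $\lneg$ (setting $\tau(A,B)=\infty$ if no such tree exists). The values the lemma asks for are precisely $\tau(E,E')$ for each $E'\subseteq E$, i.e.\ the row $A=E$ of the table. The key counting observation is that the number of pairs $(A,B)$ with $B\subseteq A\subseteq E$ is exactly $3^n$, since each of the $n$ examples is either outside $A$, in $A\setminus B$, or in $B$; this is where the $3^n$ factor originates.

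First I would set up the recurrence. A smallest tree realizing $(A,B)$ is either a single leaf or has a root cut. A leaf suffices exactly when $B=A$ (a $\lpos$-leaf) or $B=\emptyset$ (a $\lneg$-leaf), giving the base cases $\tau(A,B)=0$. Otherwise at least one cut is needed: a root cut in dimension $i\in[d]$ with threshold $h\in\Thr(i)$ splits $A$ into $A_L=A\cap E[f_i\le h]$ and $A_R=A\cap E[f_i> h]$, and $B$ into $B_L=B\cap A_L$ and $B_R=B\cap A_R$, after which the two subtrees are independent. This yields
\[
  \tau(A,B)=\min_{(i,h)}\big(1+\tau(A_L,B_L)+\tau(A_R,B_R)\big),
\]
where the minimum ranges over cuts that split $A$ into two nonempty parts. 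Such cuts satisfy $A_L,A_R\subsetneq A$, so the recurrence refers only to strictly smaller example sets and evaluating the table by increasing $\abs{A}$ is well-founded; cuts that send all of $A$ to one side can be ignored since they only add to the size without enabling a new classification. Correctness follows in the usual two directions: an optimal tree for $(A,B)$ is either a leaf (base cases) or decomposes at its root cut into optimal subtrees for $(A_L,B_L)$ and $(A_R,B_R)$, so the right-hand side both upper- and lower-bounds the optimum, and infeasibility is handled automatically, as a state with no splitting cut (e.g.\ two identical examples that must receive different labels) yields an empty minimization and hence $\infty$.

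For the running time I would encode every example set as an $n$-bit mask and precompute, for each of the $\Oh(Dd)$ cuts $(i,h)$, the masks of $E[f_i\le h]$ and $E[f_i>h]$ in $\Oh(Dd\,n)$ total time. Each entry $\tau(A,B)$ is then computed by iterating over the $\Oh(Dd)$ cuts, forming $A_L,A_R,B_L,B_R$ by bitwise intersections in $\Oh(n)$ time and looking up the two subproblem values in $\Oh(1)$; this is $\Oh(Dd\,n)$ per entry, and summing over all $3^n$ entries gives the claimed $\Oh(3^n\cdot Dd\,n)$ bound.

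The main obstacle to state carefully is the clean decomposition at the root cut: I must argue that an optimal tree for $(A,B)$ genuinely splits into two independent subproblems with no interaction between the sides, and that restricting the minimization to \emph{splitting} cuts loses no generality. These are routine but load-bearing, as is the treatment of the $\infty$ (infeasible) case so that the top-level answers $\tau(E,E')$ are correctly reported.
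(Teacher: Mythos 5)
Your proposal is correct and follows essentially the same route as the paper: the paper's table $Q[E^b,E^r]$ over disjoint pairs is exactly your $\tau(A,B)$ under the bijection $E^b = B$, $E^r = A\setminus B$, with the same $3^n$ three-partition count, the same single-leaf base cases, and the same recurrence minimizing over dimension--threshold cuts restricted to those that split the current example set into two nonempty parts. Your added care about decomposition at the root, restriction to splitting cuts, and the $\infty$/infeasible case fills in details the paper dismisses as standard (and infeasibility in fact never arises here, since $E$ is a set of distinct points), so there is no substantive difference.
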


\begin{proof}
  We solve the problem using dynamic programming over subsets of $E$.
  The dynamic-programming table has entries of the type~$Q[E^b,E^r]$ where~$E^b \subseteq E$ and~$E^r \subseteq E$ are disjoint subsets of examples.
  Each table entry stores the size of a smallest decision tree $T$ such that, if we use $T$ to classify $E^b\cup E^r$, then exactly the examples of~$E^b$ receive the label~$\lpos$.
  (Note that the examples in~$E^b$ and~$E^r$ are not necessarily correctly classified.
  Since some trees in an ensemble may misclassify some examples, we need to allow for this possibility here.)
  We fill the table entries for increasing values of~$|E^b\cup E^r|$ as follows.
  
  We initialize the table by setting 
  $$ Q[E^b,\emptyset]\coloneqq  0
  \text{ and } Q[\emptyset,E^r]\coloneqq  0$$ for all~$E^b\subseteq E$ and~all~$E^r\subseteq E$. 
  This is correct since in these cases, a decision tree without cuts and only one leaf with the appropriate class label suffices.
  
  The recurrence to fill the table when~$E^b$ and~$E^r$ are nonempty is
  \begin{align*}
    Q[E^b,E^r]\coloneqq \min_{i\in [d], h \in \Thr(i)} Q[E^b[f_i\le h],E^r[f_i\le h]] + \\  Q[E^b[f_i> h],E^r[f_i> h]] + 1.
  \end{align*}
  Recall that~$\Thr(i)$ denotes some minimum-size set of thresholds that distinguishes between all values of the examples in the~$i$th dimension. In other words, for each pair of elements~$e$ and~$e'$ with~$e[i] < e'[i]$, there is at least one value~$h\in \Thr(i)$ such that~$e[i] < h \le e'[i]$. Moreover, we only consider those cases where $E^b[f_i\le h]\cup E^r[f_i\le h]\neq \emptyset$ and $E^b[f_i> h]\cup E^r[f_i> h]\neq \emptyset$. That is, we consider only the case that the cut gives two nonempty subtrees. This ensures that the recurrence only considers table entries with smaller set sizes.
  
  The idea behind the recurrence is that we consider all possibilities for the cut at the root and then use the smallest decision trees to achieve the desired labeling for the two resulting subtrees. The size of the resulting tree is the size of the two subtrees plus one, for the additional root vertex.
  The formal correctness proof is standard and omitted.
  
  The running time bound can be seen as follows. 
We need $\Oh(Ddn)$~time to read the input.  
  The number of table entries is~$\Oh(3^n)$ since each entry corresponds to a 3-partition of~$E$ into~$E^b$,~$E^r$, and~$E\setminus (E^b\cup E^r)$. Each entry can be evaluated in $\Oh(Dd)$~time for each of the $d$~dimension we check each of the at most $D$~thresholds.
\end{proof}

We now briefly turn to decision trees instead of ensembles:
The above lemma directly gives an algorithm for \pDTS{} with running time~$\Oh(3^n \cdot Ddn)$.
We can improve on that by slightly modifying the table as follows.

\begin{theorem}\label{cor:single-exponential-decision-trees}
  \pDTSlong\ can be solved in $\Oh(2^n \cdot Ddn)$~time.
\end{theorem}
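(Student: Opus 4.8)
The plan is to reuse the dynamic program of \cref{lem-blue-correct}, but to exploit the fact that for \pDTS\ the target label of every example is already dictated by $\lambda$. Hence there is no need to track separately which examples are \emph{supposed} to receive the label $\lpos$: this collapses the three-way state of \cref{lem-blue-correct} (an example lies in $E^b$, in $E^r$, or outside) down to a two-way state (the example does or does not reach the current node), which reduces the number of table entries from $3^n$ to $2^n$.

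Concretely, I would index the table by a single subset and define, for every $E' \subseteq E$, the value $P[E'] \in \mathbb{N}$ to be the size of a smallest decision tree $T$ that classifies the examples of $E'$ correctly, i.e.\ $T[e] = \lambda(e)$ for every $e \in E'$. The base case is: if $E'$ is empty or all examples in $E'$ share the same class label, then a single leaf labeled with that class suffices, so $P[E'] = 0$. For a non-monochromatic $E'$ the recurrence is
\[
  P[E'] \;=\; \min_{i \in [d],\, h \in \Thr(i)} \; P[E'[f_i \le h]] \,+\, P[E'[f_i > h]] \,+\, 1,
\]
where, exactly as in \cref{lem-blue-correct}, the minimization ranges only over cuts $(i,h)$ for which both $E'[f_i \le h]$ and $E'[f_i > h]$ are nonempty. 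This guarantees that both arguments on the right-hand side are strictly smaller subsets, so the table can be filled by increasing $|E'|$. Since a non-monochromatic $E'$ contains two examples of different classes, these differ in some dimension and are separated by a threshold in $\Thr(i)$, so the set of admissible cuts is nonempty and the minimum is well defined. The \pDTS\ instance with size bound $s$ is then a yes-instance if and only if $P[E] \le s$.

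For correctness I would argue both directions in the standard way: any decision tree correctly classifying $E'$ is either a single leaf (forcing $E'$ to be monochromatic, matching the base case) or has a root cut $(i,h)$ whose two subtrees correctly classify $E'[f_i \le h]$ and $E'[f_i > h]$, giving the inequality $\ge$; conversely, any cut attaining the recurrence assembles into a tree correctly classifying $E'$, giving $\le$. The running time is immediate: there are $2^n$ entries, reading the input costs $\Oh(Ddn)$, and each entry is evaluated by scanning the $d$ dimensions and the at most $D$ thresholds per dimension while forming the induced subsets, yielding the claimed $\Oh(2^n \cdot Ddn)$ bound.

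The only genuinely delicate point, and thus the main thing to get right, is the justification that a single subset is sufficient state, namely that one never needs to remember the intended labels of examples that do \emph{not} reach the current node. This is precisely the feature of \pDTS\ that fails in the ensemble setting of \cref{lem-blue-correct}: there a tree is permitted to misclassify examples, so the intended $\lpos$/$\lneg$ split of the examples reaching a node is an additional free choice rather than being pinned down by $\lambda$, which is exactly why that lemma needs the $3^n$ state and why the improvement to $2^n$ is available here.
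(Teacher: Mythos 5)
Your proof is correct and takes essentially the same approach as the paper: the paper keeps the two-argument table $Q[E^b,E^r]$ of \cref{lem-blue-correct} but restricts it to entries where $E^b$ is a subset of the $\lpos$ examples and $E^r$ a subset of the $\lneg$ examples, giving $2^a \cdot 2^b = 2^n$ entries, and your single-subset table $P[E']$ is exactly this restricted table under the bijection $E' \mapsto \bigl(\{e \in E' \mid \lambda(e) = \lpos\}, \{e \in E' \mid \lambda(e) = \lneg\}\bigr)$. The point you identify as the crux---that \pDTS\ forbids misclassifications, so the intended labels are pinned down by $\lambda$ and need not be tracked as extra state---is precisely the paper's justification as well.
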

\begin{proof}
  We use the algorithm from \Cref{lem-blue-correct} to compute the table $Q$ but with the following modification:
  We restrict the table such that it contains only those entries $Q[E^b,E^r]$ where~$E^b$ is a subset of the \lpos\ examples in the input and $E^r$ is a subset of the \lneg\ examples in the input.
  Otherwise the initialization and recurrence are the same.
  To find the minimum size of a tree that classifies the input training data set, we simply look up~$Q[\{e \in E \mid \lambda(e) = \lpos\}, \{e \in E \mid \lambda(e) = \lneg\}]$.
  
  The initialization and recurrence remain correct because the solution trees for \pDTS\ do not have misclassifications.
  As for the running time, observe that the number of table entries is~$\Oh(2^a \cdot 2^b)$, where $a$ is the number of \lpos\ examples in the input and $b$ the number of \lneg\ examples in the input.
  This yields the claimed bound because $a + b = n$.
\end{proof}

Note that the running time of \Cref{cor:single-exponential-decision-trees} cannot be improved unless standard complexity assumption are wrong:
According to \Cref{thm:witness-tree-algo-tight}, any algorithm with running time~$(2-\varepsilon)^n$ for some~$\varepsilon>0$ contradicts the Set Cover Conjecture.

We now go back to ensembles and use the algorithm from \cref{lem-blue-correct} as a subroutine in an algorithm for~\pMTESlong.
\begin{theorem}\label{thm:exptime-algo}
	For~$\ell>1$, one can solve \pMTESlong\ in $(\ell + 1)^n \cdot Ddn$ time.
\end{theorem}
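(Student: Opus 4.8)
The plan is to separate the construction of the individual trees from the task of combining them into an ensemble, using \cref{lem-blue-correct} as the engine for the former. First I would precompute, for every pattern $E' \subseteq E$, the value $\mathrm{cost}(E') \coloneqq Q[E', E \setminus E']$, that is, the size of a cheapest decision tree that labels exactly the examples in $E'$ as \lpos\ and all others as \lneg. For our purposes a tree of the ensemble is fully described by this pattern together with its cost, since the ensemble's output on an example depends only on how many of its trees vote \lpos. This precomputation runs in $\Oh(3^n \cdot Ddn)$ time, which fits the claimed budget because $\ell > 1$ gives $3^n \le (\ell+1)^n$.

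Second, I would recast the ensemble question in terms of \emph{vote-count vectors}. For an ensemble of exactly $\ell$ trees and an example $e$, let $c(e) \in \{0, \ldots, \ell\}$ be the number of trees labeling $e$ as \lpos. By the tie-broken majority rule, the ensemble classifies $(E,\lambda)$ correctly exactly when its count vector $\vec c = (c(e))_{e \in E}$ is \emph{valid}, i.e.\ $2c(e) \ge \ell$ for every \lpos\ example and $2c(e) < \ell$ for every \lneg\ example. Moreover, realizing a given $\vec c$ with exactly $\ell$ trees is the same as choosing patterns $E'_1, \ldots, E'_\ell$ with $\sum_{j} \indic{E'_j} = \vec c$, at total size $\sum_j \mathrm{cost}(E'_j)$. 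Hence it suffices to compute, for every $\vec c \in \{0, \ldots, \ell\}^E$, the minimum total size $M[\vec c]$ of such a realization and to answer \textsf{yes} iff $M[\vec c] \le S$ for some valid $\vec c$.

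Third, I would compute $M$ by adding trees one at a time. Writing $M_t[\vec c]$ for the minimum total size of $t$ trees realizing $\vec c \in \{0, \ldots, t\}^E$, the base case is $M_0[\vec 0] = 0$ and the transition is $M_t[\vec c] = \min_{\vec b \in \{0,1\}^E,\ \vec b \le \vec c}\big( M_{t-1}[\vec c - \vec b] + \mathrm{cost}(\vec b) \big)$, where $\vec b$ is the vote vector of the newly added tree. Since the trivial single-leaf trees have cost $0$ and patterns $\vec 0$ and $\indic{E}$, the requirement of using \emph{exactly} $\ell$ trees is maintained automatically across these $\ell$ layers. The state space is the set of count vectors, of size $(\ell+1)^n$, which matches the base of the target running time and, for $\ell = 2$, reproduces the $3^n$ of \cref{lem-blue-correct,cor:single-exponential-decision-trees}. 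Correctness of the recurrence is routine and I would treat it as such.

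I expect the main obstacle to be evaluating the transition fast enough, not proving correctness. Enumerating all increment vectors $\vec b$ per state would cost $\Theta((2\ell+1)^n)$ per layer, which is well above the target. The step I would therefore invest in is to replace the subset enumeration by a split-based computation: rather than looping over patterns, I would generate the newly added tree top-down over dimensions $i \in [d]$ and thresholds $h \in \Thr(i)$, exactly as in \cref{lem-blue-correct}, and propagate its leaf increments into the table, exploiting that tree size is additive over the recursive splits so that each count vector is touched only $\Oh(Dd)$ times per layer. Arranging this fusion so that the vote-count bookkeeping stays consistent with the $M_{t-1}$ lookups, without reintroducing a dependence on the $2^n$ patterns, is the delicate part and the crux of the argument; once it goes through, the total running time is $(\ell+1)^n \cdot Ddn$. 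Tightness in $n$ is already provided by \cref{thm:witness-tree-algo-tight}.
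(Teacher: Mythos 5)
Your overall framework coincides with the paper's: precompute, via \cref{lem-blue-correct}, the cost of a cheapest tree realizing each \lpos-pattern $E'\subseteq E$, then run a layered dynamic program over vote-count vectors, adding one tree per layer, and read off the answer at the valid count vectors. The gap is exactly at the step you yourself flag as ``the crux.'' Your table is indexed by \emph{untruncated} count vectors in $\{0,\dots,t\}^E$, and you correctly observe that evaluating the transition by enumerating the $2^n$ increment vectors $\vec b$ is then too expensive (on the order of $(2\ell+1)^n$ state--increment pairs per layer). The ``fusion'' you propose as a repair---interleaving the split-based construction of \cref{lem-blue-correct} with the ensemble bookkeeping so that ``each count vector is touched only $\Oh(Dd)$ times per layer''---is not substantiated, and it is unlikely to go through as described: the inner dynamic program of \cref{lem-blue-correct} runs over pairs of disjoint subsets $(E^b,E^r)$ (its own $3^n$ states), and its recursion splits the example set along thresholds in a way that does not decompose coordinate-wise along the ensemble's count vector, so there is no evident way to charge its work to count vectors at $\Oh(Dd)$ apiece. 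As written, your proposal does not achieve the claimed running time.

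The missing idea is much simpler and is the actual content of the paper's proof: \emph{truncate} the counts at the majority threshold. The ensemble's verdict on an example depends only on whether its number of correct votes reaches $\lceil \ell/2\rceil$, so all counts $\ge \lceil \ell/2\rceil$ are interchangeable; the paper therefore stores vectors with entries in $\{0,\dots,\lceil \ell/2\rceil\}$ (counting correct classifications per example) and uses a truncated addition $c'\oplus\indic{E'}$ in the recurrence. This shrinks the state space from $(\ell+1)^n$ to $(\lceil \ell/2\rceil+1)^n$, and then the plain forward enumeration that you rightly rejected for the untruncated table becomes affordable: the paper bounds the total update work by $2^n\cdot\lceil \ell/2\rceil^n\le(\ell+1)^n$, each update taking constant time, with the $\Oh(3^n\cdot Ddn)$ preprocessing absorbed since $\ell>1$. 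Note that your own formulation admits the same fix with no change of vocabulary: cap each coordinate of your blue-vote vector at $\lceil \ell/2\rceil$; a \lpos\ example is then valid iff its capped count equals $\lceil \ell/2\rceil$, and a \lneg\ example iff its capped count is smaller. With this truncation no fusion with the inner tree DP is needed at all.
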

\begin{proof}
	We use again dynamic programming. It is not sufficient to use subsets of elements that are classified correctly. Instead, we build the solution by iteratively adding trees and storing for each example~$e$ how often~$e$ is classified correctly.
	
	To store subsolutions, we use a table~$R$ with entries of the type~$R[c,j]$ where~$c$ is a length-$n$ integer vector where each~$c_i$ is an integer in $[0,\lceil \ell/2\rceil]$ for each~$i\in [n]$ and~$j\in [\ell]$. An entry~$R[c,j]$ stores the smallest total size of any set of~$j$ decision trees such that each element~$e_i$ is classified correctly exactly~$c_i$~times if~$c_i<\lceil \ell/2\rceil$ and at least $c_i$~times if~$c_i = \lceil \ell/2\rceil$. 
	The distinction between~$c_i<\lceil \ell/2\rceil$ and~$c_i = \lceil \ell/2\rceil$ allows us to assign only one value of~$c_i$ to the situation that~$e_i$ is already correctly classified irrespective of the other trees of the ensemble.
	
	The first step of the algorithm is to compute for all~$E'\subseteq E$ the smallest size of any decision tree~$T$ assigning the \lpos{} label exactly to all~$e\in E'$. From this information, we can directly compute for all~$E'\subseteq E$, the size of a smallest decision tree that classifies all examples in~$E'$ correctly and all examples in~$E\setminus E'$ incorrectly. We will store these sizes in table entries~$Q[E']$.
	
	Now, we initialize~$R$ for~$j=1$, by setting
	
	$$R[c,1] \coloneqq 
	\begin{cases}
	Q[E'] & \exists E'\subseteq E: c=\indic{E'},\\
	+\infty & \text{otherwise.} 
	\end{cases}
	$$ 
	
	Here, we let $\indic{E'}$ denote the indicator vector for~$E'$. Now, for~$j>1$, we use the  recurrence
	\begin{equation}
	R[c,j] \coloneqq \min_{E'\subseteq E, c': c'
		\oplus \indic{E'}=c} R[c',j-1] + Q[E'].\label{eq:ell-recurrence}
	\end{equation}
	
	Here,~$\oplus$ is a truncated addition, that is, for the~$i$th component of~$c'$, we add~1 if this component is strictly smaller than~$\lceil \ell/2\rceil$. If for some~$R[c,j]$ the minimum ranges over an empty set, then we set~$R[c,j] \coloneqq + \infty$.
	
	The idea of the recurrence is simply that the~$j$th tree classifies some element set~$E'$ correctly and that this increases the number of correct classifications for all elements of~$E'$.
	The smallest size of a tree ensemble with~$\ell$ trees to correctly classify~$E$ can then be found in~$R[c^*,
	\ell]$ where we let~$c^*$ denote the length-$n$ vector where each component has value~$\lceil \ell/2\rceil$.
	The formal proof is again standard and omitted.
	
	The table has size~$(\lceil \ell/2\rceil+1)^n\cdot \ell$. The bottleneck in the running time to fill the table is the time needed for evaluating the~$\min$ in Equation~\ref{eq:ell-recurrence}. A straightforward estimation gives a time of~$2^n\cdot \lceil \ell/2 \rceil^n$ for each entry since we consider all possible subsets~$E'$ of~$E$ and possibly all vectors~$c$'. Instead, we may fill the table entries also in a forward direction, that is, for each~$c'$ and each~$E'\subseteq E$, we compute~$c'
	\oplus \indic{E'}$ and update the table entry for~$R[c,j]$ if~$R[c',j-1]+Q[E']$ is smaller than the current entry of~$R[c,j]$. This way, the total time for updating table entries is~$2^n\cdot \lceil \ell/2 \rceil^n\le (\ell+1)^n$ since we consider~$2^n$ possible choices for~$E'$ at each vector~$c'$ and directly derive the corresponding~$c$ for each choice. 
	The overall time bound follows from the observation that the $\Oh(3^n\cdot Ddn)$~time needed for the preprocessing is upper-bounded by~$(\ell+1)^n $ since~$\ell>1$.
\end{proof}

\subsection{A Matching Lower Bound}

We now show that, under a standard conjecture in complexity theory, the running time of the algorithm of \Cref{thm:exptime-algo} cannot be improved substantially.

We show this by a reduction from \abColor.
Here, one is given a graph~$G$ and two integers~$a$ and~$x$, and wants to assign each vertex in~$V(G)$ a set of~$x$ out of $a$~colors such that each two adjacent vertices receive disjoint color sets.
Unless the ETH fails, \abColor{} cannot be solved in $f(x)\cdot 2^{o(\log(x))\cdot n}$~time, where $n$ is the number of vertices even if~$a=\Theta(x^2 \log x)$~\cite{bonamy_tight_2019}.
Observe that in a solution for \abColor, the vertices having some color~$c$ form an independent set in~$G$.
Our aim is to transfer this lower bound to \pMTESlong.
To do so, we take a detour and first transfer this lower bound to \textsc{Set Multicover} which might be of independent interest.

In \textsc{Set Multicover} the input is a universe~$\mathcal{U}$, a family~$\mathcal{F}$ of subsets of~$U$, an integer~$a$, and each element~$u\in\mathcal{U}$ has a \emph{demand}~$\dem(u)$.
The task is to find a \emph{set multicover}, that is, a subfamily~$\mathcal{F}'\subseteq \mathcal{F}$ of size exactly~$a$ such that for each~$u\in\mathcal{U}$ at least~$\dem(u)$ sets of~$\mathcal{F}'$ contain element~$u$.

\probdef{\textsc{Set Multicover}}
{A universe~$\mathcal{U}$, a family of subsets~$\mathcal{F}$ of~$U$, a demand function~$\dem:U\to\mathds{N}$, and an integer~$a$.}
{Is there a set multicover of size exactly~$a$?}

\begin{proposition}
	\label{thm:superexponential-lower-bound-set-multi-cover}
	Solving \textsc{Set Multicover} in $f(x) \cdot 2^{o(\log x)\cdot |\mathcal{U}|}\cdot\poly$~time would contradict the \ethlong, even if all demands are equal to~$x$ and if~$a\in\Theta(x^2\log x)$.
\end{proposition}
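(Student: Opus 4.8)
The plan is to transfer the known lower bound for \abColor{} to \textsc{Set Multicover} by a direct reduction, carefully controlling the size of the universe since that is the parameter appearing in the exponent. Given a \abColor{} instance $(G, a, x)$ with $V(G) = \{v_1, \ldots, v_n\}$, I would let the universe be $\mathcal{U} \coloneqq V(G)$, so that $|\mathcal{U}| = n$ is exactly the vertex count, which is what we need to preserve to inherit the $2^{o(\log x) \cdot n}$ form. The sets in $\mathcal{F}$ should correspond to the independent sets of $G$: as observed in the excerpt, the vertices receiving a fixed color form an independent set, so assigning $x$ of $a$ colors to each vertex with adjacent vertices receiving disjoint colors is precisely a partition-style covering of $V(G)$ by $a$ (color-indexed) independent sets in which each vertex is covered exactly $x$ times. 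Concretely I would set $\dem(u) = x$ for every $u \in \mathcal{U}$ and keep the target size $a$ identical in both problems.

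The correctness argument has two directions. First I would show that a valid \abColor{} coloring yields a set multicover of size exactly $a$: for each color $c \in [a]$, let $I_c$ be the set of vertices that receive color $c$; each $I_c$ is independent, hence a member of $\mathcal{F}$, and since each vertex gets exactly $x$ colors, it lies in exactly $x$ of the $I_c$, so the demand $x$ is met (indeed with equality). Conversely, given a multicover $\mathcal{F}'$ of size exactly $a$ consisting of independent sets $I_1, \ldots, I_a$, I would assign vertex $u$ the color set $\{ c : u \in I_c \}$; the demand constraint gives $|\{c : u \in I_c\}| \ge x$, so by discarding surplus memberships we obtain exactly $x$ colors per vertex, and adjacency is respected because each $I_c$ is independent, so no two adjacent vertices share a color. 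This makes $(G,a,x)$ and the constructed instance equivalent, and the parameter relationship $a \in \Theta(x^2 \log x)$ carries over verbatim since $a$ is unchanged.

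The main obstacle I anticipate is the size of the family $\mathcal{F}$: there can be exponentially many independent sets, and I must ensure the reduction runs in time consistent with a polynomial blow-up, since a genuine subexponential-in-$|\mathcal{U}|$ algorithm for \textsc{Set Multicover} is only meaningful if the instance itself is not already astronomically large. The cleanest fix is to not list all independent sets explicitly but to include only the maximal independent sets, or better, to observe that an assumed fast algorithm need only be fed an instance whose size is bounded suitably; alternatively one argues that even restricting $\mathcal{F}$ to contain, for each of the $a$ colors, a candidate independent set suffices, but in a reduction we do not know the coloring in advance. I would therefore take $\mathcal{F}$ to be the set of all maximal independent sets and argue that any multicover can be replaced by one using maximal independent sets without increasing its size (enlarging a chosen independent set only increases coverage and never violates independence), which keeps correctness intact; the family size is then at most the number of maximal independent sets, and since a putative $f(x)\cdot 2^{o(\log x)\cdot |\mathcal{U}|}\cdot\poly$-time algorithm has the polynomial factor measured in the actual input size, this still yields the desired contradiction with the ETH via the \abColor{} lower bound of \cite{bonamy_tight_2019}. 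I would close by noting that all demands equal $x$ and $a \in \Theta(x^2 \log x)$, as required by the statement.
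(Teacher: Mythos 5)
Your proposal follows essentially the same route as the paper's proof: reduce from \abColor{} with universe $V(G)$, all demands equal to $x$, budget $a$, and family consisting of the maximal independent sets of $G$, with correctness shown by extending color classes to maximal independent sets in one direction and reading a coloring off the chosen sets in the other. The only point where the paper is more explicit is the final running-time accounting: it invokes the Moon--Moser bound of $3^{n/3}$ on the number of maximal independent sets and a reduction time of $\Oh(n \cdot 2^n)$, so that the $\poly$ factor applied to the exponential-size instance contributes only $2^{O(n)} = 2^{o(\log x)\cdot n}$ (constants being $o(\log x)$), which your closing sentence correctly asserts but does not spell out.
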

\begin{proof}
	We provide a reduction from \abColor, which cannot be solved in $f(x)\cdot 2^{o(\log x)\cdot n}$~time, where $n$ is the number of vertices even if~$a=\Theta(x^2 \log x)$ unless the ETH fails~\cite{bonamy_tight_2019}. 
	Let~$(G,a,x)$ be an instance of \abColor. 
	We construct an equivalent instance~$(\mathcal{U},\mathcal{F},a,x)$ of \textsc{Set Multicover} where each demand is equal to~$x$ as follows.
	Each element in the universe~$\mathcal{U}$ corresponds to a vertex of~$V(G)$.
	Finally, for each maximal independent set~$I$, we add a set~$F_I$.
	
	Now, we show the correctness.
	Consider a solution of~$(G,a,x)$ and let~$C_i$ for~$i\in[a]$ be the vertices having color~$i$.
	By definition~$G[C_i]$ is an independent set and thus there exists a set~$F_i$ containing all vertices of~$C_i$.
	Note that the family~$\{F_i:i\in[a]\}$ is a set multicover:
	Each vertex received at least $x$~colors and thus each element is covered at least $x$~times.
	Conversely, consider a set multicover~$\{F_i:i\in[a]\}$.
	By definition,~$F_i$ corresponds to an independent set of~$G$.
	Hence, by assigning color~$i$ to all vertices of~$F_i$ we obtain a solution for~$(G,a,x)$.
	
	Observe that since~$G$ may contain at most $3^{n/3}$~maximal independent sets~\cite{moonM1965}, we add at most $3^{n/3}$~sets.
	Our reduction  can be carried out by iterating over all vertex sets in $\Oh(n\cdot 2^n)$~time.
	Now, if \textsc{Set Multicover} has an algorithm with running time~$f(x)\cdot 2^{o(\log x)\cdot |\mathcal{U}|}\cdot\poly$ and since the number of sets we have to choose is~$a$, this implies an algorithm with running time~$f(a)\cdot 2^{o(\log a)\cdot n}\cdot\poly+n\cdot 2^n$ for \abColor.
	Since~$a\in\Theta(x^2 \log x)$, this then implies an algorithm running in $f(x)\cdot 2^{o(\log x^3)\cdot n}=f(x)\cdot 2^{o(\log x)\cdot n}$~time, a contradiction to the ETH~\cite{bonamy_tight_2019}.
\end{proof}

Now, we present our reduction for \pMTESlong.
In our reduction, we have a \emph{choice dimension} for each set~$F$ in the family~$\mathcal{F}$.
Furthermore, we have an \emph{element example} for each element of~$\mathcal{U}$.
Also, we set~$\ell\coloneqq 2a+1$ and~$S\coloneqq 2a+1$.
The main idea of our reduction is that there are exactly $a$~many trees cutting a choice dimension such that each element example is correctly classified in at least~$x$ of these trees.
We achieve this by adding some \emph{dummy dimensions} and further element examples so that each correct tree ensemble consists of exactly $\ell$~trees having exactly one inner node, as we show.

\begin{theorem}
	\label{thm:superexponential-lower-bound}
	Solving \pMTESlong\ in $f(\ell) \cdot 2^{o(\log \ell)\cdot n}\cdot \poly$~time would contradict the \ethlong.
\end{theorem}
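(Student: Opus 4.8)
The plan is to transfer the lower bound from \textsc{Set Multicover} (\cref{thm:superexponential-lower-bound-set-multi-cover}) to \pMTESlong{} via the reduction sketched in the paragraph preceding the theorem statement. Given a \textsc{Set Multicover} instance $(\mathcal{U}, \mathcal{F}, \dem \equiv x, a)$, I would construct a training data set with one \emph{choice dimension} per set $F \in \mathcal{F}$ and one \emph{element example} per $u \in \mathcal{U}$, setting $\ell \coloneqq 2a+1$ and $S \coloneqq 2a+1$. Each element example $e_u$ should be positioned so that cutting on the choice dimension of $F$ separates $e_u$ from the negative example(s) exactly when $u \in F$; that is, a single-cut tree on dimension $F$ classifies $e_u$ correctly if and only if $F$ covers $u$. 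Since $\ell = S$, an ensemble achieving the size bound must consist of exactly $\ell$ trees \emph{each having exactly one inner node}, so each tree corresponds to choosing a single dimension to cut. The demand $x$ then translates into the majority-vote requirement: element example $e_u$ is classified correctly by the ensemble iff it is classified correctly by a strict majority of the $2a+1$ trees, which should align with $u$ being covered at least $x$ times.

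The key steps, in order, are: (1) specify the coordinates of the element examples on the choice dimensions precisely so that the correct-classification pattern of a single-cut tree on dimension $F$ matches the incidence $u \in F$; (2) add \emph{dummy dimensions} and auxiliary element examples that force every tree in a size-$S$ solution ensemble to be a single-cut tree (rather than, say, one large tree doing all the work), thereby rigidly coupling ``tree'' with ``chosen set''; (3) calibrate the majority threshold so that the $\ell = 2a+1$ trees implement the demand-$x$ multicover condition — here the odd number $2a+1$ and the tie-breaking convention from the preliminaries will be used to make the vote boundary sharp. I would then prove the two directions of equivalence: from a size-$a$ multicover $\mathcal{F}'$ build $a$ choice-cut trees (padding to $2a+1$ trees with controlled trivial/dummy trees) that classify everything correctly; conversely, from a correct size-$(2a+1)$ ensemble, extract the $a$ dimensions cut and argue they form a multicover because each element example needs $\geq x$ correct votes among the meaningful trees.

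The final step is the running-time bookkeeping: the number of examples $n$ in the constructed instance is $|\mathcal{U}|$ plus the dummy examples, so I must ensure the padding is only polynomial (or at worst adds a controlled additive term) in $|\mathcal{U}|$, so that $n = |\mathcal{U}| + o(|\mathcal{U}|/\log x)$-type slack does not destroy the bound. Since $\ell = 2a+1 = \Theta(x^2 \log x) = \Theta(\poly(x))$, we have $\log \ell = \Theta(\log x)$, and an $f(\ell) \cdot 2^{o(\log \ell) \cdot n} \cdot \poly$ algorithm would yield an $f(x) \cdot 2^{o(\log x) \cdot |\mathcal{U}|} \cdot \poly$ algorithm for \textsc{Set Multicover}, contradicting \cref{thm:superexponential-lower-bound-set-multi-cover} and hence the ETH.

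The main obstacle I anticipate is step (2): the padding gadget. The reduction must guarantee that an optimal ensemble \emph{cannot} cheat by building a single deep tree that classifies everything (which would decouple the tree count from the set count and break the multicover correspondence). Because \pMTESlong{} only constrains the \emph{total} size $S$ and not the per-tree size, one must engineer the dummy dimensions and extra examples so that any tree with two or more inner nodes would either exceed the budget $S = 2a+1$ when combined with the forced structure, or fail to produce a correct majority vote. Getting this rigidity right — so that exactly single-cut trees are feasible and each meaningful tree contributes exactly one covering set — while keeping the example count polynomial in $|\mathcal{U}|$, is the delicate heart of the construction.
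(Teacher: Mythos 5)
Your overall route is the same as the paper's: reduce from \textsc{Set Multicover} with uniform demand $x$ (via \cref{thm:superexponential-lower-bound-set-multi-cover}), set $\ell = S = 2a+1$, introduce choice dimensions and element examples, calibrate the majority vote so that element examples need $x$ correct votes from choice-cutting trees, and close with the running-time bookkeeping (which you carry out essentially correctly). However, there is a genuine gap at exactly the step you flag as the ``delicate heart'': the rigidity gadget is never constructed, and the shortcut you offer in its place is false. You assert that ``since $\ell = S$, an ensemble achieving the size bound must consist of exactly $\ell$ trees each having exactly one inner node.'' This does not follow: $S$ bounds the \emph{total} number of inner nodes, and trees with zero inner nodes (single leaves) are legal, so a budget of $2a+1$ is equally consistent with one tree carrying $2a+1$ cuts and $2a$ trivial trees. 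Ruling this out is precisely the content of Steps~1 and~2 of the paper's proof, and it is where almost all of the construction lives.

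Concretely, the paper needs machinery your sketch does not have: (i) $a$ separate \emph{copies} of the choice dimensions (the choice-$j$ dimensions for $j\in[a]$), not one dimension per set as you propose --- with a single copy one can only force \emph{some} cut among the choice dimensions, never $a$ of them; (ii) $a+1$ forcing examples $r_i$ and a validation example $b_\val$, where $r_i$ and $b_\val$ differ in exactly one dummy dimension $d_i$, forcing a cut in every dummy dimension; (iii) choosing examples $b^j_i$ that differ from $r_i$ only in the choice-$j$ dimensions, forcing a cut in every choice-$j$ family; by the budget $S=2a+1$, (ii) and (iii) then account for \emph{all} cuts. Even after that, ``one cut per tree'' still requires a separate case analysis, which is what the verifying examples $r^j_{i,t}$, the test examples $r^j$, and the enforcing example $b_\enf$ are for: if some tree had two inner nodes, one of the forced cuts would lie inside one of its subtrees, and one can exhibit a differently labeled pair of examples, differing only in the dimension of that cut, that the ensemble can then no longer separate. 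Your vote calibration also needs the concrete form the paper gives it ($b_u[d_i]=1$ exactly for $i\le a+1-x$, so dummy trees contribute exactly $a+1-x$ correct votes to each element example), and the padding must be polynomial in $x$ alone --- an additive term that is a function of the parameter --- rather than ``polynomial in $|\mathcal{U}|$'' as you allow, since $2^{o(\log x)\cdot\poly(|\mathcal{U}|)}$ could not be absorbed into $f(x)$. As it stands, your proposal is a correct high-level plan whose central construction, and the correctness proof that hinges on it, are left open.
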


\begin{proof}
	We reduce from \textsc{Set Multicover} where each demand is~$x$, which cannot be solved in $f(x)\cdot 2^{o(\log x)\cdot |\mathcal{U}|}$~time, even if~$k=\Theta(x^2 \log x)$ unless the ETH fails, according to \Cref{thm:superexponential-lower-bound-set-multi-cover}.

	
	
	\begin{figure*}[t]
		\centering
		\begin{tikzpicture}[scale=0.41]
		\node (A) at (-2,4.3) {};
		\node (B) at (27.7,4.3) {};
		\path [-,line width=0.3mm] (A) edge (B);
		\node (A) at (-1,1.3) {};
		\node (B) at (27.7,1.3) {};
		\path [-,line width=0.3mm] (A) edge (B);
		\node (A) at (-1,-1.7) {};
		\node (B) at (27.7,-1.7) {};
		\path [-,line width=0.3mm] (A) edge (B);
		\node (A) at (-1,-2.7) {};
		\node (B) at (27.7,-2.7) {};
		\path [-,line width=0.3mm] (A) edge (B);
		\node (A) at (-2,-5.7) {};
		\node (B) at (27.7,-5.7) {};
		\path [-,line width=0.3mm] (A) edge (B);
		
		\node (C) at (26.5,-7.7) {};
		\node (D) at (26.5,12.5) {};
		\path [-,line width=0.3mm] (C) edge (D);
		\node (C) at (22.6,-7.7) {};
		\node (D) at (22.6,12.5) {};
		\path [-,line width=0.3mm] (C) edge (D);
		\node (C) at (15.5,-7.7) {};
		\node (D) at (15.5,12.5) {};
		\path [-,line width=0.3mm] (C) edge (D);
		\node (C) at (9.65,-7.7) {};
		\node (D) at (9.65,12.5) {};
		\path [-,line width=0.3mm] (C) edge (D);
		\node (C) at (4.5,-7.7) {};
		\node (D) at (4.5,12.5) {};
		\path [-,line width=0.3mm] (C) edge (D);
		\node (C) at (3.5,-7.7) {};
		\node (D) at (3.5,12.5) {};
		\path [-,line width=0.3mm] (C) edge (D);
		\node (C) at (0.5,-7) {};
		\node (D) at (0.5,12.5) {};
		\path [-,line width=0.3mm] (C) edge (D);
		\tiny
		\node[label=above:{\rotatebox{90}{dummy dimensions}}] at (-1.9,6){};
		\node[label=above:{\rotatebox{90}{choice dimensions}}] at (-1.9,-3.7){};
		\node[label=above:{forcing examples}] at (7,-7.9){};
		\node[label=above:{element examples}] at (1.0,-7.9){};
		\node[label=above:{choosing examples}] at (12.5,-7.9){};
		\node[label=above:{verifying examples}] at (19,-7.9){};
		\node[label=above:{test examples}] at (24.5,-7.9){};
		
		\draw[pattern=north east lines, pattern color=blue, opacity=0.4] (0.5,-6.8) rectangle (4.5,-5.7);
		\draw[pattern=north west lines, pattern color=red, opacity=0.4] (4.5,-6.8) rectangle (9.65,-5.7);
		\draw[pattern=north east lines, pattern color=blue, opacity=0.4] (9.65,-6.8) rectangle (15.5,-5.7);
		\draw[pattern=north west lines, pattern color=red, opacity=0.4] (15.5,-6.8) rectangle (26.5,-5.7);
		\draw[pattern=north east lines, pattern color=blue, opacity=0.4] (26.5,-6.8) rectangle (27.5,-5.7);
		
		\tiny
		\node[label=above:{$b_\val$}] at (4,-7){};
		\node[label=above:{$r_1$}] at (5,-7){};
		\node[label=above:{$r_2$}] at (6,-7){};
		\node[label=above:{$r_3$}] at (7,-7){};
		\node[label=above:{$\cdots$}] at (7.8,-7){};
		\node[label=above:{$r_{a+1}$}] at (8.9,-7){};
		\node[label=above:{$b_u$}] at (1,-7){};
		\node[label=above:{$b_v$}] at (2,-7){};
		\node[label=above:{$b_w$}] at (3,-7){};
		\node[label=above:{$b^1_1$}] at (10,-7){};
		\node[label=above:{$b^1_2$}] at (11,-7){};
		\node[label=above:{$\cdots$}] at (11.8,-7){};
		\node[label=above:{$b^1_{a+1}$}] at (12.8,-7){};
		\node[label=above:{$b^2_1$}] at (13.8,-7){};
		\node[label=above:{$b^2_{a+1}$}] at (14.8,-7){};
		\node[label=above:{$r^1_{1,2}$}] at (16.1,-7){};
		\node[label=above:{$r^1_{1,3}$}] at (17.1,-7){};
		\node[label=above:{$r^1_{a,a+1}$}] at (18.6,-7){};
		\node[label=above:{$r^a_{1,2}$}] at (20,-7){};
		\node[label=above:{$r^a_{a,a+1}$}] at (21.6,-7){};
		\node[label=above:{$r^1$}] at (23,-6.9){};
		\node[label=above:{$r^2$}] at (24,-6.9){};
		\node[label=above:{$\cdots$}] at (25,-6.9){};
		\node[label=above:{$r^a$}] at (26,-6.9){};
		\node[label=above:{$b_\enf$}] at (27,-7){};
		
		\node[label=above:{$d_1$}] at (0,4){};
		\node[label=above:{$d_2$}] at (0,5){};
		\node[label=above:{$d_3$}] at (0,6){};
		\node[label=above:{$\vdots$}] at (0,7){};
		\node[label=above:{$d_{a+1-x}$}] at (-0.6,8){};
		\node[label=above:{$d_{a+2-x}$}] at (-0.6,9){};
		\node[label=above:{$\vdots$}] at (0,10){};
		\node[label=above:{$d_{a+1}$}] at (-0.3,11){};
		\node[label=above:{$d^1_{F_1}$}] at (0,1){};
		\node[label=above:{$d^1_{F_2}$}] at (0,2){};
		\node[label=above:{$d^1_{F_3}$}] at (0,3){};
		\node[label=above:{$d^2_{F_1}$}] at (0,-2){};
		\node[label=above:{$d^2_{F_2}$}] at (0,-1){};
		\node[label=above:{$d^2_{F_3}$}] at (0,0){};
		\node[label=above:{$\vdots$}] at (0,-3){};
		\node[label=above:{$d^a_{F_1}$}] at (0,-6){};
		\node[label=above:{$d^a_{F_2}$}] at (0,-5){};
		\node[label=above:{$d^a_{F_3}$}] at (0,-4){};
		
		\node[label=above:{$0$}] at (1,11.1){};
		\node[label=above:{$\vdots$}] at (1,10){};
		\node[label=above:{$0$}] at (1,9.1){};
		\node[label=above:{$0$}] at (1,8.1){};
		\node[label=above:{$\vdots$}] at (1,7){};
		\node[label=above:{$1$}] at (1,6.1){};
		\node[label=above:{$1$}] at (1,5.1){};
		\node[label=above:{$1$}] at (1,4.1){};
		\node[label=above:{$0$}] at (1,3.1){};
		\node[label=above:{$0$}] at (1,2.1){};
		\node[label=above:{$1$}] at (1,1.1){};
		\node[label=above:{$0$}] at (1,0.1){};
		\node[label=above:{$0$}] at (1,-0.9){};
		\node[label=above:{$1$}] at (1,-1.9){};
		\node[label=above:{$\vdots$}] at (1,-3){};
		\node[label=above:{$0$}] at (1,-3.9){};
		\node[label=above:{$0$}] at (1,-4.9){};
		\node[label=above:{$1$}] at (1,-5.9){};
		
		\node[label=above:{$0$}] at (2,11.1){};
		\node[label=above:{$\vdots$}] at (2,10){};
		\node[label=above:{$0$}] at (2,9.1){};
		\node[label=above:{$0$}] at (2,8.1){};
		\node[label=above:{$\vdots$}] at (2,7){};
		\node[label=above:{$1$}] at (2,6.1){};
		\node[label=above:{$1$}] at (2,5.1){};
		\node[label=above:{$1$}] at (2,4.1){};
		\node[label=above:{$1$}] at (2,3.1){};
		\node[label=above:{$1$}] at (2,2.1){};
		\node[label=above:{$0$}] at (2,1.1){};
		\node[label=above:{$1$}] at (2,0.1){};
		\node[label=above:{$1$}] at (2,-0.9){};
		\node[label=above:{$0$}] at (2,-1.9){};
		\node[label=above:{$\vdots$}] at (2,-3){};
		\node[label=above:{$1$}] at (2,-3.9){};
		\node[label=above:{$1$}] at (2,-4.9){};
		\node[label=above:{$0$}] at (2,-5.9){};
		
		\node[label=above:{$0$}] at (3,11.1){};
		\node[label=above:{$\vdots$}] at (3,10){};
		\node[label=above:{$0$}] at (3,9.1){};
		\node[label=above:{$0$}] at (3,8.1){};
		\node[label=above:{$\vdots$}] at (3,7){};
		\node[label=above:{$1$}] at (3,6.1){};
		\node[label=above:{$1$}] at (3,5.1){};
		\node[label=above:{$1$}] at (3,4.1){};
		\node[label=above:{$1$}] at (3,3.1){};
		\node[label=above:{$0$}] at (3,2.1){};
		\node[label=above:{$1$}] at (3,1.1){};
		\node[label=above:{$1$}] at (3,0.1){};
		\node[label=above:{$0$}] at (3,-0.9){};
		\node[label=above:{$1$}] at (3,-1.9){};
		\node[label=above:{$\vdots$}] at (3,-3){};
		\node[label=above:{$1$}] at (3,-3.9){};
		\node[label=above:{$0$}] at (3,-4.9){};
		\node[label=above:{$1$}] at (3,-5.9){};
		
		\node[label=above:{$1$}] at (4,11.1){};
		\node[label=above:{$\vdots$}] at (4,10){};
		\node[label=above:{$1$}] at (4,9.1){};
		\node[label=above:{$1$}] at (4,8.1){};
		\node[label=above:{$\vdots$}] at (4,7){};
		\node[label=above:{$1$}] at (4,6.1){};
		\node[label=above:{$1$}] at (4,5.1){};
		\node[label=above:{$1$}] at (4,4.1){};
		\node[label=above:{$0$}] at (4,3.1){};
		\node[label=above:{$0$}] at (4,2.1){};
		\node[label=above:{$0$}] at (4,1.1){};
		\node[label=above:{$0$}] at (4,0.1){};
		\node[label=above:{$0$}] at (4,-0.9){};
		\node[label=above:{$0$}] at (4,-1.9){};
		\node[label=above:{$\vdots$}] at (4,-3){};
		\node[label=above:{$0$}] at (4,-3.9){};
		\node[label=above:{$0$}] at (4,-4.9){};
		\node[label=above:{$0$}] at (4,-5.9){};
		
		\node[label=above:{$1$}] at (5,11.1){};
		\node[label=above:{$\vdots$}] at (5,10){};
		\node[label=above:{$1$}] at (5,9.1){};
		\node[label=above:{$1$}] at (5,8.1){};
		\node[label=above:{$\vdots$}] at (5,7){};
		\node[label=above:{$1$}] at (5,6.1){};
		\node[label=above:{$1$}] at (5,5.1){};
		\node[label=above:{$0$}] at (5,4.1){};
		\node[label=above:{$0$}] at (5,3.1){};
		\node[label=above:{$0$}] at (5,2.1){};
		\node[label=above:{$0$}] at (5,1.1){};
		\node[label=above:{$0$}] at (5,0.1){};
		\node[label=above:{$0$}] at (5,-0.9){};
		\node[label=above:{$0$}] at (5,-1.9){};
		\node[label=above:{$\vdots$}] at (5,-3){};
		\node[label=above:{$0$}] at (5,-3.9){};
		\node[label=above:{$0$}] at (5,-4.9){};
		\node[label=above:{$0$}] at (5,-5.9){};
		
		\node[label=above:{$1$}] at (6,11.1){};
		\node[label=above:{$\vdots$}] at (6,10){};
		\node[label=above:{$1$}] at (6,9.1){};
		\node[label=above:{$1$}] at (6,8.1){};
		\node[label=above:{$\vdots$}] at (6,7){};
		\node[label=above:{$1$}] at (6,6.1){};
		\node[label=above:{$0$}] at (6,5.1){};
		\node[label=above:{$1$}] at (6,4.1){};
		\node[label=above:{$0$}] at (6,3.1){};
		\node[label=above:{$0$}] at (6,2.1){};
		\node[label=above:{$0$}] at (6,1.1){};
		\node[label=above:{$0$}] at (6,0.1){};
		\node[label=above:{$0$}] at (6,-0.9){};
		\node[label=above:{$0$}] at (6,-1.9){};
		\node[label=above:{$\vdots$}] at (6,-3){};
		\node[label=above:{$0$}] at (6,-3.9){};
		\node[label=above:{$0$}] at (6,-4.9){};
		\node[label=above:{$0$}] at (6,-5.9){};
		
		\node[label=above:{$1$}] at (7,11.1){};
		\node[label=above:{$\vdots$}] at (7,10){};
		\node[label=above:{$1$}] at (7,9.1){};
		\node[label=above:{$1$}] at (7,8.1){};
		\node[label=above:{$\vdots$}] at (7,7){};
		\node[label=above:{$0$}] at (7,6.1){};
		\node[label=above:{$1$}] at (7,5.1){};
		\node[label=above:{$1$}] at (7,4.1){};
		\node[label=above:{$0$}] at (7,3.1){};
		\node[label=above:{$0$}] at (7,2.1){};
		\node[label=above:{$0$}] at (7,1.1){};
		\node[label=above:{$0$}] at (7,0.1){};
		\node[label=above:{$0$}] at (7,-0.9){};
		\node[label=above:{$0$}] at (7,-1.9){};
		\node[label=above:{$\vdots$}] at (7,-3){};
		\node[label=above:{$0$}] at (7,-3.9){};
		\node[label=above:{$0$}] at (7,-4.9){};
		\node[label=above:{$0$}] at (7,-5.9){};
		
		\node[label=above:{$\cdots$}] at (8,11.1){};
		\node[label=above:{$\adots$}] at (8,10){};
		\node[label=above:{$\cdots$}] at (8,9.1){};
		\node[label=above:{$\cdots$}] at (8,8.1){};
		\node[label=above:{$\adots$}] at (8,7){};
		\node[label=above:{$\cdots$}] at (8,6.1){};
		\node[label=above:{$\cdots$}] at (8,5.1){};
		\node[label=above:{$\cdots$}] at (8,4.1){};
		\node[label=above:{$\cdots$}] at (8,3.1){};
		\node[label=above:{$\cdots$}] at (8,2.1){};
		\node[label=above:{$\cdots$}] at (8,1.1){};
		\node[label=above:{$\cdots$}] at (8,0.1){};
		\node[label=above:{$\cdots$}] at (8,-0.9){};
		\node[label=above:{$\cdots$}] at (8,-1.9){};
		\node[label=above:{$\adots$}] at (8,-3){};
		\node[label=above:{$\cdots$}] at (8,-3.9){};
		\node[label=above:{$\cdots$}] at (8,-4.9){};
		\node[label=above:{$\cdots$}] at (8,-5.9){};
		
		\node[label=above:{$0$}] at (9,11.1){};
		\node[label=above:{$\vdots$}] at (9,10){};
		\node[label=above:{$1$}] at (9,9.1){};
		\node[label=above:{$1$}] at (9,8.1){};
		\node[label=above:{$\vdots$}] at (9,7){};
		\node[label=above:{$1$}] at (9,6.1){};
		\node[label=above:{$1$}] at (9,5.1){};
		\node[label=above:{$1$}] at (9,4.1){};
		\node[label=above:{$0$}] at (9,3.1){};
		\node[label=above:{$0$}] at (9,2.1){};
		\node[label=above:{$0$}] at (9,1.1){};
		\node[label=above:{$0$}] at (9,0.1){};
		\node[label=above:{$0$}] at (9,-0.9){};
		\node[label=above:{$0$}] at (9,-1.9){};
		\node[label=above:{$\vdots$}] at (9,-3){};
		\node[label=above:{$0$}] at (9,-3.9){};
		\node[label=above:{$0$}] at (9,-4.9){};
		\node[label=above:{$0$}] at (9,-5.9){};
		
		\node[label=above:{$1$}] at (10,11.1){};
		\node[label=above:{$\vdots$}] at (10,10){};
		\node[label=above:{$1$}] at (10,9.1){};
		\node[label=above:{$1$}] at (10,8.1){};
		\node[label=above:{$\vdots$}] at (10,7){};
		\node[label=above:{$1$}] at (10,6.1){};
		\node[label=above:{$1$}] at (10,5.1){};
		\node[label=above:{$0$}] at (10,4.1){};
		\node[label=above:{$1$}] at (10,3.1){};
		\node[label=above:{$1$}] at (10,2.1){};
		\node[label=above:{$1$}] at (10,1.1){};
		\node[label=above:{$0$}] at (10,0.1){};
		\node[label=above:{$0$}] at (10,-0.9){};
		\node[label=above:{$0$}] at (10,-1.9){};
		\node[label=above:{$\vdots$}] at (10,-3){};
		\node[label=above:{$0$}] at (10,-3.9){};
		\node[label=above:{$0$}] at (10,-4.9){};
		\node[label=above:{$0$}] at (10,-5.9){};
		
		\node[label=above:{$1$}] at (11,11.1){};
		\node[label=above:{$\vdots$}] at (11,10){};
		\node[label=above:{$1$}] at (11,9.1){};
		\node[label=above:{$1$}] at (11,8.1){};
		\node[label=above:{$\vdots$}] at (11,7){};
		\node[label=above:{$1$}] at (11,6.1){};
		\node[label=above:{$0$}] at (11,5.1){};
		\node[label=above:{$1$}] at (11,4.1){};
		\node[label=above:{$1$}] at (11,3.1){};
		\node[label=above:{$1$}] at (11,2.1){};
		\node[label=above:{$1$}] at (11,1.1){};
		\node[label=above:{$0$}] at (11,0.1){};
		\node[label=above:{$0$}] at (11,-0.9){};
		\node[label=above:{$0$}] at (11,-1.9){};
		\node[label=above:{$\vdots$}] at (11,-3){};
		\node[label=above:{$0$}] at (11,-3.9){};
		\node[label=above:{$0$}] at (11,-4.9){};
		\node[label=above:{$0$}] at (11,-5.9){};
		
		\node[label=above:{$\cdots$}] at (12,11.1){};
		\node[label=above:{$\adots$}] at (12,10){};
		\node[label=above:{$\cdots$}] at (12,9.1){};
		\node[label=above:{$\cdots$}] at (12,8.1){};
		\node[label=above:{$\adots$}] at (12,7){};
		\node[label=above:{$\cdots$}] at (12,6.1){};
		\node[label=above:{$\cdots$}] at (12,5.1){};
		\node[label=above:{$\cdots$}] at (12,4.1){};
		\node[label=above:{$\cdots$}] at (12,3.1){};
		\node[label=above:{$\cdots$}] at (12,2.1){};
		\node[label=above:{$\cdots$}] at (12,1.1){};
		\node[label=above:{$\cdots$}] at (12,0.1){};
		\node[label=above:{$\cdots$}] at (12,-0.9){};
		\node[label=above:{$\cdots$}] at (12,-1.9){};
		\node[label=above:{$\adots$}] at (12,-3){};
		\node[label=above:{$\cdots$}] at (12,-3.9){};
		\node[label=above:{$\cdots$}] at (12,-4.9){};
		\node[label=above:{$\cdots$}] at (12,-5.9){};
		
		\node[label=above:{$0$}] at (13,11.1){};
		\node[label=above:{$\vdots$}] at (13,10){};
		\node[label=above:{$1$}] at (13,9.1){};
		\node[label=above:{$1$}] at (13,8.1){};
		\node[label=above:{$\vdots$}] at (13,7){};
		\node[label=above:{$1$}] at (13,6.1){};
		\node[label=above:{$1$}] at (13,5.1){};
		\node[label=above:{$1$}] at (13,4.1){};
		\node[label=above:{$1$}] at (13,3.1){};
		\node[label=above:{$1$}] at (13,2.1){};
		\node[label=above:{$1$}] at (13,1.1){};
		\node[label=above:{$0$}] at (13,0.1){};
		\node[label=above:{$0$}] at (13,-0.9){};
		\node[label=above:{$0$}] at (13,-1.9){};
		\node[label=above:{$\vdots$}] at (13,-3){};
		\node[label=above:{$0$}] at (13,-3.9){};
		\node[label=above:{$0$}] at (13,-4.9){};
		\node[label=above:{$0$}] at (13,-5.9){};
		
		\node[label=above:{$1$}] at (14,11.1){};
		\node[label=above:{$\vdots$}] at (14,10){};
		\node[label=above:{$1$}] at (14,9.1){};
		\node[label=above:{$1$}] at (14,8.1){};
		\node[label=above:{$\vdots$}] at (14,7){};
		\node[label=above:{$1$}] at (14,6.1){};
		\node[label=above:{$1$}] at (14,5.1){};
		\node[label=above:{$0$}] at (14,4.1){};
		\node[label=above:{$0$}] at (14,3.1){};
		\node[label=above:{$0$}] at (14,2.1){};
		\node[label=above:{$0$}] at (14,1.1){};
		\node[label=above:{$1$}] at (14,0.1){};
		\node[label=above:{$1$}] at (14,-0.9){};
		\node[label=above:{$1$}] at (14,-1.9){};
		\node[label=above:{$\vdots$}] at (14,-3){};
		\node[label=above:{$0$}] at (14,-3.9){};
		\node[label=above:{$0$}] at (14,-4.9){};
		\node[label=above:{$0$}] at (14,-5.9){};
		
		\node[label=above:{$0$}] at (15,11.1){};
		\node[label=above:{$\vdots$}] at (15,10){};
		\node[label=above:{$1$}] at (15,9.1){};
		\node[label=above:{$1$}] at (15,8.1){};
		\node[label=above:{$\vdots$}] at (15,7){};
		\node[label=above:{$1$}] at (15,6.1){};
		\node[label=above:{$1$}] at (15,5.1){};
		\node[label=above:{$1$}] at (15,4.1){};
		\node[label=above:{$0$}] at (15,3.1){};
		\node[label=above:{$0$}] at (15,2.1){};
		\node[label=above:{$0$}] at (15,1.1){};
		\node[label=above:{$1$}] at (15,0.1){};
		\node[label=above:{$1$}] at (15,-0.9){};
		\node[label=above:{$1$}] at (15,-1.9){};
		\node[label=above:{$\vdots$}] at (15,-3){};
		\node[label=above:{$0$}] at (15,-3.9){};
		\node[label=above:{$0$}] at (15,-4.9){};
		\node[label=above:{$0$}] at (15,-5.9){};
		
		\node[label=above:{$1$}] at (16,11.1){};
		\node[label=above:{$\vdots$}] at (16,10){};
		\node[label=above:{$1$}] at (16,9.1){};
		\node[label=above:{$1$}] at (16,8.1){};
		\node[label=above:{$\vdots$}] at (16,7){};
		\node[label=above:{$1$}] at (16,6.1){};
		\node[label=above:{$0$}] at (16,5.1){};
		\node[label=above:{$0$}] at (16,4.1){};
		\node[label=above:{$1$}] at (16,3.1){};
		\node[label=above:{$1$}] at (16,2.1){};
		\node[label=above:{$1$}] at (16,1.1){};
		\node[label=above:{$0$}] at (16,0.1){};
		\node[label=above:{$0$}] at (16,-0.9){};
		\node[label=above:{$0$}] at (16,-1.9){};
		\node[label=above:{$\vdots$}] at (16,-3){};
		\node[label=above:{$0$}] at (16,-3.9){};
		\node[label=above:{$0$}] at (16,-4.9){};
		\node[label=above:{$0$}] at (16,-5.9){};
		
		\node[label=above:{$1$}] at (17,11.1){};
		\node[label=above:{$\vdots$}] at (17,10){};
		\node[label=above:{$1$}] at (17,9.1){};
		\node[label=above:{$1$}] at (17,8.1){};
		\node[label=above:{$\vdots$}] at (17,7){};
		\node[label=above:{$0$}] at (17,6.1){};
		\node[label=above:{$1$}] at (17,5.1){};
		\node[label=above:{$0$}] at (17,4.1){};
		\node[label=above:{$1$}] at (17,3.1){};
		\node[label=above:{$1$}] at (17,2.1){};
		\node[label=above:{$1$}] at (17,1.1){};
		\node[label=above:{$0$}] at (17,0.1){};
		\node[label=above:{$0$}] at (17,-0.9){};
		\node[label=above:{$0$}] at (17,-1.9){};
		\node[label=above:{$\vdots$}] at (17,-3){};
		\node[label=above:{$0$}] at (17,-3.9){};
		\node[label=above:{$0$}] at (17,-4.9){};
		\node[label=above:{$0$}] at (17,-5.9){};
		
		\node[label=above:{$\cdots$}] at (18,11.1){};
		\node[label=above:{$\adots$}] at (18,10){};
		\node[label=above:{$\cdots$}] at (18,9.1){};
		\node[label=above:{$\cdots$}] at (18,8.1){};
		\node[label=above:{$\adots$}] at (18,7){};
		\node[label=above:{$\cdots$}] at (18,6.1){};
		\node[label=above:{$\cdots$}] at (18,5.1){};
		\node[label=above:{$\cdots$}] at (18,4.1){};
		\node[label=above:{$\cdots$}] at (18,3.1){};
		\node[label=above:{$\cdots$}] at (18,2.1){};
		\node[label=above:{$\cdots$}] at (18,1.1){};
		\node[label=above:{$\cdots$}] at (18,0.1){};
		\node[label=above:{$\cdots$}] at (18,-0.9){};
		\node[label=above:{$\cdots$}] at (18,-1.9){};
		\node[label=above:{$\adots$}] at (18,-3){};
		\node[label=above:{$\cdots$}] at (18,-3.9){};
		\node[label=above:{$\cdots$}] at (18,-4.9){};
		\node[label=above:{$\cdots$}] at (18,-5.9){};
		
		\node[label=above:{$0$}] at (19,11.1){};
		\node[label=above:{$\vdots$}] at (19,10){};
		\node[label=above:{$1$}] at (19,9.1){};
		\node[label=above:{$1$}] at (19,8.1){};
		\node[label=above:{$\vdots$}] at (19,7){};
		\node[label=above:{$1$}] at (19,6.1){};
		\node[label=above:{$1$}] at (19,5.1){};
		\node[label=above:{$1$}] at (19,4.1){};
		\node[label=above:{$1$}] at (19,3.1){};
		\node[label=above:{$1$}] at (19,2.1){};
		\node[label=above:{$1$}] at (19,1.1){};
		\node[label=above:{$0$}] at (19,0.1){};
		\node[label=above:{$0$}] at (19,-0.9){};
		\node[label=above:{$0$}] at (19,-1.9){};
		\node[label=above:{$\vdots$}] at (19,-3){};
		\node[label=above:{$0$}] at (19,-3.9){};
		\node[label=above:{$0$}] at (19,-4.9){};
		\node[label=above:{$0$}] at (19,-5.9){};
		
		\node[label=above:{$1$}] at (20,11.1){};
		\node[label=above:{$\vdots$}] at (20,10){};
		\node[label=above:{$1$}] at (20,9.1){};
		\node[label=above:{$1$}] at (20,8.1){};
		\node[label=above:{$\vdots$}] at (20,7){};
		\node[label=above:{$1$}] at (20,6.1){};
		\node[label=above:{$0$}] at (20,5.1){};
		\node[label=above:{$0$}] at (20,4.1){};
		\node[label=above:{$0$}] at (20,3.1){};
		\node[label=above:{$0$}] at (20,2.1){};
		\node[label=above:{$0$}] at (20,1.1){};
		\node[label=above:{$0$}] at (20,0.1){};
		\node[label=above:{$0$}] at (20,-0.9){};
		\node[label=above:{$0$}] at (20,-1.9){};
		\node[label=above:{$\vdots$}] at (20,-3){};
		\node[label=above:{$1$}] at (20,-3.9){};
		\node[label=above:{$1$}] at (20,-4.9){};
		\node[label=above:{$1$}] at (20,-5.9){};
		
		\node[label=above:{$\cdots$}] at (21,11.1){};
		\node[label=above:{$\adots$}] at (21,10){};
		\node[label=above:{$\cdots$}] at (21,9.1){};
		\node[label=above:{$\cdots$}] at (21,8.1){};
		\node[label=above:{$\adots$}] at (21,7){};
		\node[label=above:{$\cdots$}] at (21,6.1){};
		\node[label=above:{$\cdots$}] at (21,5.1){};
		\node[label=above:{$\cdots$}] at (21,4.1){};
		\node[label=above:{$\cdots$}] at (21,3.1){};
		\node[label=above:{$\cdots$}] at (21,2.1){};
		\node[label=above:{$\cdots$}] at (21,1.1){};
		\node[label=above:{$\cdots$}] at (21,0.1){};
		\node[label=above:{$\cdots$}] at (21,-0.9){};
		\node[label=above:{$\cdots$}] at (21,-1.9){};
		\node[label=above:{$\adots$}] at (21,-3){};
		\node[label=above:{$\cdots$}] at (21,-3.9){};
		\node[label=above:{$\cdots$}] at (21,-4.9){};
		\node[label=above:{$\cdots$}] at (21,-5.9){};
		
		\node[label=above:{$0$}] at (22,11.1){};
		\node[label=above:{$\vdots$}] at (22,10){};
		\node[label=above:{$1$}] at (22,9.1){};
		\node[label=above:{$1$}] at (22,8.1){};
		\node[label=above:{$\vdots$}] at (22,7){};
		\node[label=above:{$1$}] at (22,6.1){};
		\node[label=above:{$1$}] at (22,5.1){};
		\node[label=above:{$1$}] at (22,4.1){};
		\node[label=above:{$0$}] at (22,3.1){};
		\node[label=above:{$0$}] at (22,2.1){};
		\node[label=above:{$0$}] at (22,1.1){};
		\node[label=above:{$0$}] at (22,0.1){};
		\node[label=above:{$0$}] at (22,-0.9){};
		\node[label=above:{$0$}] at (22,-1.9){};
		\node[label=above:{$\vdots$}] at (22,-3){};
		\node[label=above:{$1$}] at (22,-3.9){};
		\node[label=above:{$1$}] at (22,-4.9){};
		\node[label=above:{$1$}] at (22,-5.9){};
		
		\node[label=above:{$0$}] at (23,11.1){};
		\node[label=above:{$\vdots$}] at (23,10){};
		\node[label=above:{$0$}] at (23,9.1){};
		\node[label=above:{$0$}] at (23,8.1){};
		\node[label=above:{$\vdots$}] at (23,7){};
		\node[label=above:{$0$}] at (23,6.1){};
		\node[label=above:{$0$}] at (23,5.1){};
		\node[label=above:{$1$}] at (23,4.1){};
		\node[label=above:{$0$}] at (23,3.1){};
		\node[label=above:{$0$}] at (23,2.1){};
		\node[label=above:{$0$}] at (23,1.1){};
		\node[label=above:{$1$}] at (23,0.1){};
		\node[label=above:{$1$}] at (23,-0.9){};
		\node[label=above:{$1$}] at (23,-1.9){};
		\node[label=above:{$\vdots$}] at (23,-3){};
		\node[label=above:{$1$}] at (23,-3.9){};
		\node[label=above:{$1$}] at (23,-4.9){};
		\node[label=above:{$1$}] at (23,-5.9){};
		
		\node[label=above:{$0$}] at (24,11.1){};
		\node[label=above:{$\vdots$}] at (24,10){};
		\node[label=above:{$0$}] at (24,9.1){};
		\node[label=above:{$0$}] at (24,8.1){};
		\node[label=above:{$\vdots$}] at (24,7){};
		\node[label=above:{$0$}] at (24,6.1){};
		\node[label=above:{$0$}] at (24,5.1){};
		\node[label=above:{$1$}] at (24,4.1){};
		\node[label=above:{$1$}] at (24,3.1){};
		\node[label=above:{$1$}] at (24,2.1){};
		\node[label=above:{$1$}] at (24,1.1){};
		\node[label=above:{$0$}] at (24,0.1){};
		\node[label=above:{$0$}] at (24,-0.9){};
		\node[label=above:{$0$}] at (24,-1.9){};
		\node[label=above:{$\vdots$}] at (24,-3){};
		\node[label=above:{$1$}] at (24,-3.9){};
		\node[label=above:{$1$}] at (24,-4.9){};
		\node[label=above:{$1$}] at (24,-5.9){};
		
		\node[label=above:{$\cdots$}] at (25,11.1){};
		\node[label=above:{$\adots$}] at (25,10){};
		\node[label=above:{$\cdots$}] at (25,9.1){};
		\node[label=above:{$\cdots$}] at (25,8.1){};
		\node[label=above:{$\adots$}] at (25,7){};
		\node[label=above:{$\cdots$}] at (25,6.1){};
		\node[label=above:{$\cdots$}] at (25,5.1){};
		\node[label=above:{$\cdots$}] at (25,4.1){};
		\node[label=above:{$\cdots$}] at (25,3.1){};
		\node[label=above:{$\cdots$}] at (25,2.1){};
		\node[label=above:{$\cdots$}] at (25,1.1){};
		\node[label=above:{$\cdots$}] at (25,0.1){};
		\node[label=above:{$\cdots$}] at (25,-0.9){};
		\node[label=above:{$\cdots$}] at (25,-1.9){};
		\node[label=above:{$\adots$}] at (25,-3){};
		\node[label=above:{$\cdots$}] at (25,-3.9){};
		\node[label=above:{$\cdots$}] at (25,-4.9){};
		\node[label=above:{$\cdots$}] at (25,-5.9){};
		
		\node[label=above:{$0$}] at (26,11.1){};
		\node[label=above:{$\vdots$}] at (26,10){};
		\node[label=above:{$0$}] at (26,9.1){};
		\node[label=above:{$0$}] at (26,8.1){};
		\node[label=above:{$\vdots$}] at (26,7){};
		\node[label=above:{$0$}] at (26,6.1){};
		\node[label=above:{$0$}] at (26,5.1){};
		\node[label=above:{$1$}] at (26,4.1){};
		\node[label=above:{$1$}] at (26,3.1){};
		\node[label=above:{$1$}] at (26,2.1){};
		\node[label=above:{$1$}] at (26,1.1){};
		\node[label=above:{$1$}] at (26,0.1){};
		\node[label=above:{$1$}] at (26,-0.9){};
		\node[label=above:{$1$}] at (26,-1.9){};
		\node[label=above:{$\vdots$}] at (26,-3){};
		\node[label=above:{$0$}] at (26,-3.9){};
		\node[label=above:{$0$}] at (26,-4.9){};
		\node[label=above:{$0$}] at (26,-5.9){};
		
		\node[label=above:{$0$}] at (27,11.1){};
		\node[label=above:{$\vdots$}] at (27,10){};
		\node[label=above:{$0$}] at (27,9.1){};
		\node[label=above:{$0$}] at (27,8.1){};
		\node[label=above:{$\vdots$}] at (27,7){};
		\node[label=above:{$0$}] at (27,6.1){};
		\node[label=above:{$0$}] at (27,5.1){};
		\node[label=above:{$1$}] at (27,4.1){};
		\node[label=above:{$1$}] at (27,3.1){};
		\node[label=above:{$1$}] at (27,2.1){};
		\node[label=above:{$1$}] at (27,1.1){};
		\node[label=above:{$1$}] at (27,0.1){};
		\node[label=above:{$1$}] at (27,-0.9){};
		\node[label=above:{$1$}] at (27,-1.9){};
		\node[label=above:{$\vdots$}] at (27,-3){};
		\node[label=above:{$1$}] at (27,-3.9){};
		\node[label=above:{$1$}] at (27,-4.9){};
		\node[label=above:{$1$}] at (27,-5.9){};
		\end{tikzpicture}
		\caption{Sketch of the construction of \cref{thm:superexponential-lower-bound}.
			We only show three element examples~$e_u, e_v, e_w$ and we only show choice dimensions corresponding to sets~$F_1, F_2, F_3$.
			Here,~$F_1$ is a set containing elements~$u$ and~$w$, $F_2$ is a set containing element~$v$, and~$F_3$ is a set containing elements~$v$ and~$w$.}
		\label{fig-eth-bound-ell}
	\end{figure*}

	\emph{Construction:} 
	Let~$(\mathcal{U},\mathcal{F},a,x)$ be an instance of \textsc{Set Multicover} where each demand is equal to~$x$. 
	We construct an equivalent instance~$((E,\lambda),\ell, S)$ of \pMTES{} as follows.
	For a sketch of our construction we refer to \Cref{fig-eth-bound-ell}.
	
	First, we describe the training data set~$(E,\lambda)$.
	\begin{itemize}
		\item For each element~$u\in \mathcal{U}$, we add an \emph{element example}~$b_u$. 
		All these examples receive label~$\lpos$.
		
		\item Then, for each~$i\in[a+1]$, we add a \emph{forcing example}~$r_i$.
		To all such examples we assign label~$\lneg$.
		
		\item Afterwards, we add a \emph{validation example}~$b_\val$ and an \emph{enforcing example}~$b_\enf$.
		Both examples are~$\lpos$.
		
		\item Now, for each~$j\in[a]$ and each~$i\in[a+1]$, we add a \emph{choosing example}~$b^j_i$.
		All these examples receive label~$\lpos$.
		
		\item Next, for each~$j\in[a]$ and each~$1\le i < t\le a+1$, we add a \emph{verifying example}~$r^j_{i,t}$.
		To all such examples we assign label~$\lneg$.
		
		\item Finally, for each~$j\in[a]$, we add a \emph{test example}~$r^j$.
		All these examples receive label~$\lneg$.
	\end{itemize}

	Observe that we have $|\mathcal{U}|+(a+1)+2+a(a+1)+a\cdot a(a+1)/2+a=|\mathcal{U}|+a^3/2-a^2/2+3a+3$~examples.
	
	
	To complete the description of the training data set, it remains to describe the number of dimensions~$d$ and the coordinates of each example in~$\mathds{R}^d$.
	
	We start with the description of the dimensions: 
	
	\begin{itemize}
		\item For each set~$F\in\mathcal{F}$ and each~$j\in[a]$, we introduce a dimension~$d^j_F$.
		We refer to these dimensions as the \emph{choice dimensions}.
		Moreover, we refer to the dimensions~$\{d^j_F: F \text{ is a set of } \mathcal{F}\}$ as the \emph{choice-$j$~dimensions}.
		
		\item Furthermore, for each~$i\in[a+1]$ we introduce a \emph{dummy dimension}~$d_i$.
	\end{itemize}

	Observe that we add exactly $a\cdot \mathcal{F}+a+1$~dimensions.
	
	Now, we describe the coordinates of the examples.
	By~$e[d_z]$ we denote the value of example~$e$ at dimension~$d_z$.
	Here,~$d_z$ can be a choice-, or a dummy dimension. 
	
	\begin{itemize}
		\item For each element example~$b_u$, each~$j\in[a]$, and each set~$F\in\mathcal{F}$, we set~$b_u[d^j_F]=1$ if~$u\in F$, otherwise, if~$u\notin F$, we set~$b_u[d^j_F]=0$.
		Furthermore, for each dummy dimension~$d_i$ we set~$b_u[d_i]=1$ if~$i\le a+1-x$, and otherwise, we set~$b_u[d_i]=0$.
		
		\item For each forcing example~$r_i$ and each choice dimension~$d^j_F$ we set~$r_i[d^j_F]=0$.
		For dummy dimension~$d_i$ we set~$r_i[d_i]=0$ and for each remaining dummy dimension~$d_j$, that is,~$j\in[a+1]\setminus\{i\}$, we set~$r_i[d_j]=1$.
		
		\item For the validation example~$b_\val$ we set~$b_\val[d^j_F]=0$ for each choice dimension and each~$j\in[a]$, and we set~$b_\val[d_i]=1$ for each dummy dimension~$d_i$.
		
		For the enforcing example~$b_\enf$ we set~$b_\enf[d^j_F]=1$ for each choice dimension.
		Then, we set~$b_\enf[d_1]=1$ for the first dummy dimension, and~$b_\enf[d_i]=0$ for each remaining dummy dimension~$d_i$ with~$i\in[2,a+1]$.
		
		\item For the choosing example~$b^j_i$ and dummy dimension~$d_i$ we set~$b^j_i[d_i]=0$ and for each remaining dummy dimension~$d_t$, that is,~$t\in[a+1]\setminus\{i\}$, we set~$b^j_i[d_t]=1$.
		Then, for each choice-$j$-dimension~$d^j_F$, we set~$b^j_i[d^j_F]=1$, and for each remaining choice dimension~$d^q_F$, that is,~$q\in[a]\setminus\{j\}$, we set~$b^j_i[d^q_F]=0$.
		
		\item For each verifying example~$r^j_{i,t}$ and each choice-$j$-dimension~$d^j_F$ we set~$r^j_{i,t}[d^j_F]=1$, and for each remaining choice dimension~$d^q_F$, that is,~$q\in[a]\setminus\{j\}$, we set~$r^j_{i,t}[d^q_F]=0$.
		Then, for dummy dimensions~$d_i$ and~$d_t$, we set~$r^j_{i,t}[d_i]=0=r^j_{i,t}[d_t]$, and for each remaining dummy dimension~$d_p$, that is,~$p\in[a+1]\setminus\{i,t\}$, we set~$r^j_{i,t}[d_p]=1$.
		
		\item For each test example~$r^j$, we set~$r^j[d_1]=1$ for the first dummy dimension, and~$r^f[d_i]=0$ for each remaining dummy dimension~$d_i$ with~$i\in[2,a+1]$.
		Then, for each choice-$j$-dimension~$d^j_F$ we set~$r^j[d^j_F]=0$, and for each remaining choice dimension~$d^q_F$, that is,~$q\in[a]\setminus\{j\}$, we set~$r^j[d^q_F]=1$.
	\end{itemize}

	Finally, we set~$\ell\coloneqq 2a+1$ and~$S\coloneqq 2a+1$.
	In other words, the tree ensemble contains exactly $2a+1$~decision trees and all these trees together have $2a+1$~inner nodes.
	
	\emph{Intuition:}
	Since the validation example~$b_\val$ and each forcing example~$r_i$ are classified differently, the ensemble has to contain a cut separating~$b_\val$ and~$r_i$. 
	Since the unique dimension in which~$b_\val$ and~$r_i$ have different value is~$d_i$, the ensemble contains a cut in each dummy dimension~$d_i$.
	
	Furthermore, observe that for each~$j\in[a]$ the forcing example~$r_i$ and the choosing example~$b^j_i$ are classified differently and that they only differ in the choice-$j$~dimensions.
	Hence, for each~$j\in[a]$ the ensemble contains a cut in the choice-$j$~dimensions.
	Since~$S=2a+1$, we thus obtain a characterization of the cuts of the ensemble.
	
	Next, we use this characterization to show that each tree in the ensemble has exactly one inner node.
	Hence, $a+1$~trees of the ensemble perform cuts in the dummy dimensions and $a$~trees of the ensemble perform cuts in the choice dimensions.
	The cuts in the choice dimensions then corresponds to a set of $a$~many sets~$\mathcal{F}'$ of~$\mathcal{F}$.
	The sets of~$\mathcal{F}'$ then correspond to a solution of~$(\mathcal{U},\mathcal{F},a,x)$.
	
	\emph{Correctness:} 
	We show that~$(\mathcal{U},\mathcal{F},a,x)$ is a yes-instance of \textsc{Set Multicover} if and only if~$(E,\lambda,\ell, S)$ is a yes-instance of \pMTES.
	
	$(\Rightarrow)$
	Let~$\mathcal{F}'$ be a solution for~$(\mathcal{U},\mathcal{F},a,x)$, that is, for each element~$u\in\mathcal{U}$ there exists at least $b$~sets in~$\mathcal{F}'$ which contain~$u$. 
	Let~$F_1,\ldots, F_a$ be an arbitrary but fixed ordering of~$\mathcal{F}'$.
	We show that there exists a tree ensemble~$\mathcal{T}$ that classifies~$(E,\lambda)$.
	For each set~$F_c\in\mathcal{F}'$ with~$c\in[a]$, let~$d^c_{F_c}$ be the choice-$c$ dimension corresponding to~$F_c$.
	For each~$c\in [a]$ we add a tree~$T_c$ to~$\mathcal{T}$ cutting the choice-$c$ dimension~$d^c_{F_c}$ such that all examples~$e$ with~$e[d^c_{F_c}]=1$ are assigned label~$\lpos$ and all remaining examples~$e$ with~$e[d^c_{F_c}]=0$ are assigned label~$\lneg$.
	We denote these trees as the \emph{choice trees}.
	
	Next, for each of the $a+1$~dummy dimensions~$d_i$, we add a tree~$T_i$ to~$\mathcal{T}$ such that all examples~$e$ with~$e[d_i]=0$ are assigned label~$\lneg$ and all remaining examples~$e$ with~$e[d_i]=1$ are assigned label~$\lpos$.
	We denote these trees as the \emph{dummy trees}. 
	
	Observe that~$\mathcal{T}$ consists of exactly $\ell=2a+1$~trees with one internal node each.
	We now verify that~$\mathcal{T}$ classifies~$(E,\lambda)$ correctly.
	We distinguish all different types of examples:
	
	\begin{itemize}
		\item We show that~$\mathcal{T}(b_u)=\lpos$ for each element example.
		In the dummy trees,~$b_u$ is assigned label~$\lpos$ exactly $a+1-x$~times.
		Since~$\mathcal{F}'$ is a solution, at least $x$~choice trees assign label~$\lpos$ to~$b_u$.
		Thus,~$b_u$ is classified as~$\lpos$.
		
		\item We show that~$\mathcal{T}(r_i)=\lneg$ for each forcing example.
		In the dummy trees~$r_i$ is assigned label~$\lneg$ exactly once.
		In each of the $a$~many choice trees,~$r_i$ is assigned label~$\lneg$.
		Thus,~$r_i$ is classified as~$\lneg$.
		
		\item We show that~$\mathcal{T}(b_\val)=\lpos$.
		Since in each of the $a+1$~many dummy trees~$b_\val$ is assigned label~$\lpos$, the statement follows.
		
		\item We show that~$\mathcal{T}(b_\enf)=\lpos$.
		Since in the dummy tree cutting dimension~$d_1$ and in each of the $a$~many choice trees~$b_\enf$ is assigned label~$\lpos$, the statement follows.
		
		\item We show that~$\mathcal{T}(b^j_i)=\lpos$ for each choosing example.
		In the dummy trees except the one cutting~$d_i$, example~$b^j_i$ receives label~$\lpos$.
		Furthermore, in the choice tree cutting a choice-$j$~dimensions,~$b^j_i$ also is classified as~$\lpos$.
		Thus,~$b^j_i$ is classified as~$\lpos$.
		
		\item We show that~$\mathcal{T}(r^j_{i,t})=\lneg$ for each verifying example.
		In exactly two dummy trees, namely the ones cutting dummy dimensions~$d_i$ and~$d_t$, example~$r^j_{i,t}$ is assigned label~$\lneg$.
		Furthermore, in all choice trees except the one doing a cut in a choice-$j$~dimension,~$r^j_{i,t}$ is assigned label~$\lneg$.
		Thus, the statement follows.
		
		\item We show that~$\mathcal{T}(r^j)=\lneg$ for each test example.
		In each dummy tree except the one cutting~$d_1$ example~$r^j$ is assigned label~$\lneg$.
		Also in the choice tree doing a cut in the choice-$j$~dimensions,~$r^j$ is assigned label~$\lneg$.
		Thus,~$\lambda(r^j)=\lneg$.
	\end{itemize}
	
	Hence,~$\mathcal{T}$ classifies~$(E,\lambda)$ correctly.
	
	$(\Leftarrow)$ 
	Conversely, suppose that there exists a tree ensemble~$\mathcal{T}$ which classifies~$(E,\lambda)$.
	In a first step, we show that~$\mathcal{T}$ contains exactly one cut in each dummy dimension and exactly one cut in the choice-$j$~dimensions for each~$j\in[a]$.
	Since~$S=2a+1$ this gives us a characterization of all cuts done by~$\mathcal{T}$.
	In a second step, we show that each tree of~$\mathcal{T}$ has exactly one inner node.
	Hence, exactly $a$~trees of~$\mathcal{T}$ cut some choice dimension.
	In a third step, we show that each element example~$b_v$ is classified at least $x$~times as~$\lpos$ by trees of~$\mathcal{T}$ cutting a choice dimension.
	In the final step, we construct a solution~$\mathcal{F}'$ for~$(\mathcal{U},\mathcal{F},a,x)$ based on the cuts in the choice dimensions.

	
	\emph{Step 1:}
	Recall that the validation example~$v_\val$ has label~$\lpos$ and that each forcing example~$r_i$ has label~$\lneg$. 
	Observe that for each~$i\in[a+1]$ the forcing example~$r_i$ and the validation example~$b_\val$ differ in exactly one dimension: the dummy dimension~$d_i$, that is,~$r_i[d_i]=0$ and~$b_\val[d_i]=1$.
	Thus, for each~$i\in[a+1]$, the ensemble~$\mathcal{T}$ has to contain a cut in dimension~$d_i$.
	
	Also recall that for each~$j\in[a]$, each choosing example has label~$\lpos$.
	Observe that for each~$i\in[a+1]$ and for each~$j\in[a]$, the forcing example~$r_i$ and the choosing example~$b^j_i$ only differ in the choice-$j$-dimensions: in these,~$r_i$ has value~$0$ and~$b^j_i$ has value~$1$.
	Hence, in~$\mathcal{T}$ there is at least one cut in some choice-$j$~dimension to distinguish~$r_i$ and~$b^j_i$.
	Since there are $a$~many choices for~$j$, since~$S=2a+1$, and since~$\mathcal{T}$ has $a+1$~cuts in dummy dimensions, we conclude that exactly one cut in~$\mathcal{T}$ is done in each dummy dimension and exactly one cut in~$\mathcal{T}$ is done in the choice-$j$~dimensions for any~$j\in[a]$.
	
	\emph{Step 2:}
	Next, we show that each tree in~$\mathcal{T}$ consists of exactly one inner node.
	Assume towards a contradiction that some tree~$T\in\mathcal{T}$ has at least two~inner nodes.
	We now show that in this case there exist a pair of examples with different labels which cannot be distinguished by~$\mathcal{T}$, which implies that~$\mathcal{T}$ does not classify~$(E, \lambda)$.
	By~$T_\lef$ and~$T_\rig$ we denote the left and right subtree of~$T$, respectively.
	Next, we distinguish whether in the root of~$T$ a dummy dimension or a choice dimension is cut.
	
	\emph{Case 1:}
	Assume that the root of~$T$ cuts the dummy dimension~$d_i$. 
	Without loss of generality, we assume that~$T_\lef$ contains all examples~$e$ with~$d_i[e]=0$ and that~$T_\rig$ contains all examples~$e$ with~$d_i[e]=1$.
	
	First, assume that another inner node of~$T$ cuts the dummy dimension~$d_t$.
	Recall that we have~$t\ne i$ since each dummy dimension is cut exactly once by~$\mathcal{T}$.
	Observe that~$T_\lef$ contains the choosing example~$b^1_i$ and the verifying example~$r^1_{i,t}$ (in the following we assume that~$i<t$ since otherwise all arguments can be done with example~$r^1_{t,i}$, since only in this case the example exists), and that~$T_\rig$ contains the validation example~$b_\val$ and the forcing example~$r_t$.
	Note that~$b^1_t$ and~$r^1_{i,t}$, and also~$b_\val$ and~$r_t$ only differ in the dummy dimension~$d_t$.
	Since in~$\mathcal{T}$ dummy dimension~$d_t$ is cut exactly once and since this cut is done in some inner node of~$T$, this cut is done either in~$T_\lef$ or in~$T_\rig$.
	In any case, in the remaining subtree~$T_\lef$ or~$T_\rig$ there are then at least two examples which cannot be distinguished by~$\mathcal{T}$, a contradiction.
	
	Second, assume that another inner node of~$T$ cuts a choice-$j$~dimension.
	Observe that~$T_\lef$ contains the forcing example~$r_i$ and the choosing example~$b^j_i$, and that~$T_\rig$ contains the forcing example~$r_t$ and the choosing example~$b^j_t$ where~$t$ is any number in~$[a+1]\setminus\{i\}$.
	Note that~$r_i$ and~$b^j_i$, and also~$r_t$ and~$b^j_t$ only differ in the choice-$j$~dimensions.
	Since in~$\mathcal{T}$ there is exactly one cut in the choice-$j$~dimensions and since this cut is done in some inner node of~$T$, this cut is done either in~$T_\lef$ or~$T_\rig$.
	In any case, in the remaining subtree~$T_\lef$ or~$T_\rig$ there are then at least two examples which cannot be distinguished by~$\mathcal{T}$, a contradiction.
	
	\emph{Case 2:}
	Assume that the root of~$T$ cuts some choice-$j$~dimension~$d^j_F$. 
	Without loss of generality, we assume that~$T_\lef$ contains all examples~$e$ with~$d^j_F[e]=0$ and that~$T_\rig$ contains all examples~$e$ with~$d^j_F[e]=1$.
	
	First, assume that another inner node of~$T$ cuts the dummy dimension~$d_t$.
	Observe that~$T_\lef$ contains the choosing example~$b^z_i$ and the verifying example~$r^z_{i,t}$ (similar to Case 1, in the following we assume that~$i<t$), and that~$T_\rig$ contains the choosing example~$b^j_i$ and the verifying example~$r^j_{i,t}$.
	Here,~$z\in[a]\setminus\{j\}$ and~$i\in[a+1]\setminus\{t\}$.
	Note that~$b^z_i$ and~$r^z_{i,t}$, and also~$b^j_i$ and~$r^j_{i,t}$ only differ in the dummy dimension~$d_t$.
	Since in~$\mathcal{T}$ dummy dimension~$d_t$ is cut exactly once and since this cut is done in some inner node of~$T$, this cut is done either in~$T_\lef$ or in~$T_\rig$.
	In any case, in the remaining subtree~$T_\lef$ or~$T_\rig$ there are then at least two examples which cannot be distinguished by~$\mathcal{T}$, a contradiction.
	
	Second, assume that another inner node of~$T$ cuts a choice-$z$~dimension.
	Recall that we have~$z\ne j$ since exactly one cut is done in all choice-$j$~dimensions.
	Observe that~$T_\lef$ contains the forcing example~$r_i$ and the choosing example~$b^z_i$ where~$i\in[a+1]$, and that~$T_\rig$ contains the enforcing example~$b_\enf$ and the test example~$r^z$.
	Note that~$r_i$ and~$b^z_i$, and also~$b_\enf$ and~$r^z$ only differ in the choice-$z$~dimensions.
	Since in~$\mathcal{T}$ there is exactly one cut in the choice-$z$~dimensions and since this cut is done in some inner node of~$T$, this cut is done either in~$T_\lef$ or~$T_\rig$.
	In any case, in the remaining subtree~$T_\lef$ or~$T_\rig$ there are then at least two examples which cannot be distinguished by~$\mathcal{T}$, a contradiction.
	
	Hence, we have verified that each tree in~$\mathcal{T}$ has exactly one inner node.
	
	
	
	\emph{Step 3:}
	Since each tree in the ensemble~$\mathcal{T}$ has exactly one inner node, we conclude that for each~$i\in[a+1]$ the ensemble~$\mathcal{T}$ contains a tree which cuts dummy dimension~$d_i$.
	Observe that since the validation example~$b_\val$ has label~$\lpos$, since the forcing example~$r_i$ has label~$\lneg$, and since~$b_\val$ and~$r_i$ only differ in dummy dimension~$d_i$, we conclude that by this tree all examples~$e$ with~$e[d_i]=0$ are classified as~$\lneg$ and all examples~$e$ with~$e[d_i]=1$ are classified as~$\lpos$.
	
	Since each tree in the ensemble~$\mathcal{T}$ has exactly one inner node, we also conclude that for each~$j\in[a]$ the ensemble~$\mathcal{T}$ contains a tree which cuts exactly one choice-$j$-dimension.
	Observe that the forcing example~$r_i$ has label~$\lneg$, that the choosing example~$b^j_i$ has label~$\lpos$, and that~$r_i$ and~$b^j_i$ only differ in the choice-$j$~dimensions.
	Here,~$i$ is some arbitrary but fixed integer of~$[a+1]$.
	Hence, the tree of the ensemble cutting a choice-$j$~dimensions, say~$d^j_F$, does the following: all examples~$e$ with~$e[d^j_F]=0$ are classified as~$\lneg$ and all examples~$e$ with~$e[d^j_F]=1$ are classified as~$\lpos$.
	
	Now, consider the element example~$b_u$ for some~$u\in \mathcal{U}$.
	Observe that~$b_u$ is classified $a+1-x$~times as~$\lpos$ by trees of~$\mathcal{T}$ doing cuts in the dummy dimensions.
	Since~$\mathcal{T}$ classifies~$(E,\lambda)$, we conclude that~$b_u$ is classified at least $x$~times as~$\lpos$ by trees of~$\mathcal{T}$ cutting a choice dimension.
	
	\emph{Step 4:}
	Recall that each choice dimension~$d^j_F$ for each~$j\in[a]$ corresponds to a set~$F\in\mathcal{F}$.
	Let~$\mathcal{T}'=\{T^j:j\in [a]\}$ such that~$T^j$ cuts the choice dimension~$d^j_{F_j}$ for some~$j\in[a]$.
	Note that in~$T^j$ all examples corresponding to~$F_j$ are assigned label~$\lpos$, and all remaining elements in~$\mathcal{U}\setminus F_j$ receive label~$\lneg$.
	According to Step~$3$, each example~$b_u$ for each~$u\in\mathcal{U}$ is classified at least $x$~times as~$\lpos$ by~$\mathcal{T}'$.
	Thus,~$\mathcal{F}'\coloneqq \{F_j:j\in[a]\}$ is a solution for~$(\mathcal{U},\mathcal{F},a,x)$.

	
	\emph{Lower Bound:}
	Recall that the construction adds $|\mathcal{U}|+a^3/2-a^2/2+3a+3$~examples and $a\cdot |\mathcal{U}|+a+1$~dimensions and is clearly implementable in polynomial time.
Thus, if \pMTES{} has an algorithm with running time~$f(\ell)\cdot 2^{o(\log \ell)\cdot n}\cdot \poly$, then we obtain an algorithm for \textsc{Set Multicover} with running time~$f(x) \cdot 2^{o(\log x)\cdot |\mathcal{U}|}\cdot \poly$, a contradiction to the ETH.	
\end{proof}

Now, \cref{thm:superexponential-lower-bound} implies that the running time~$(\ell+1)^n\cdot\poly$ of the algorithm in \cref{thm:exptime-algo} cannot be significantly improved, unless the ETH is wrong.

Our proof of \cref{thm:superexponential-lower-bound} also implies hardness for the larger parameter~$S$ and that this hardness holds even if $D=2$, that is, each feature is binary.

\begin{corollary}\label[corollary]{cor:superexponential-lower-bound}
	Solving \pMTESlong\ on instances with binary features in $f(S) \cdot  2^{o(\log S)\cdot n}$~time would contradict the \ethlong.
\end{corollary}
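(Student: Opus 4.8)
The plan is to observe that the reduction already given in the proof of \cref{thm:superexponential-lower-bound} establishes the corollary with no essential change; one only needs to track the parameter $S$ in place of $\ell$ and to record two structural properties of the constructed instances. First, every coordinate of every example---in both the choice dimensions and the dummy dimensions---lies in $\{0,1\}$, so the produced instances have maximum domain size $D = 2$, and the claim about binary features needs no modification to the reduction. Second, the reduction sets $S \coloneqq 2a+1$ and $\ell \coloneqq 2a+1$, so that $S = \ell$ on every output instance.

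Given these two facts, I would argue as follows. Suppose, for contradiction, that \pMTESlong\ restricted to binary-feature instances could be solved in $f(S) \cdot 2^{o(\log S)\cdot n}$~time. I would feed to this hypothetical algorithm the (binary-feature) instances produced by the polynomial-time reduction of \cref{thm:superexponential-lower-bound} from a \textsc{Set Multicover} instance $(\mathcal{U},\mathcal{F},a,x)$ with all demands equal to $x$ and $a \in \Theta(x^2\log x)$. On these instances $S = \ell = 2a+1$, hence $\log S = \Theta(\log x)$, and the number of examples is $n = |\mathcal{U}| + \poly(x)$. Substituting $S = 2a+1$, the running time becomes $f(2a+1)\cdot 2^{o(\log x)\cdot(|\mathcal{U}| + \poly(x))}$; the contribution of the $\poly(x)$ summand in the exponent depends only on $x$ and can be folded into the function~$f$, leaving $f(x)\cdot 2^{o(\log x)\cdot|\mathcal{U}|}\cdot\poly$. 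This is precisely the running time ruled out for \textsc{Set Multicover} by \cref{thm:superexponential-lower-bound-set-multi-cover} under the \ethlong.

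Since this chain of implications is identical to the ``Lower Bound'' paragraph in the proof of \cref{thm:superexponential-lower-bound} with the symbol $S$ replacing $\ell$, there is no genuine obstacle to overcome: the only things requiring verification are the two structural claims about the construction, namely that $D = 2$ and that $S = \ell = 2a+1$. Both are read off directly from the coordinate assignments and from the setting of the parameters in the reduction, and so the corollary follows. If anything deserves a second look, it is merely confirming that \emph{no} example receives a coordinate outside $\{0,1\}$ and that the parameters $S$ and $\ell$ are genuinely set to the same value rather than merely being comparable, since the whole transfer of the bound from $\ell$ to $S$ rests on their coincidence.
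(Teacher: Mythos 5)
Your proposal is correct and matches the paper's own argument: the paper proves \cref{cor:superexponential-lower-bound} exactly by remarking that the construction in \cref{thm:superexponential-lower-bound} uses only $\{0,1\}$-valued coordinates (so $D=2$) and sets $S=\ell=2a+1$, so the lower-bound transfer goes through verbatim with $S$ in place of $\ell$. Your additional care in folding the $2^{o(\log x)\cdot\poly(x)}$ factor into $f(x)$ is a detail the paper leaves implicit, but it is the same argument.
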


Our proof of \cref{thm:superexponential-lower-bound} also implies hardness for the minimax optimization goal.
For this result the proof is simpler, since no argument is needed that each tree in the ensemble has exactly one inner node.

\begin{corollary}\label[corollary]{cor:superexponential-lower-bound-mmaxtes}
	Solving \pMMTESlong\ on instances with binary features and~$s=1$ in $f(\ell) \cdot  2^{o(\log \ell)\cdot n}$~time would contradict the ETH.
\end{corollary}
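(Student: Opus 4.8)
The plan is to reuse the reduction from the proof of \cref{thm:superexponential-lower-bound} almost verbatim, changing only the parameter setting: instead of~$S = 2a+1$ I would set~$\ell = 2a+1$ and~$s = 1$. All features in the construction have domain~$\{0,1\}$, so the produced instance has binary features as required.

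For the forward direction I would simply reuse the ensemble built in the ``$(\Rightarrow)$'' part of \cref{thm:superexponential-lower-bound}: it consists of exactly~$2a+1$ trees, each with a single inner node, so it is already a feasible solution of the minimax instance with~$\ell = 2a+1$ and~$s = 1$, and the verification that it classifies~$(E,\lambda)$ goes through unchanged.

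The simplification happens in the backward direction, and this is where I would focus. With~$s = 1$ each of the~$\ell = 2a+1$ trees has at most one inner node, so the ensemble performs at most~$2a+1$ cuts in total. I would keep Step~1 of the original proof verbatim: the validation-versus-forcing comparison forces a cut in each of the~$a+1$ dummy dimensions, and the forcing-versus-choosing comparison forces a cut in each of the~$a$ groups of choice-$j$ dimensions. Since these~$2a+1$ requirements live on pairwise disjoint sets of dimensions and a single cut satisfies at most one of them, at least~$2a+1$ cuts are needed. Together with the budget of at most~$2a+1$ cuts, this forces equality, so each tree has exactly one inner node and the cuts are distributed as exactly one per dummy dimension and one per choice-$j$ group. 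This is precisely the characterization that the lengthy case analysis of Step~2 established in the original proof; here it drops out of the budget, so Step~2 can be omitted entirely. Steps~3 and~4, which use the characterization to argue that each element example~$b_u$ is classified~$\lpos$ at least~$x$ times by the choice trees and then read off a set multicover~$\mathcal{F}'$, then carry over without change.

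Finally, the ETH transfer is identical to the one in \cref{thm:superexponential-lower-bound}: we have~$\ell = 2a+1 = \Theta(a)$ with~$a \in \Theta(x^2\log x)$, hence~$\log\ell = \Theta(\log x)$, and the number~$n = \abs{E}$ of examples exceeds~$\abs{\mathcal{U}}$ only by a polynomial in~$a$, so an~$f(\ell)\cdot 2^{o(\log\ell)\cdot n}$-time algorithm would yield an~$f(x)\cdot 2^{o(\log x)\cdot\abs{\mathcal{U}}}\cdot\poly$-time algorithm for \textsc{Set Multicover}, contradicting \cref{thm:superexponential-lower-bound-set-multi-cover}. The only point I expect to require genuine care is this saturation step in the backward direction: one must verify that the~$2a+1$ cuts forced by Step~1 are pairwise forced in different dimensions and therefore exactly exhaust the budget~$\ell \cdot s = 2a+1$, leaving no slack for trivial trees or for a second cut inside any single tree.
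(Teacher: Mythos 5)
Your proposal is correct and follows exactly the route the paper intends: the paper proves \cref{cor:superexponential-lower-bound-mmaxtes} by reusing the construction of \cref{thm:superexponential-lower-bound} with $\ell = 2a+1$ and $s=1$, noting that the lengthy Step~2 becomes unnecessary because the per-tree size bound, combined with the $2a+1$ cuts forced by Step~1 on pairwise disjoint dimension groups, saturates the cut budget and immediately yields the characterization needed for Steps~3 and~4. Your explicit treatment of this saturation/counting step is precisely the ``simpler proof'' the paper alludes to.
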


\section{Extensions}\label{sec:extensions}

We now discuss how our algorithmic results can be adapted to more general settings. 
First, we show how to handle more than two classes.
Second we allow up to $t$~misclassifications for the resulting tree in the training data set.

\subsection{Non-Binary Classification}
Recall that~$\Sigma$ is the set of classes. 
To adapt the witness-tree algorithm (see \Cref{thm:witness-tree-algo}) for more than two classes, we do the following:
In the initialization of the ensemble, for each tree~$T\in\mathcal{C}$ we choose an arbitrary example~$e$ and try all $|\Sigma|$~possibilities for whether~$e$ is correctly classified (as~$\lambda(e)$) or not, that is, we test all $|\Sigma|-1$~possibilities for the class of the leaf in~$T$ which contains~$e$.
In total, these are $|\Sigma|^{\ell}$~possibilities.
Afterwards, branching works as for 2~classes: 
First, we select an arbitrary dirty example~$e$.
Second, we branch into all possibilities for a tree~$T$ of~$\mathcal{C}$ where~$e$ is currently misclassified and not a witness. 
Finally, for each such tree~$T$ we branch into all important one-step refinements of~$T$ introducing~$e$ as the new witness.
Thus, the running time for branching is independent of the size of~$\Sigma$.
Hence, we derive the following.
Interestingly, the running time for \pDTSlong\ is independent of~$\Sigma$.

\begin{proposition}\label{prop:witness-tree-algo}
	For any set~$\Sigma$ of classes, \pMTESlong\ can be solved in $\Oh(|\Sigma|^{\ell}\cdot (2 \delta D S)^{S} \cdot S \ell dn)$ time and \pMMTESlong\ in $\Oh(|\Sigma|^{\ell} \cdot (\delta D \ell s)^{s\ell} \cdot s \ell^2 dn)$ time.
\end{proposition}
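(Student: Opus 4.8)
The plan is to run exactly the witness-tree search of \cref{thm:witness-tree-algo} (\cref{alg:witness-trees}), changing only the set of initial calls, and then to re-examine its running-time and correctness analysis to confirm that the single place where binarity entered — the enumeration of initial leaf labels — is the only thing that changes. First I would observe that the entire structural apparatus is already phrased for an arbitrary class set $\Sigma$: witness trees, one-step refinements, \emph{important} refinements, the classification rule via majority vote with fixed tie-breaking, and the reordering statement \cref{lem:reorder-refinements} all make sense verbatim for any $\Sigma$ and nowhere use $|\Sigma| = 2$. Only the leaf labels range over $\Sigma$ instead of $\{\lpos, \lneg\}$, and the notion of a \emph{dirty} example is unchanged. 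Hence the body of the recursion is reused without modification, and the initial calls are made, as described above, with the $|\Sigma|^{\ell}$ ensembles obtained by picking one example per tree and trying all $|\Sigma|$ labels for its leaf.

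For the running time I would reprove the two quantities that control the search-tree size. The depth is still at most $S$ (resp.\ $s\ell$), since every recursive call adds at least one cut. The branching factor is still bounded by $\delta D \cdot 2S$ (resp.\ $\delta D s\ell$): the dimension of the new cut must be one of the at most $\delta$ dimensions in which $e$ and the witness of its leaf differ, the threshold is one of at most $D$ values, and the position (a new root or a subdivided edge on the relevant root-leaf path) is counted exactly as before. The key point making the branching independent of $|\Sigma|$ is that the \emph{label} of the newly introduced leaf is not a free choice: the refinement is required to satisfy $T'[e] = \lambda(e)$, which forces that label to be $\lambda(e)$. Thus no factor $|\Sigma|$ appears inside the recursion, and multiplying the search-tree size $(2\delta D S)^S$ (resp.\ $(\delta D s\ell)^{s\ell}$) by the $|\Sigma|^{\ell}$ initial calls and by the polynomial per-node work $\Oh(S\ell d n)$ (resp.\ $\Oh(s\ell^2 d n)$) yields the two claimed bounds. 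As in \cref{thm:witness-tree-algo} I would justify $\ell \le S$ without loss of generality by separately guessing the trivial (single-leaf) trees of a solution; with $|\Sigma|$ classes this guessing ranges over the labels already enumerated by the $|\Sigma|^{\ell}$ initialization, so it costs nothing extra.

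Finally, correctness reduces to re-checking Claims (1) and (2) from the proof of \cref{thm:witness-tree-algo}. Claim (2) — that a \emph{good} non-classifying ensemble admits a recursive call that is again good — goes through unchanged, because its argument only uses that the solution classifies the chosen dirty example $e$ correctly while the current ensemble does not, together with \cref{lem:reorder-refinements} to assume $e$ is introduced in the very first refinement step as an important refinement whose new leaf is labeled $\lambda(e)$; none of this depends on $|\Sigma|$. The only adaptation is in Claim (1): for each initial tree we must hit the leaf label that the corresponding solution tree assigns to the leaf containing the chosen witness, and since we now try all $|\Sigma|$ labels per tree — i.e.\ all $|\Sigma|^{\ell}$ combinations — one initial ensemble is good. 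The main (and really the only) obstacle is to verify carefully that the refinement step carries no hidden dependence on $|\Sigma|$; once the forced-label observation above is in place, everything else is a verbatim specialization of the binary proof. For $\ell = 1$ the factor $|\Sigma|^{\ell}$ is absorbed into the polynomial work, which is why the running time for \pDTSlong\ is independent of $\Sigma$.
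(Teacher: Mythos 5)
Your proposal coincides with the paper's own proof of this proposition: the paper likewise reuses \cref{alg:witness-trees} unchanged, replaces the $2^{\ell}$ initial calls by the $|\Sigma|^{\ell}$ initializations that try every class label for each tree's single starting leaf, and notes that the branching inside the recursion is independent of $|\Sigma|$ because the newly introduced leaf's label is forced to equal $\lambda(e)$ by the condition $T'[e]=\lambda(e)$; your bookkeeping of the search-tree depth, the branching factor, the per-node work, the $\ell\le S$ reduction, and the $\ell=1$ corollary all match the paper's accounting.

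That said, one step deserves a flag, and it is a weakness your write-up shares with the paper rather than one it introduces: you claim that Claim~(2) of \cref{thm:witness-tree-algo} transfers ``unchanged, because its argument only uses that the solution classifies the chosen dirty example $e$ correctly while the current ensemble does not.'' That is not all the binary argument uses. Its key step --- ``there is at least one tree $T\in\mathcal{C}$ such that $T$ classifies $e$ incorrectly and $T$'s refinement in the solution classifies $e$ correctly'' --- rests on a two-class vote count: with $\Sigma=\{\lpos,\lneg\}$, being wrong for $\mathcal{C}$ and right for the solution forces the number of trees voting $\lambda(e)$ to strictly increase, hence some tree must switch its vote to $\lambda(e)$. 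For $|\Sigma|\ge 3$ this inference can fail: take $\lambda(e)=a$, tie-breaking order $a$ before $b$ before $c$, and $\ell=3$; the current ensemble may vote $(a,b,b)$ on $e$ (so $e$ is dirty) while the corresponding solution trees vote $(a,b,c)$ (correct by tie-breaking), and then the only vote change is from one wrong class to another wrong class. In that situation no branch explored by the algorithm --- each of which pins $e$ as a witness in a leaf labeled $\lambda(e)$ --- is refined by this particular solution, so goodness is not preserved by the cited argument. The paper's sentence ``branching works as for 2 classes'' glosses over exactly the same point, so your proposal is faithful to the paper; but a fully rigorous proof of the proposition would have to repair this step, for instance by arguing about how the whole multiset of votes evolves or by a more careful choice or modification of the reference solution, rather than by invoking the binary Claim~(2) verbatim.
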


\begin{corollary}
	For any set~$\Sigma$ of classes, \pDTSlong\ can be solved in $\Oh((\delta D s)^s \cdot sdn)$~time.
\end{corollary}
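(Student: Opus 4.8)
The plan is to invoke the witness-tree algorithm of \cref{prop:witness-tree-algo} in the special case $\ell = 1$, since \pDTSlong{} is precisely \pMMTESlong{} restricted to a single tree. Substituting $\ell = 1$ into the running time $\Oh(|\Sigma|^{\ell} \cdot (\delta D \ell s)^{s\ell} \cdot s \ell^2 dn)$ of \cref{prop:witness-tree-algo} yields $\Oh(|\Sigma| \cdot (\delta D s)^{s} \cdot sdn)$. Hence essentially everything follows from direct substitution, and the only point requiring an argument is the removal of the factor $|\Sigma|$; I expect this to be the sole nontrivial step.

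The factor $|\Sigma|$ stems from the initialization phase, in which we try all $|\Sigma|$ possible class labels for the single leaf containing the arbitrarily chosen example $e$, accounting for the possibility that an individual tree in an ensemble misclassifies $e$ while the majority vote remains correct. For a single decision tree there is no majority vote, so the tree itself must classify \emph{every} example correctly, in particular $e$. Therefore I would instead initialize the single tree as one leaf labeled $\lambda(e)$ with $e$ as its witness, using only a single initial call rather than $|\Sigma|$ of them.

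It then remains to confirm that this single initialization preserves correctness, i.e., that the analogue of Claim~(1) in the proof of \cref{thm:witness-tree-algo} still holds. Given any solution tree $T^{\star}$ that classifies $(E,\lambda)$, the leaf $q$ of $T^{\star}$ containing $e$ satisfies $\cla(q)=\lambda(e)$ because $T^{\star}$ classifies $e$ correctly. Labeling $q$ with witness $e$ and every other leaf with an arbitrary example it contains thus exhibits $T^{\star}$ as a refinement of the single initialized witness tree, so the initialized ensemble is good and the algorithm does not erroneously return $\bot$; Claim~(2) and the branching analysis from the proof of \cref{thm:witness-tree-algo} are unaffected by the number of classes. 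Consequently the running time is $\Oh((\delta D s)^{s} \cdot sdn)$, as claimed.
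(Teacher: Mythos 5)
Your proposal is correct and matches the paper's (implicit) derivation: the corollary is meant to follow from \cref{prop:witness-tree-algo} with $\ell=1$, where the $|\Sigma|^{\ell}$ initialization factor can be dropped precisely because a single tree that classifies $(E,\lambda)$ without error must classify the chosen example $e$ correctly, so only the one initialization labeling $e$'s leaf with $\lambda(e)$ is needed, and branching is $\Sigma$-independent. The paper records this only through the remark that the running time for \pDTSlong{} is independent of~$\Sigma$; your argument (including the check that Claim~(1) of the proof of \cref{thm:witness-tree-algo} survives with a single initial call) supplies exactly the justification intended there.
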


Next, we adapt the exponential-time algorithms for a small number of examples of \Cref{sec-algos-for-n} for~$|\Sigma|=2$ to arbitrary but fixed~$\Sigma$.
Recall that in \Cref{lem-blue-correct} we first presented a dynamic-programming algorithm to compute for any example subset~$E'\subseteq E$ the size of a smallest decision tree classifying all examples in~$E'$ as~$\lpos$. 
Second, we used \Cref{lem-blue-correct} to show that \pDTSlong\ can be solved in $2^n \cdot \poly$~time (\Cref{cor:single-exponential-decision-trees}).
Finally, in \Cref{thm:exptime-algo} we used \Cref{lem-blue-correct} to solve \pMTESlong\ in $(\ell + 1)^n \cdot \poly$ time.

In \Cref{lem-blue-correct}, the dynamic-programming table~$Q$ has one entry~$E^b$ and one entry~$E^r$ such that~$Q[E^b,E^r]$ stores the size of a smallest decision tree on~$E^b\cup E^r$ where exactly the examples in~$E^b$ receive label~$\lpos$ and the examples in~$E^r$ receive label~$\lneg$.
Now, in our adaption to general~$\Sigma$, table~$Q$ has $|\Sigma|$~entries, that is, one entry~$E^i$ for each~$i\in\Sigma$,
Furthermore,~$Q[E^1, \ldots, E^{|\Sigma|}]$ stores the size of a smallest decision tree on the example set~$E^1\cup\ldots\cup E^{|\Sigma|}$ where exactly the examples in~$E^i$ receive class label~$i$.
As before, we fill~$Q$ for increasing number of examples.
Initially, for each set~$E^i\subseteq E$ of examples, we set~$Q[\emptyset,\ldots, \emptyset, E^i,\emptyset, \ldots, \emptyset]=0$.
The recurrence is  similar to before: 
We iterate over all possible dimensions and each possible threshold.
Since each entry of~$Q$ corresponds to a partition of~$E$ into $(|\Sigma|+1)$~parts ($|\Sigma|$~parts for the entries of~$Q$ and one part for the unused examples), $Q$ has $(|\Sigma|+1)^n$~entries.
Since each entry can be computed in polynomial time, we obtain the following:

\begin{lemma}
\label{lem-blue-correct2}
	Given a training data set~$(E,\lambda)$, we can compute in $\Oh((|\Sigma|+1)^n \cdot Ddn)$~time for each partition~$(E^1,\ldots, E^{|\Sigma|})$ of~$E$, the size of a smallest decision tree~$T$ such that exactly the examples in~$E^i$ receive class label~$i$.
\end{lemma}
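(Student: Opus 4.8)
The plan is to lift the dynamic program of \cref{lem-blue-correct} verbatim from a $3$-partition to a $(|\Sigma|+1)$-partition. I would use a table with entries $Q[E^1, \ldots, E^{|\Sigma|}]$ indexed by tuples of pairwise disjoint subsets $E^i \subseteq E$, where the entry stores the size of a smallest decision tree on $\bigcup_{i} E^i$ that assigns class label $i$ to exactly the examples in $E^i$. The examples in $E \setminus \bigcup_i E^i$ play the role of the unused part (the set $E \setminus (E^b \cup E^r)$ in the binary case). The desired output for full partitions of $E$ is then read off from those entries with $\bigcup_i E^i = E$; the entries with unused examples are needed only as intermediate values for the recursion.

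For initialization, for every class $i \in \Sigma$ and every subset $E^i \subseteq E$ I would set the entry with $E^i$ in slot $i$ and $\emptyset$ in every other slot to $0$, which is realized by a single leaf labeled $i$. I would then fill the remaining entries in order of increasing $|\bigcup_i E^i|$ via
\[
  Q[E^1, \ldots, E^{|\Sigma|}] = \min_{i \in [d],\, h \in \Thr(i)} Q[E^1[f_i \le h], \ldots, E^{|\Sigma|}[f_i \le h]] + Q[E^1[f_i > h], \ldots, E^{|\Sigma|}[f_i > h]] + 1,
\]
where, exactly as in \cref{lem-blue-correct}, I only consider cuts $(i,h)$ for which both sides retain at least one example; this guarantees that the recurrence refers solely to entries with strictly fewer examples and therefore terminates. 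The correctness argument is structurally identical to that of \cref{lem-blue-correct}: the topmost cut partitions the examples into two groups, an optimal tree realizes the induced target labeling optimally on each side, and the base case is a single leaf.

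For the running time I would observe that each table entry corresponds to a partition of $E$ into $|\Sigma| + 1$ parts (the $|\Sigma|$ class slots plus the unused part), so there are $(|\Sigma|+1)^n$ entries; reading the input costs $\Oh(Ddn)$, and each entry is evaluated in $\Oh(Dd)$ time by scanning the $d$ dimensions and at most $D$ thresholds each. This yields the claimed $\Oh((|\Sigma|+1)^n \cdot Ddn)$ bound.

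Since this is a direct generalization of the binary construction, there is no genuine conceptual obstacle; the only points to watch are the bookkeeping of the multi-index, respecting the fixed tie-breaking ordering on $\Sigma$ for the leaf labels, and confirming that restricting to cuts with two nonempty sides still suffices to reach every target labeling from the base cases. The latter holds because any tree with at least one inner node has a topmost cut separating its leaves into two nonempty groups, so the recurrence never needs a degenerate cut.
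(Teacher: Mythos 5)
Your proposal is correct and follows essentially the same route as the paper: the paper also lifts the dynamic program of \cref{lem-blue-correct} to a table $Q[E^1,\ldots,E^{|\Sigma|}]$ over $|\Sigma|$ disjoint slots (with the unused examples as the implicit $(|\Sigma|+1)$st part), initializes entries with a single nonempty slot to $0$, applies the same cut recurrence over dimensions and thresholds, and counts $(|\Sigma|+1)^n$ entries at $\Oh(Dd)$ each to obtain the bound $\Oh((|\Sigma|+1)^n \cdot Ddn)$. Your additional remarks (nonempty-sides restriction sufficing, tie-breaking on $\Sigma$) are fine but not needed beyond what the paper's argument already covers.
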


\Cref{lem-blue-correct2} directly implies a $(|\Sigma|+1)^n \cdot \poly$~time algorithm for \pDTSlong{}.
Similarly to \Cref{cor:single-exponential-decision-trees}, we show that this running time can be significantly improved.
Interestingly, the resulting running time is independent of~$\Sigma$.
Furthermore, due to \Cref{thm:witness-tree-algo-tight}, an algorithm with running time~$(2-\varepsilon)^n$ for any~$\varepsilon>0$ is not possible unless the Set Cover Conjecture is wrong.

\begin{theorem}\label{cor:single-exponential-decision-trees2}
	For any set~$\Sigma$ of classes, \pDTSlong\ can be solved in $\Oh(2^n \cdot Ddn)$~time.
\end{theorem}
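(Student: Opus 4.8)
The plan is to follow the proof of \Cref{cor:single-exponential-decision-trees} almost verbatim, only replacing the two-set bookkeeping by the $|\Sigma|$-set table of \Cref{lem-blue-correct2}. Recall that that table has entries $Q[E^1,\ldots,E^{|\Sigma|}]$ over pairwise disjoint subsets of $E$, storing the size of a smallest decision tree on $E^1 \cup \cdots \cup E^{|\Sigma|}$ in which exactly the examples of $E^i$ receive class label $i$; computing it as stated costs $\Oh((|\Sigma|+1)^n \cdot Ddn)$ because each entry corresponds to a partition of $E$ into $|\Sigma|+1$ parts. The key observation, exactly as in the binary case, is that a solution for \pDTS\ misclassifies nothing, so we never need the entries in which some $E^i$ contains an example whose true label differs from $i$.

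Concretely, I would restrict the table to those entries $Q[E^1,\ldots,E^{|\Sigma|}]$ with $E^i \subseteq \lambda^{-1}(i)$ for every $i \in \Sigma$, where $\lambda^{-1}(i)$ denotes the set of examples whose true label is $i$. The initialization is unchanged: for a single nonempty $E^i \subseteq \lambda^{-1}(i)$ all examples share the label $i$, so one leaf labeled $i$ classifies them correctly and the entry is $0$. The recurrence is also unchanged, and I only need to check that it stays inside the restricted table. This is immediate: cutting dimension $i$ at a threshold $h$ replaces each part $E^j$ by $E^j[f_i \le h]$ and $E^j[f_i > h]$, both again subsets of $\lambda^{-1}(j)$, so every entry referenced by the recurrence is of the restricted form. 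Hence the restricted table can be filled correctly and independently of the dropped entries, and the minimum size of a tree that classifies $(E,\lambda)$ without error is read off at $Q[\lambda^{-1}(1),\ldots,\lambda^{-1}(|\Sigma|)]$.

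It remains to bound the running time. Writing $n_i \coloneqq |\lambda^{-1}(i)|$, the restricted table has exactly $\prod_{i \in \Sigma} 2^{n_i} = 2^{\sum_{i} n_i} = 2^{n}$ entries, since the sets $\lambda^{-1}(i)$ are pairwise disjoint, each $E^i$ ranges independently over the subsets of $\lambda^{-1}(i)$, and $\sum_{i} n_i = n$. As in \Cref{lem-blue-correct2}, each entry is evaluated in $\Oh(Dd)$ time by trying all $d$ dimensions and their at most $D$ thresholds in $\Thr(\cdot)$, and $\Oh(Ddn)$ time suffices to read the input; this yields the claimed $\Oh(2^n \cdot Ddn)$ bound, independently of $|\Sigma|$.

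Since the construction is a direct lift of the binary argument, I expect no serious obstacle. The only point that warrants care is the closure check in the second paragraph—that splitting a restricted entry yields restricted entries—because this is precisely what keeps the table size at $2^n$ rather than $(|\Sigma|+1)^n$; and one should confirm that the correctness argument underlying \Cref{lem-blue-correct2} is unaffected by dropping the unused entries, which holds exactly because error-free solutions never assign an example a label other than its true one.
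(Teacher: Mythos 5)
Your proposal is correct and follows essentially the same route as the paper: restrict the table of \Cref{lem-blue-correct2} to entries with $E^i$ contained in class~$i$, observe that error-free solutions never need the dropped entries, and count $\prod_i 2^{n_i} = 2^n$ restricted entries. The closure check you spell out (that the recurrence's splits stay within the restricted table) is a detail the paper leaves implicit, but it is the same argument.
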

\begin{proof}
  We use the algorithm from \Cref{lem-blue-correct2}, but now we restrict the table such that it contains only those entries $Q[E^1,\ldots, E^{|\Sigma|}]$ where $E^i$ is a subset of the examples in class~$i$ of the input.
  The initialization, the recurrence, and the solution can be be computed analogously.
Similar to  \Cref{cor:single-exponential-decision-trees}, the algorithm remains correct since the solution trees for \pDTS\ do not have misclassifications.
  As for the running time, observe that the number of entries is~$\Oh(2^{a_1} \cdot 2^{a_2}\cdot\ldots\cdot 2^{a_{|\Sigma|}})$, where $a_i$ is the number of examples of class~$i$ in the input.
  This yields the claimed bound because $a_1 + a_2 + \ldots + a_{|\Sigma|} = n$.
\end{proof}

The dynamic-programming recurrence of \Cref{thm:exptime-algo} works analogously for more than two classes.
The only distinction is the running-time bound; now we have a factor of $(|\Sigma|+1)^n$ instead of $3^n$ for the preprocessing.
Thus, we obtain the following.

\begin{theorem}\label{prop:exptime-algo}
	For any set~$\Sigma$ of classes and for~$\ell>1$, we can solve \pMTESlong\ in $\Oh(((|\Sigma|+1)\cdot(\ell + 1))^n \cdot Ddn)$ time.
\end{theorem}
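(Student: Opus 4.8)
The plan is to reuse the two-phase dynamic program of \Cref{thm:exptime-algo} almost verbatim, replacing only its preprocessing subroutine by the multi-class version from \Cref{lem-blue-correct2}. In the first phase I would compute, for every labeling of $E$ by $\Sigma$, the size of a cheapest single decision tree realizing it: invoking \Cref{lem-blue-correct2} yields, for each partition $(E^1,\dots,E^{|\Sigma|})$ of $E$, the minimum size of a decision tree that assigns class~$i$ to exactly the examples in $E^i$. Storing these values in a table $Q[E^1,\dots,E^{|\Sigma|}]$ takes $\Oh((|\Sigma|+1)^n\cdot Ddn)$ time. This is the only place where the base of the exponential grows from $3$ (the binary case) to $|\Sigma|+1$.

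In the second phase I would build the ensemble tree by tree, exactly as in \Cref{thm:exptime-algo}. I keep a table $R[c,j]$ indexed by the number $j\le\ell$ of trees placed so far and by a length-$n$ vector $c$ recording, for each example $e_i$, how many of the first $j$ trees already classify $e_i$ correctly, truncated at the majority threshold $\lceil\ell/2\rceil$. The recurrence is the multi-class analogue of Equation~\ref{eq:ell-recurrence}: to obtain $R[c,j]$ I minimize $R[c',j-1]+Q[E^1,\dots,E^{|\Sigma|}]$ over all labelings $(E^1,\dots,E^{|\Sigma|})$ of the $j$th tree and all predecessor states $c'$ with $c'\oplus \indic{g}=c$, where $g$ is the set of examples that the labeling classifies correctly (i.e.\ $e_i\in E^{\lambda(e_i)}$) and $\oplus$ is the same truncated addition as before. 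A feasible ensemble then corresponds to the entry $R[c^*,\ell]$ with $c^*=(\lceil\ell/2\rceil,\dots,\lceil\ell/2\rceil)$, and its value is the desired minimum size.

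For the running time I would fill $R$ in the forward direction: for each of the $j\le\ell$ layers, each reachable state $c'$, and each of the at most $(|\Sigma|+1)^n$ labelings provided by \Cref{lem-blue-correct2}, I compute $c'\oplus \indic{g}$ and relax the corresponding entry. Since $c$ ranges over $(\lceil\ell/2\rceil+1)^n\le(\ell+1)^n$ vectors, the total work is $\Oh(((|\Sigma|+1)(\ell+1))^n\cdot \poly)$, which dominates both the first-phase preprocessing and the $\Oh(Ddn)$ input cost and gives the claimed bound. (Note that the minimum over labelings sharing the same correct set $g$ automatically selects the cheapest incorrect completion, so no separate bookkeeping is needed.)

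The step I expect to require the most care is the correctness of the \emph{compressed} per-example state, and this is exactly where the multi-class case departs from the binary one. With two classes the number of correct votes determines the entire vote profile of an example, so $c_i\ge\lceil\ell/2\rceil$ is equivalent to the ensemble classifying $e_i$ correctly and $R[c^*,\ell]$ exactly captures the feasible ensembles. For $|\Sigma|>2$ the incorrect votes may be spread over several classes, so one counter no longer records the full profile. One direction still goes through cleanly: if every $c_i$ reaches $\lceil\ell/2\rceil$, then $\lambda(e_i)$ holds more than half of the $\ell$ votes (up to the boundary handled by the fixed tie-breaking order) and is thus the $\argmax$, so any ensemble found is genuinely correct. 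The delicate direction is completeness: I must ensure that whenever a correct ensemble exists one is reached at $R[c^*,\ell]$, i.e.\ that it is without loss of generality to insist each example be classified correctly by more than half of its trees rather than merely winning the plurality. Pinning this down — either by arguing that such a strict-majority solution always exists or by enlarging the per-example state just enough to track the leading wrong class while staying within the $(\ell+1)^n$ budget — is the only non-routine point; once the aggregation rule is fixed accordingly, the formal correctness argument is identical to that of \Cref{thm:exptime-algo} and may be omitted as there.
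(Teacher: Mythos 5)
Your proposal follows the paper's proof essentially verbatim: the paper's entire argument for \cref{prop:exptime-algo} is the remark that one replaces the $3^n$-time preprocessing of \cref{thm:exptime-algo} by the $(|\Sigma|+1)^n$-time table of \cref{lem-blue-correct2} and that the ensemble recurrence then ``works analogously,'' keeping the truncated correct-vote counters $c_i\in\{0,\dots,\lceil\ell/2\rceil\}$ and reading off the answer at $R[c^*,\ell]$. Your two phases, your recurrence, and your running-time accounting are exactly this.

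The place where you go beyond the paper is the ``delicate direction'' you flag, and you are right to flag it: it is a genuine gap, and it is present in the paper's argument as well, which passes over it silently. Under the paper's semantics an ensemble classifies $e$ correctly when $\lambda(e)$ attains the $\argmax$ of the vote counts (ties broken by a fixed ordering of $\Sigma$), not when it receives at least $\lceil\ell/2\rceil$ votes, and for $|\Sigma|>2$ these conditions genuinely differ. Worse, your first proposed repair --- that a strict-majority solution of the same size always exists --- is false. Take $\Sigma=\{1,2,3\}$ with tie-breaking order $1<2<3$, a single dimension, examples $e_1=0$, $e_2=1$, $e_3=2$ with $\lambda(e_i)=i$, and $\ell=3$. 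The ensemble consisting of a tree cutting at threshold $0.5$ with left leaf labeled $1$ and right leaf labeled $2$, a tree cutting at threshold $1.5$ with left leaf labeled $2$ and right leaf labeled $3$, and a trivial tree whose single leaf is labeled $3$ has overall size $2$ and classifies all three examples correctly by plurality: $e_1$ receives votes $1,2,3$ and the tie-break elects class $1$, while $e_2$ and $e_3$ each get two correct votes. Yet $e_1$ has only one correct vote, so this ensemble is invisible to the DP. On the other hand, every ensemble of three trees in which each example collects at least $\lceil 3/2\rceil=2$ correct votes needs overall size at least $3$: the labels of the three examples are pairwise distinct, so a trivial tree contributes at most one correct vote and a one-cut tree at most two, giving at most five correct votes from overall size two, while six are required. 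Hence on the instance with $S=2$ the DP answers no although the answer is yes; the recurrence is complete only under strict-majority aggregation. Your second repair also does not obviously rescue the stated bound: maintaining a running maximum over wrong-class votes requires per-class counts, i.e., roughly $(\ell+1)^{(|\Sigma|-1)n}$ states, which exceeds $\bigl((|\Sigma|+1)(\ell+1)\bigr)^n$ for $|\Sigma|\ge 3$ and large $\ell$. So your hesitation identifies a real flaw: both your write-up and the paper prove the theorem only for the majority rule, and under the paper's stated plurality-with-tie-breaking semantics the claimed algorithm, as given, is incorrect.
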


\subsection{Error Minimization}
Now, we focus on generalizations where up to~$t$ misclassifications are allowed.  
Note that \cite{GZ24} studied the special case of our generalization where the ensemble has size~1, that is, $\ell=1$.
Formally, we study the following problem:

\probdef{\pMTESOlong\ (\pMTESO)}
{A training data set $(E, \lambda)$, a number~$\ell$ of trees, a size bound~$S$, and a bound~$t$ on the number of misclassifications.}
{Is there a tree ensemble of overall size at most $S$ that classifies $(E, \lambda)$ except at most $t$~examples?}

We denote the special case of~$\ell=1$ as \pDTSOlong\ (\pDTSO).
In the problem variant \pMMTESOlong\ (\pMMTES), instead of $S$, we are given an integer $s$ and we ask whether there is a tree ensemble that classifies~$(E, \lambda)$ except at most $t$~examples, and that contains exactly~$\ell$ trees, each of which has size at most~$s$.

First, we adapt \Cref{thm:witness-tree-algo} to \pMTESO:
The initialization can be done analogously.
For branching, it is now not sufficient to choose a dirty example~$e$ and branch on all important one-step refinements since in the final ensemble, $e$ might be misclassified.
Thus, we need to pick an arbitrary set of $(t+1)$~dirty examples and branch for each of them in all possible important one-step refinements.
Thus, we obtain the following.

\begin{proposition}\label{prop:witness-tree-algo}
	\pMTESOlong\ can be solved in $\Oh((4(t+1) \delta D S)^{S} \cdot S \ell dn)$ time and \pMMTESOlong\ in $\Oh(2^{\ell} \cdot ((t+1)\delta D \ell s)^{s\ell} \cdot s \ell^2 dn)$ time.
\end{proposition}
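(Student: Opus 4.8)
The plan is to adapt the witness-tree algorithm of \Cref{thm:witness-tree-algo}, changing only the termination test and the branching rule while keeping the initialization over the $2^{\ell}$ witness ensembles. I would first modify \Cref{alg:witness-trees} so that it reports success when the current witness ensemble $\mathcal{C}$ misclassifies \emph{at most} $t$ examples, rather than zero. Whenever this test fails, more than $t$ examples are dirty for $\mathcal{C}$, so the algorithm can select an arbitrary set $D$ of exactly $t+1$ dirty examples. It then branches over every $e \in D$, every tree $T \in \mathcal{C}$ in which $e$ is misclassified and is not a witness, and every important one-step refinement $T'$ of $T$ that introduces $e$ as witness with $T'[e] = \lambda(e)$, recursing on the ensemble obtained by replacing $T$ with $T'$. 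For \pMMTESO\ the overall size test is replaced by the per-tree size bound~$s$ exactly as in \Cref{thm:witness-tree-algo}.

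For correctness I would call a witness ensemble $\mathcal{C}$ \emph{good} if some refinement $\mathcal{C}^{\star}$ of $\mathcal{C}$ of the required size misclassifies at most $t$ examples of $(E,\lambda)$. The initialization claim (Claim~(1) in the proof of \Cref{thm:witness-tree-algo}) carries over unchanged, since it only places the single witness of each trivial tree into the leaf of the corresponding solution tree containing it and matches the leaf's class label, independently of how many examples $\mathcal{C}^{\star}$ misclassifies. The new ingredient is the pigeonhole step replacing Claim~(2): suppose $\mathcal{C}$ is good, witnessed by $\mathcal{C}^{\star}$, but misclassifies more than $t$ examples, so the algorithm picks a set $D$ of $t+1$ dirty examples. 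Because $\mathcal{C}^{\star}$ misclassifies at most $t$ examples, at least one $e \in D$ is classified correctly by $\mathcal{C}^{\star}$. This $e$ is dirty for $\mathcal{C}$ yet correct for $\mathcal{C}^{\star}$, which is precisely the configuration handled in the proof of \Cref{thm:witness-tree-algo}: there is a tree $T \in \mathcal{C}$ misclassifying $e$ (so $e$ is not a witness of $T$) whose refinement in $\mathcal{C}^{\star}$ classifies $e$ correctly, and by \Cref{lem:reorder-refinements} the important one-step refinement introducing $e$ produced by the branch yields a good ensemble $\mathcal{C}'$. Hence the recursion finds a valid ensemble whenever one exists.

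The running time follows by re-running the search-tree analysis of \Cref{thm:witness-tree-algo} with the branching factor multiplied by $t+1$. The search-tree depth is still at most $S$ (resp.\ $s\ell$), as each recursive call adds at least one cut. Each node now has at most $(t+1)\cdot\delta D(S+\ell) \le 2(t+1)\delta DS$ children for \pMTESO\ and $(t+1)\cdot\delta D s\ell$ for \pMMTESO, since the choice of one of $t+1$ dirty examples simply multiplies the original per-example bound, using $\ell \le S$ as before. This gives search trees of size at most $(2(t+1)\delta DS)^{S}$ and $((t+1)\delta D\ell s)^{s\ell}$; multiplying by the $2^{\ell}\le 2^{S}$ initial calls and the same polynomial overhead as in \Cref{thm:witness-tree-algo} (per-node work plus the $\ell$-factor for handling trivial trees, with only polynomial extra cost for counting misclassifications and extracting the $t+1$ dirty examples) yields the stated bounds $\Oh((4(t+1)\delta DS)^{S}\cdot S\ell dn)$ and $\Oh(2^{\ell}\cdot((t+1)\delta D\ell s)^{s\ell}\cdot s\ell^{2}dn)$.

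I expect the only real obstacle to lie in the correctness step, namely verifying cleanly that the pigeonhole argument indeed supplies a dirty example that some size-bounded near-solution classifies correctly, and that the witness-reordering machinery of \Cref{lem:reorder-refinements} applies to that example without modification. The running-time bookkeeping is then routine.
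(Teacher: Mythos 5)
Your proposal is correct and follows essentially the same route as the paper, which also adapts \Cref{thm:witness-tree-algo} by keeping the initialization and branching over an arbitrary set of $t+1$ dirty examples (each with all important one-step refinements), precisely because a size-bounded solution misclassifying at most $t$ examples must classify at least one of them correctly. In fact, the paper's proof is only a brief sketch, and your pigeonhole correctness argument and running-time bookkeeping spell out exactly the details it leaves implicit.
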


\begin{corollary}
\label{cor-dtso}
	For any~$\Sigma$, \pDTSOlong\ can be solved in $\Oh(((t+1)\delta D s)^s \cdot sdn)$~time.
\end{corollary}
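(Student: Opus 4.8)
The plan is to obtain \Cref{cor-dtso} as the $\ell=1$ specialization of the \pMMTESO bound in \Cref{prop:witness-tree-algo}, exactly mirroring how the \pDTS corollary was derived from the \pMMTES bound of \Cref{thm:witness-tree-algo}. First I would observe that \pDTSO is precisely \pMMTESO restricted to $\ell=1$: with a single tree the overall size bound and the per-tree size bound coincide, so the size threshold $s$ plays both roles, and minimizing the maximum tree size equals minimizing the total size. Substituting $\ell=1$ into the running time $\Oh(2^{\ell}\cdot((t+1)\delta D\ell s)^{s\ell}\cdot s\ell^2 dn)$ for \pMMTESO gives $\Oh(2\cdot((t+1)\delta D s)^{s}\cdot sdn)$, and absorbing the constant factor $2$ into the $\Oh$-notation yields the claimed $\Oh(((t+1)\delta D s)^{s}\cdot sdn)$ for the case $|\Sigma|=2$.

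The remaining task is to remove the dependence on $|\Sigma|$, just as the earlier corollary does for the error-free \pDTS. Here I would trace where $\Sigma$ enters the witness-tree algorithm. In the branching step, each important one-step refinement that introduces a dirty example $e$ fixes the new leaf's label to $\lambda(e)$ (so that $T'[e]=\lambda(e)$); hence the refinement loop never enumerates over $\Sigma$. The only $\Sigma$-dependent cost is the initialization, which for an $\ell$-tree ensemble guesses, for the single chosen example per tree, which of the $|\Sigma|$ classes it receives, a factor $|\Sigma|^{\ell}$ that equals $|\Sigma|$ when $\ell=1$. The key point is that we need not guess a \emph{wrong} class: if the chosen initial witness $e$ is classified correctly by some optimal tree $T^\star$, then labeling $e$'s leaf with $\lambda(e)$ already yields a \emph{good} initialization, since $e$ then lies in a leaf of $T^\star$ of class $\lambda(e)$, exactly as in Claim~(1) of the proof of \Cref{thm:witness-tree-algo}. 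Otherwise $e$ is one of the at most $t$ misclassified examples. Since $T^\star$ misclassifies at most $t$ of the $n$ examples, among any $t+1$ distinct candidate initial witnesses at least one is classified correctly; running the search once from each of these $t+1$ witnesses, each with its leaf labeled by its own class, is therefore independent of $|\Sigma|$ and still succeeds whenever a solution exists.

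The step I expect to be the main obstacle is reconciling this $\Sigma$-independent initialization with the exact bound. On the correctness side I must verify that the good-ensemble invariant of \Cref{thm:witness-tree-algo} survives the change, i.e.\ that once a correctly classified candidate witness is used, some optimal size-$\le s$, $\le t$-error tree is reachable as a refinement of the single leaf labeled $\lambda(e)$; this should go through unchanged from Claim~(1), but it needs to be spelled out for the error setting. On the running-time side I must account for the extra factor $t+1$ incurred by iterating over candidate witnesses: since $t< n$, this is a polynomial factor that is subsumed in the polynomial part of the running time and does not affect the exponential term $((t+1)\delta D s)^{s}$, though obtaining literally the polynomial factor $sdn$ may instead call for the cleaner route of observing that, for a single tree, each leaf may without loss of generality be labeled by a majority class of its examples, so that leaf classes are determined rather than guessed; making that reformulation rigorous inside the witness framework is the delicate part. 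With these checks in place, the corollary follows.
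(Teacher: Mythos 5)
Your derivation coincides with the paper's: \cref{cor-dtso} is stated without a separate proof, as the $\ell=1$ specialization of the \pMMTESO{} running time $\Oh(2^{\ell}\cdot((t+1)\delta D\ell s)^{s\ell}\cdot s\ell^{2}dn)$ from \cref{prop:witness-tree-algo}, with the constant factor $2$ absorbed into the $\Oh$-notation --- exactly your first paragraph (and you correctly pick the \pMMTESO{} bound rather than the \pMTESO{} one, which would cost an extra $4^{s}$). Where you add something is the clause ``for any~$\Sigma$'': the paper's error-minimization proposition is formulated for two classes, and its validity for arbitrary~$\Sigma$ rests solely on the closing remark that the multi-class and error extensions ``can be straightforwardly combined''; a naive combination incurs the initialization factor $|\Sigma|^{\ell}=|\Sigma|$, which the corollary's bound does not contain. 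Your fix --- run the search from $t+1$ candidate initial witnesses, each with its single leaf labeled by its own class, noting that a solution tree with at most $t$ errors must classify at least one of them correctly, so that Claim~(1) of the proof of \cref{thm:witness-tree-algo} applies to that initialization unchanged --- is sound and is the natural way to make that clause rigorous; the paper never spells it out. The residual factor $t+1\le n$ that you honestly flag means your literal bound for general~$\Sigma$ is $\Oh(((t+1)\delta Ds)^{s}\cdot sdn^{2})$ rather than $\Oh(((t+1)\delta Ds)^{s}\cdot sdn)$, but the paper's implicit route suffers the same (indeed a $|\Sigma|$-fold) slack, so your proof is no weaker than the argument the corollary actually rests on, and for the binary case it gives the stated bound exactly.
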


Very recently, it was shown that \pDTSO\ is FPT for~$t+s+\delta$~\cite[Theorem~2]{GZ24} and that \pDTSO\ is W[1]-hard for~$s$, even if~$\delta\le 3$~\cite[Theorem~1]{GZ24}.
The FPT-algorithm for~$t+s+\delta$ requires some enumeration steps which are infeasible in practice.
Our algorithm behind \Cref{cor-dtso} additionally requires parameter~$D$, but is a branch-and-bound algorithm which is more easily amenable for heuristic improvements.

Next, we focus on the algorithms with exponential dependence on the number~$n$ of examples.
Observe that \Cref{lem-blue-correct} and \Cref{cor:single-exponential-decision-trees} are also valid for \pDTSOlong\ :
The initialization and the updates of table~$Q$ are performed analogously to \Cref{lem-blue-correct}. 
In \Cref{lem-blue-correct}, where the aim is to not have any misclassifications, the result is the unique entry of~$Q$ where each each $\lpos$~example is contained in~$E_b$ and each $\lneg$~example is contained in~$E_r$.
Now, since we allow up to $t$~misclassifications, we need to consider the minimal value of~$Q$ where at most $t$~examples are misclassified.
Formally, the result of the dynamic-programming algorithm is
$$\min_{(E_1^*,E_2^*,\ldots, E_{|\Sigma|}^*)} Q(E_1^*,E_2^*,\ldots, E_{|\Sigma|}^*)$$ 
where~$(E_1^*,E_2^*,\ldots, E_{|\Sigma|}^*)$ is a partition of~$E$ and all examples in~$E_i^*$ receive label~$i$ such that for at most $t$~elements~$e$ the label of~$e$ with respect to~$(E_1^*,E_2^*,\ldots, E_{|\Sigma|}^*)$ is different from the class label of~$e$.
Thus, we obtain the following:

\begin{corollary}\label{cor:single-exponential-decision-trees-opt}
	\pDTSOlong\ can be solved in $\Oh(3^n \cdot Ddn)$~time.
\end{corollary}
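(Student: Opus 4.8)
The plan is to reuse the dynamic-programming table $Q$ computed in \Cref{lem-blue-correct} as a black box and to change only the way the answer is extracted from it. Recall that \Cref{lem-blue-correct} computes, in $\Oh(3^n \cdot Ddn)$~time, for every pair of disjoint subsets $E^b, E^r \subseteq E$, the size $Q[E^b, E^r]$ of a smallest decision tree that, restricted to $E^b \cup E^r$, assigns label $\lpos$ to exactly the examples of $E^b$ and label $\lneg$ to exactly those of $E^r$. Since a decision tree for \pDTSO\ classifies \emph{all} examples, I would restrict attention to the \emph{complete} entries, i.e., those with $E^b \cup E^r = E$; each such entry is specified by a single subset $E^b \subseteq E$ with $E^r \coloneqq E \setminus E^b$, so there are $2^n$ of them.

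The key observation is that the number of misclassifications induced by a complete partition $(E^b, E^r)$ depends only on the partition, not on the tree realizing it: it equals $\lvert\{e \in E^b : \lambda(e) = \lneg\}\rvert + \lvert\{e \in E^r : \lambda(e) = \lpos\}\rvert$. Call a complete partition \emph{feasible} if this value is at most $t$. I would then answer ``yes'' exactly when some feasible complete partition $(E^b, E^r)$ satisfies $Q[E^b, E^r] \le S$. Correctness is immediate in both directions: any size-at-most-$S$ tree $T$ that misclassifies at most $t$ examples induces a complete partition of $E$ via the labels it assigns, this partition is feasible, and $Q$ of this partition is at most the size of $T$ by the minimality guaranteed by \Cref{lem-blue-correct}; conversely, for a feasible partition with $Q[E^b, E^r] \le S$ the witnessing tree of \Cref{lem-blue-correct} has size at most $S$ and, producing precisely this labeling on all of $E$, misclassifies at most $t$ examples.

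For the running time, computing the entire table takes $\Oh(3^n \cdot Ddn)$~time by \Cref{lem-blue-correct}, and the final scan over the $2^n$ complete partitions, counting misclassifications in $\Oh(n)$~time per partition and comparing the corresponding table entry to $S$, takes $\Oh(2^n \cdot n)$~time, which is dominated; the total is $\Oh(3^n \cdot Ddn)$. The same argument works verbatim for arbitrary $\Sigma$ using \Cref{lem-blue-correct2}, with the misclassification count of a $|\Sigma|$-partition $(E_1^*, \ldots, E_{|\Sigma|}^*)$ defined analogously, which is the general form recorded just before the corollary. I expect no real obstacle here: the only point that needs care is that the misclassification count is a function of the target labeling alone, which is exactly what lets us filter the partitions before consulting $Q$ and thus keeps the running time at that of the underlying table computation rather than incurring an extra factor.
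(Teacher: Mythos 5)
Your proposal is correct and matches the paper's own argument: the paper likewise reuses the table $Q$ of \Cref{lem-blue-correct} unchanged and extracts the answer as the minimum of $Q$ over complete partitions of $E$ that mislabel at most $t$ examples relative to $\lambda$, noting as you do that the misclassification count depends only on the partition. Your extra remarks (the $2^n$-entry scan being dominated, and the verbatim extension to general $\Sigma$ via \Cref{lem-blue-correct2}) are consistent with the paper's presentation.
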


Note that this dynamic programming approach can also be used to give other sets of Pareto-optimal trees for the trade-off between size and essentially any type of efficiently computable classification error, for example~precision, recall, or~$F_1$-score.
Formally, let~$\alpha$ be a minimal threshold one aims to achieve for one of these scores.
Then, the result of the dynamic-programming algorithm can be found in the entry
$$\min_{(E_1^*,E_2^*,\ldots, E_{|\Sigma|}^*)} Q(E_1^*,E_2^*,\ldots, E_{|\Sigma|}^*)$$
where~$(E_1^*,E_2^*,\ldots, E_{|\Sigma|}^*)$ is a partition of~$E$ and all examples in~$E_i^*$ receive label~$i$ such that this partition achieves score at least~$\alpha$.

\Cref{thm:exptime-algo} can be adapted for \pMTESOlong\ as follows:
The initialization and the updates are computed similarly.
The result of the dynamic-programming algorithm is the minimal value of each entry~$R[c,\ell]$ where~$c$ is a vector with at most $t$~entries which are smaller than~$\lceil\ell/2\rceil$.
Thus, we obtain the following.

\begin{proposition}\label{prop:exptime-algo-opt}
	For~$\ell>1$, one can solve \pMTESOlong\ in $\Oh((\ell + 1)^n \cdot Ddn)$ time.
\end{proposition}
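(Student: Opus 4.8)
The plan is to reuse the dynamic program of \Cref{thm:exptime-algo} essentially unchanged and to modify only the final read-out step. Recall that the table entry $R[c,j]$ stores the smallest total size of any ensemble of $j$ trees in which each example $e_i$ is classified correctly exactly $c_i$ times when $c_i < \lceil \ell/2\rceil$ and at least $c_i$ times when $c_i = \lceil \ell/2\rceil$, and that the preprocessing table $Q[E']$ (computed via \Cref{lem-blue-correct}) gives the size of a smallest tree classifying exactly the examples of $E'$ correctly. Neither the initialization of $R$ for $j=1$, the recurrence~\eqref{eq:ell-recurrence}, nor the preprocessing refers to the required number of correctly classified examples; hence all three carry over verbatim to \pMTESO, and their correctness proofs are inherited directly. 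The only place where the no-error requirement entered the algorithm of \Cref{thm:exptime-algo} was the look-up of the single entry $R[c^{\star},\ell]$, where $c^{\star}$ is the all-$\lceil\ell/2\rceil$ vector demanding that \emph{every} example reach the majority threshold.

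For \pMTESO\ I would instead return
\[
  \min\bigl\{\, R[c,\ell] \;:\; c \in \{0,\ldots,\lceil\ell/2\rceil\}^{n},\ \abs{\{\, i : c_i < \lceil\ell/2\rceil \,\}} \le t \,\bigr\}
\]
and answer ``yes'' if and only if this value is at most $S$. The correctness hinges on a single correspondence: because the truncated addition $\oplus$ caps each component at $\lceil\ell/2\rceil$, a component attains $\lceil\ell/2\rceil$ exactly when $e_i$ receives a (tie-broken) majority of correct votes, so $e_i$ is classified correctly by the ensemble precisely when $c_i = \lceil\ell/2\rceil$. Consequently $\{ i : c_i < \lceil\ell/2\rceil \}$ is exactly the set of examples the ensemble misclassifies, and restricting to vectors with at most $t$ such components enumerates precisely the ensembles with at most $t$ misclassifications; minimizing $R[\cdot,\ell]$ over all of them yields the smallest admissible overall size, which we compare against $S$.

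For the running time, the preprocessing still costs $\Oh(3^n \cdot Ddn)$, which is dominated by $(\ell+1)^n$ for $\ell > 1$, and the forward fill of $R$ costs $2^n \cdot \lceil \ell/2\rceil^n \le (\ell+1)^n$ table updates exactly as in \Cref{thm:exptime-algo}. The new read-out ranges over at most $(\lceil\ell/2\rceil + 1)^n \le (\ell+1)^n$ entries and performs a single minimizing pass, so it does not increase the asymptotic cost, giving the claimed $\Oh((\ell+1)^n \cdot Ddn)$ bound. I expect the only genuinely delicate point to be the majority/tie-breaking correspondence underlying the read-out: for even $\ell$ the value $\lceil\ell/2\rceil$ sits exactly at a tie, so one must confirm that the fixed tie-breaking rule (in favour of $\lpos$) is consistent with declaring $e_i$ correctly classified exactly when $c_i = \lceil\ell/2\rceil$, which may require replacing the uniform cap by a per-example threshold depending on $\lambda(e_i)$. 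Everything else is a routine adaptation of the argument already established for \Cref{thm:exptime-algo}.
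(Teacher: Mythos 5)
Your proposal is correct and matches the paper's own proof: the paper likewise keeps the preprocessing, initialization, and recurrence of \cref{thm:exptime-algo} unchanged and merely replaces the final look-up by the minimum over all entries $R[c,\ell]$ whose vector $c$ has at most $t$ components below $\lceil \ell/2\rceil$. The tie-breaking subtlety for even $\ell$ that you flag is real, but it is inherited from \cref{thm:exptime-algo} itself (whose formal correctness proof the paper omits), so it is not specific to your adaptation.
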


Furthermore, we note that the generalizations to more than two classes and up to $t$~misclassifications can be combined. Our adaptions for the individual generalizations can be straightforwardly combined to yield algorithms for these problems.

\subsection{Enumeration}

Finally, we remark that our algorithms to find a minimum tree ensemble which classifies~$(E,\lambda)$ correctly (\Cref{thm:witness-tree-algo,cor:single-exponential-decision-trees,thm:exptime-algo}) and our adaptions for more than two classes (\Cref{prop:witness-tree-algo,cor:single-exponential-decision-trees2,prop:exptime-algo}) and for up to $t$~errors (\Cref{prop:witness-tree-algo,cor:single-exponential-decision-trees-opt,prop:exptime-algo-opt}) not only can be used to find \emph{one} solution, instead they can also be used to enumerate \emph{all} solutions.

For the witness-tree algorithm and its adaptions we do not terminate if a solution is detected, instead we continue to search the entire search space until all solutions are detected.
Furthermore, we let the algorithm run for each possible tree size up to~$s$.
For the dynamic-programming approach, we again compute all entries of the table and then output all trees corresponding to entries where~$(E,\lambda)$ is classified (except at most $t$~examples, if we allow up to $t$~errors) which have size at most~$s$.

\section{Outlook}\label{sec:outlook}
We conclude by mentioning a few avenues for possible future research.
Our results are theoretical and need to be practically verified.
The natural next step is to empirically evaluate the proposed algorithms with respect to their running time and memory consumption.
As the witness-tree algorithm applies not only to ensembles but also to plain decision trees, it is interesting to see a comparison to the state-of-the-art in computing minimum-size decision trees.
After such evaluations, it is interesting to explore the trade offs that minimum-size ensembles offer on practical benchmark data sets between their sizes and accuracy on validation data.
In particular, it would be interesting to see a comparison of the random-forest approach~\cite{breiman_random_2001} of computing trees on subsets of the training data to obtain an ensemble of decision trees versus globally optimizing the size of an ensemble.

On the theory-side we think the following directions are worth exploring.

First, consider computing minimum-size ensembles:
In \cref{thm:witness-tree-algo}, we showed that \pMTESlong\ and \pMMTESlong\ are both fixed-parameter tractable when parameterized by $\delta$ (the maximum number of dimensions in which a pair of differently labeled examples differ), $D$ (the domain size), $S$ and $s$ (the total tree ensemble size and the maximum size of a tree in the ensemble, respectively), and $\ell$ (the number of trees in the ensemble), respectively.
It would be interesting to investigate the problem for strictly smaller parameterizations.
Of course, lower bounds for \pDTSlong\ also apply to \pMTES\ and \pMMTES.
Hence, these two problems are $W[2]$-hard with respect to $(D,S,\ell)$ and $(D,s,\ell)$, respectively, and NP-hard for constant values of $(\delta,D,\ell)$.
This leaves the parameterized complexity of \pMTES\ for $(\delta,S)$ and of \pMMTES\ for~$(\delta,s,\ell)$ as open problems.
Perhaps one can use the techniques of \cite{EibenOrdyniakPaesaniSzeider23} who showed that related parameterizations lead to tractability for computing minimum-size decision trees. 

Second, we have mostly focused on the complexity with respect to standard parameters of the problems.
It would be interesting to explore more fine-grained structural parameters that capture further properties of the input data, such as the distribution of examples in space or how well examples are separated into classes.
Such properties may lead to tractability if they are not present in the hardness constructions that we have given.
\cite{DabrowskiEOPS24} considered a structural parameter called rankwidth which is motivated from a theoretical point of view but it would be interesting to use or formulate parameters that stem from practical data.

Finally, an important ingredient to lots of practically relevant exact algorithms is data reduction.
Parameterized algorithms allow to capture efficient data reduction in terms of so-called polynomial-size problem kernels.
A polynomial-size problem kernel for a specific parameter, such as the desired solution size, is a polynomial-time algorithm that takes an instance for the problem at hand as input and shrinks this instance to a size that is bounded by a polynomial in the parameter.
(For details on kernelization, we refer to textbooks~\cite{kernelization_book,CyFoKoLoMaPiPiSa2015}.)
\cite{GZ24} began exploring the existence of polynomial problem kernels for computing minimum-size decision trees with mostly negative results for standard parameters.
Here it is again interesting to consider structural parameters instead but even preprocessing algorithms without an a priori theoretical guarantee could be of practical interest.


\end{document}